\newtheorem{theorem}{Theorem}
\newtheorem{lemma}[theorem]{Lemma}
\def\eqref#1{equation~\ref{#1}}
\def\1{\bm{1}}
\def\vphi{{\bm{\phi}}}
\def\vepsilon{{\bm{\epsilon}}}
\def\vtheta{{\bm{\theta}}}
\def\vc{{\bm{c}}}
\def\vf{{\bm{f}}}
\def\vw{{\bm{w}}}
\def\vx{{\bm{x}}}
\def\vz{{\bm{z}}}
\DeclareMathAlphabet{\mathsfit}{\encodingdefault}{\sfdefault}{m}{sl}
\SetMathAlphabet{\mathsfit}{bold}{\encodingdefault}{\sfdefault}{bx}{n}
\newcommand{\pdata}{p_{\rm{data}}}
\newcommand{\E}{\mathbb{E}}
\newcommand{\R}{\mathbb{R}}
\title{SCott: Accelerating Diffusion Models with Stochastic Consistency Distillation }
\author{
    Hongjian Liu\textsuperscript{\rm 1}\thanks{These authors contributed equally. Work done during Hongjian Liu's internship at OPPO.}, Qingsong Xie\textsuperscript{\rm 2}\footnotemark[1]\thanks{Corresponding author.}, Tianxiang Ye\textsuperscript{\rm 3}, Zhijie Deng\textsuperscript{\rm 3}\footnotemark[2], Chen Chen\textsuperscript{\rm 2}, Shixiang Tang\textsuperscript{\rm 4}, Xueyang Fu\textsuperscript{\rm 1}, Haonan Lu\textsuperscript{\rm 2}, Zheng-Jun Zha\textsuperscript{\rm 1}
}
\begin{document}

\maketitle

\begin{abstract}
The iterative sampling procedure employed by diffusion models (DMs) often leads to significant inference latency. 
To address this, we propose Stochastic Consistency Distillation (SCott) to enable accelerated text-to-image generation, where high-quality and diverse generations can be achieved within just 2-4 sampling steps.
In contrast to vanilla consistency distillation (CD) which distills the ordinary differential equation solvers-based sampling process of a pre-trained teacher model into a student, SCott explores the possibility and validates the efficacy of integrating stochastic differential equation (SDE) solvers into CD to fully unleash the potential of the teacher. 
SCott is augmented with elaborate strategies to control the noise strength and sampling process of the SDE solver.
An adversarial loss is further incorporated to strengthen the consistency constraints in rare sampling steps.
Empirically, on the MSCOCO-2017 5K dataset with a Stable Diffusion-V1.5 teacher, SCott achieves an FID of 21.9 with 2 sampling steps, surpassing that of the 1-step InstaFlow (23.4) and the 4-step UFOGen (22.1).
Moreover, SCott can yield more diverse samples than other consistency models for high-resolution image generation, with up to $16\%$ improvement in a qualified metric. 

\end{abstract}


%

\section{Introduction}
Diffusion models (DMs)~\cite{ho2020denoising,sohl2015deep,song2020score} have emerged as a pivotal component in the realm of generative modeling, facilitating notable progress in domains including image generation~\cite{ramesh2022hierarchical,rombach2022high}, video synthesis~\cite{blattmann2023align,ho2022imagen}, 
and beyond. 
In particular, latent diffusion models (LDMs), such as Stable Diffusion~\cite{rombach2022high}, have exhibited exceptional capabilities for high-resolution text-to-image synthesis and are acting as fundamental building components for a wide spectrum of downstream applications~\cite{gal2022image,chen2023subject,mou2023t2i}.

However, it is widely recognized that DMs' iterative reverse sampling process leads to slow inference. 
One remediation is improving the solvers used for discretizing the reverse process~\cite{song2020ddim,lu2022dpm,lu2022dpm++}, but it is still hard for them to generate within a limited number of steps (e.g., $5$) due to the inevitable discretization errors. 
Another strategy involves distilling an ordinary differential equation (ODE) based generation process of a pre-trained DM into a shorter one~\cite{salimans2022progressive,meng2023distillation}, but the cost of such progressive distillation is routinely high.  
Alternatively, consistency distillation (CD) trains unified consistency models (CMs) to fit the consistency mappings characterized by the diffusion ODE for few-step generation~\cite{song2023consistency}.  
Latent consistency model (LCM)~\cite{luo2023latent} further applies CD to the latent space of a pre-trained autoencoder to enable high-resolution image generation.
However, its sample quality is poor within 2 sampling steps. 
Recently, InstaFlow~\cite{liu2023instaflow}, UFOGen~\cite{xu2024ufogen}, and ADD~\cite{sauer2023adversarial} have succeeded in faithfully generating high-resolution images in just 1-2 steps, but they share the limitation of failing to trade additional sampling steps for improved outcomes. 

One-step samplers are hard to generate satisfactory outputs, and numerous works~\cite{luo2023latent, salimans2022progressive, meng2023distillation} improve their one-step performance via additional steps with acceptable cost (e.g., 4 steps). Our objective is to strike a unified model capable of generating high-quality outputs with 2-4 steps. 
This is based on the fact that over-emphasizing the one-step generation capacity would unavoidably bias the DM, hence weakening the multi-step generation capacity. Sampling with 2-4 steps does not substantially increase the practical cost compared to 1 step but is likely to improve the upper bound of the sampling quality significantly. 
We base our solution on CMs because they enjoy the cost-quality trade-off by alternating denoising and noise injection at inference time. 
Yet, current CD approaches have not fully unleashed the potential of the teacher, considering that for a well-trained DM, ODE-based solvers usually underperform stochastic differential equation (SDE) ones with adequate sampling steps ~\cite{xu2023restart,karras2022elucidating,gonzalez2023seeds}.
This is verified by results in \cref{table:solver}. 
Empowered by these, we aim to develop  \textbf{S}tochastic \textbf{Co}nsis\textbf{t}ency Dis\textbf{t}illation (SCott), to combine CD with SDE solvers to accelerate the sampling of high-resolution images.

\begin{figure*}[t]
\centering
\vspace{-2ex}
\includegraphics[width=1\textwidth]{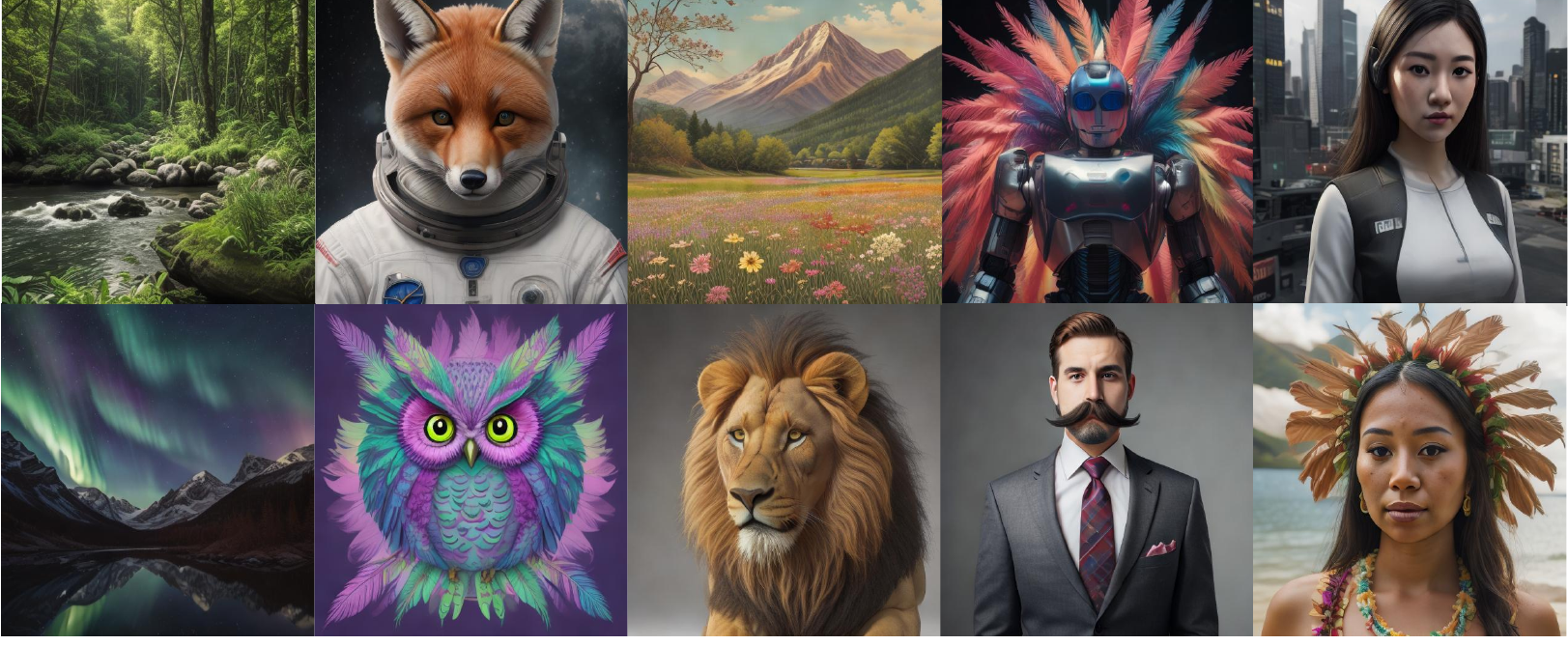}
\vspace{-4ex}
\caption{$512 \times 512$ resolution images generated by SCott using 2 sampling steps. SCott is trained based on Realistic-Vision-v51.}
\label{fig: main}
\vspace{-3ex}
\end{figure*}

By \cite{song2023consistency}, CMs are originally defined and learned based on ODE solvers. 
Naturally, a straightforward adoption of regular SDE solvers suffers from low training stability and poor convergence. 
To address this, we provide a theoretical justification for using SDE solvers for CD and empirically identify several critical factors that render SCott workable. 
On one hand, we find it necessary to keep the injected noise in SDE solvers at a moderate intensity
to stabilize training while enjoying benefits from the stochasticity inherent in SDE. 
On the other hand, we find it vital to extend the one-step sampling strategy employed in vanilla CD to a multi-step one to further diminish the discretization errors.
The multi-step SDE solver also aids in correcting the accumulated errors in the sampling path, thanks to the injection of random noise~\cite{xu2023restart}. 
With these, we obtain a stronger and more versatile teacher for CD. 

Surprisingly, we find the uncertainty in SDE can lead to diverse generations for the distilled student model (see Appendix 5). From another perspective, the SDE solver implicitly makes stochastic data augmentation for CD, which aids in mode coverage. To mitigate the imperfect consistency constraints provided by L2 loss in CD, an adversarial learning loss is incorporated to correct student output, further boosting the sample quality at 1-4 sampling steps. Extensive experiments validate the efficacy of SCott in generating high-quality images with conspicuous details. On MSCOCO-2017 5K validation dataset with a Stable Diffusion-V1.5 (SD1.5)~\cite{rombach2022high} teacher, our 2-step method achieves an FID~\cite{heusel2017gans} of 21.9, surpassing the previous state-of-the-art, e.g., 2-step LCM~\cite{luo2023latent} (30.4) and 1-step InstaFlow~\cite{liu2023instaflow} (23.4) and UFOGen~\cite{xu2024ufogen} (22.5). 
Besides, SCott can smoothly improve the sample quality with increasing sampling steps. 
At a 4-step inference, SCott consistently outperforms LCM, InstaFlow, and UFOGen with $2-4$ steps in terms of both FID and CLIP Score~\cite{hessel2021clipscore}. 
On MJHQ-5K validation dataset with Realistic-Vision-v51 (RV5.1)\footnote{\url{https://huggingface.co/stablediffusionapi/realistic-vision-v51}.} teacher, 2-step SCott surpasses LCM by a remarkable margin (24.9 v.s. 37.2 in FID, 0.301 v.s. 0.296 in CLIP Score). 
For the Coverage metric measuring sample diversity~\cite{naeem2020reliable}, 2-step SCott obtains 0.1232 and 0.1778 gains over 2-step LCM on MSCOCO-2017 5K and MJHQ-5K, respectively.

We summarize our contributions as follows:
\begin{itemize}
    \item We propose SCott, which accelerates diffusion models to generate high-quality outputs with $1-2$ steps while maintaining the capability for further improvement via more inference time within $4$ steps. 
   \item We provide theoretical convergence analysis for SCott and explore crucial factors that render SCott workable. Furthermore, we integrate adversarial learning objectives into SCott to improve the few-step sample quality.
    \item SCott achieves 1) a state-of-the-art FID  of 21.9 in 2 steps, surpassing competing baselines such as 1-step InstaFlow (23.4), 2-step LCM (30.4), and 4-step UFOGen (22.1), and 2) much higher sample diversity, reflected by the Coverage metric, than LCM (0.9114 v.s. 0.7882). 
\end{itemize}

\section{Related Works}
\subsubsection{Diffusion Models} Diffusion models~\cite{sohl2015deep,ho2020denoising,song2020score,song2021maximum,song2020improved,karras2022elucidating,dhariwal2021diffusion} progressively perturb data to Gaussian noise and are trained to denoise the noise-corrupted data. During inference, diffusion models create samples from Gaussian distribution by reversing the noising process. 
They have achieved unprecedented success  in text-to-image generation~\cite{saharia2022photorealistic,ramesh2022hierarchical}, image inpainting~\cite{lugmayr2022repaint},   and image editing~\cite{meng2021sdedit, chen2023controlstyle}. 
To effectively improve the
sample quality of conditioned diffusion models,  classifier-free guidance (CFG)~\cite{ho2022classifier} technique is proposed without extra network training.

\subsubsection{Diffusion Acceleration} One of the primary challenges that hinder the practical adoption of diffusion models is the issue of sampling speed due to multiple iterations. Several approaches have been proposed to enhance the sampling efficiency of diffusion models. One type of methods concentrate  on training-free numerical solvers~\cite{song2020ddim,lu2022dpm,lu2022dpm++}, such as 
Denoising Diffusion  Implicit Model (DDIM)~\cite{song2020ddim}  and DPM++~\cite{lu2022dpm++}.
Some researchers explore the approaches of knowledge distillation to compress sampling steps. Progressive Distillation (PD)~\cite{salimans2022progressive} and Classifire-aware Distillation (CAD)~\cite{meng2023distillation} are designed to reduce sampling steps to below 10 steps via multi-stage step distillation.  
LCM~\cite{luo2023latent} extends CM~\cite{song2023consistency}  to text-to-image generation.
However, these methods synthesize blurry samples below four steps. 
Recently, InstaFlow~\cite{liu2023instaflow}, SwiftBrush~\cite{nguyen2024swiftbrush}, and DMD~\cite{yin2024one} achieve one-step generation in high-resolution text-to-image generation faithfully.
InstaFlow proposes a one-step sampling model for text-to-image generation by combining DMs and Rectified Flow~\cite{liu2022flow}.  SwiftBrush adopts variational score distillation (VSD)~\cite{wang2024prolificdreamer} to distill a one-step student. DMD employs distribution matching distillation to enhance the realism of the one-step generator. Nevertheless, they are unable to extend their sampler to multiple steps, and the synthesized images are not satisfactory enough with a single step only.
Consistency trajectory models (CTM)~\cite{kim2023consistency} and Diff-Instruct~\cite{luo2023diff} distill a pre-trained DM into a single-step generator, but their performance on large-scale text-to-image generation is unclear. Motivated by CMs and knowledge distillation, we propose Stochastic Consistency Distillation to generate high-quality images within few steps. Our method is able to not only produce high-quality samples with a 2-step sampler but also improve model performance with increasing steps. Furthermore, due to the introduced stochasticity in SDE solvers, our method exhibits better sample diversity.

\subsubsection{Diffusion GANs} 
With GANs as a core technique,  
UFOGen~\cite{xu2024ufogen}  proposes a one-step diffusion GAN for text-to-image generation.   However, the image quality can not be significantly improved or even gets worse when increasing sampling steps for UFOGen. Adversarial Diffusion  Distillation~\cite{sauer2023adversarial} distills pre-trained SD models by  GANs and score distillation~\cite{poole2022dreamfusion}, achieving one-step generation.  However, the training source is not described in the paper, e.g., training time and training data, and hence we can not make a fair comparison with it. Different from UFOGen and ADD which adopt adversarial learning as their core component, we propose to leverage adversarial loss to strengthen consistency constraints in SCott, producing high-quality images at few-step sampling. Consequently, our method inherits the property of CM which heightens text-to-image alignment and image sharpness with increasing sampling steps,   and the image quality is further enhanced by  GAN at few-step inference.

\section{Preliminary}

Let $\vx \in \R^k$ denotes a sample from the data distribution $\pdata(\vx)$ and $p(\vz_t)=\int \pdata(\vx) \mathcal{N}(\vz_t; \alpha_t \vx, \sigma_t^2 \mathbf{I}) \mathrm{d}\vx, \forall t\in[0, T]$ the marginal distribution specified by the forward diffusion process.
$\alpha_t$ and $\sigma_t$ are positive real-valued functions defining the diffusion schedule so that $p(\vz_0)=\pdata(\vx)$ and $p(\vz_T) \approx \mathcal{N}(\vz_T; \mathbf{0}, \tilde{\sigma}^2\mathbf{I})$ for some $\tilde{\sigma}$. 
A DM $\vepsilon_{\theta}(\cdot, t): \R^k \to \R^k$ is trained under score matching principles~\cite{vincent2011connection,song2019generative,ho2020denoising} for reversing the diffusion process.\footnote{The paper focuses on the $\epsilon$-prediction type of DMs. Other parameterizations are equivalent in theory~\cite{salimans2022progressive}.} 

According to the SDE/ODE explanation of the reverse process of DMs~\cite{song2020score}, we can obtain an approximate sample of $\pdata(\vx)$ by drawing a Gaussian noise $\vz_T \sim p(\vz_T)$ and then invoking a numerical SDE/ODE solver to discretize the reverse process.
Let $\hat{\vz}_{t}$ denotes the solution at timestep $t$, originating from $\vz_T$ based on a solver and model $\vepsilon_{\theta}(\cdot, t)$, and then $\hat{\vz}_{0}$ represents the sampled data. 


\subsubsection{Consistency Distillation} 
The mentioned solving process usually hinges on tens or hundreds of steps, causing significant inference overheads for practical application. 
A promising solution is to perform consistency distillation (CD)~\cite{song2023consistency,song2023improved} of DMs, yielding a student model $\vf_\vtheta(\cdot, t): \R^k \to \R^k$ which enjoys a shortened sampling procedure. 

Concretely, CD defines a novel consistency function $\mathbf{f} : (\mathbf{z}_t, t)  \mapsto \mathbf{z}_\tau$ for all $ t \in [\tau, T]$, where $\tau$ is the boundary time near 0. CD parameterize 
consistency function with networks $\vf_\vtheta(\vz_t, t)$, and minimizes the following loss for training
\begin{equation}
\label{eq:cd}
    \small
    \min_{\vtheta} \mathcal{L}_{CD}(\vtheta) = \E_{n, \vz_{t_n}} \Big[ \lambda(t_n) \big\Vert \vf_\vtheta(\vz_{t_n}, {t_n}) - \vf_{{\vtheta}^-}(\hat{\vz}_{t_{m}}, t_m) \big\Vert_2^2\Big],
\end{equation}
where $t$ in $[\tau, T]$ are uniformly discretized into $N$ time points
 with $t_1 = \tau<t_2<\cdots< t_N=T$ and 
$m \in \{1, \dots, n-1\}$ is a hyper-parameter ~\cite{song2023consistency, luo2023latent}, $\lambda(\cdot)$ refers to another positive weighting function, ${\vtheta}^-$ denotes the exponential moving average (EMA) of ${\vtheta}$.
The state $\hat{\vz}_{t_{m}}$ represents an intermediate state, starting from $\vz_{t_n}$ and obtained with the teacher model $\vepsilon_{\theta}(\cdot, t)$ and a one-step DDIM solver~\cite{song2020ddim}, expressed as
\begin{equation}
\label{eq: ddim-sde}
\begin{split}
    \hat{\vz}_{t_{m}}=& 
        \frac{\sqrt{\alpha_{t_m}}}{ \sqrt{\alpha_{t_n}}}( \vz_{t_n} - \sqrt{1 - \alpha_{t_n}}\cdot \vepsilon_{\theta}(\vz_{t_n},t_n))\\
        &\quad\quad\quad\quad+ \sqrt{1 - \alpha_{t_{m}} - \sigma_{t_n}^2} \cdot \vepsilon_{\theta}(\vz_{t_n}, t_n)  + \sigma_{t_n} \vepsilon.
\end{split}
\end{equation}
CD typically sets $\sigma_{t_n}=0$ to keep $\hat{\vz}_{t_{m}}$ and $\vz_{t_n}$ on the same ODE trajectory. 

Although we confine the distance measure in \cref{eq:cd} to the squared $\ell_2$ distance, the $\ell_1$ distance also applies here. 
We do not consider the Learned Perceptual Image Patch Similarity~\citep[LPIPS,][]{zhang2018unreasonable} because it can lead to inflated FID scores~\cite{song2023improved}. 
The trained $\vf_\vtheta$ allows for one-step or multi-step sampling for generating new data~\cite{song2023consistency}.

\textbf{Limitation. } A primary limitation of CD from the viewpoint of distillation is that it has not fully unleashed the potential of the model $\vepsilon_{\theta}(\cdot, t)$: it utilizes one-step DDIM to sample a preceding state of the current one to serve as the teacher for distillation. 
Yet, for a well-trained DM, ODE-based solvers unusually underperform SDE ones with adequate sampling steps~\cite{gonzalez2023seeds}. Naturally, we ask if we can build an SDE-based teacher for improved CD.

\subsubsection{SDE Solvers for Diffusion Models}
The SDE formulation of the reverse-time diffusion process~\cite{song2020score} takes the form of
\begin{equation}
    \label{eq: re-sde}
    \begin{split}
    \mathrm{d} {\vx_t} = \underbrace{[f_t\vx_t + \frac{g^2_t }{2\sigma_t} \vepsilon_{\theta}(\vx_t, t)]  \mathrm{d}t }_\text{Probabilistic ODE} +
    \underbrace{ \frac{g^2_t }{2\sigma_t} \vepsilon_{\theta}(\vx_t, t) \mathrm{d}t  + g_t \mathrm{d}\bar{\vw}_t,}_\text{Langevin process}
    \end{split}
\end{equation}
where $\bar{\vw}_t $ denotes the standard Wiener process in reverse time, and
\begin{equation}
    f_t := \frac{\mathrm{d log} \alpha_t}{\mathrm{d}t},\quad g^2_t := \frac{\mathrm{d}\sigma_t^2}{\mathrm{d}t} - 2\frac{\mathrm{d log}\alpha_t}{\mathrm{d}t}\sigma_t^2.
\end{equation}
We can discretize \cref{eq: re-sde} over time to get an approximated solution with various SDE solvers~\cite{gonzalez2023seeds, cui2023elucidating, zhang2022fast}.

\textbf{Benefits of SDE Solvers.}
According to the theoretical analysis of~\cite{xu2023restart}, the divergence between $p(\vz_0)$ and the sample distribution $p(\hat{\vz}_0)$ stems from the discretization errors along the sampling trajectory and the approximation error between the model and the ground-truth score function~\cite{song2020score}. 
When the number of function evaluations (NFE) is small, SDE solvers exhibit larger discretization errors than ODE ones---$O(\delta ^{\frac{3}{2}})$ v.s. $O(\delta ^2)$ with $\delta$ as the step size for discretization. 
On the contrary, the discretization errors become less significant as $\delta$ shrinks and the approximation errors dominate, thus SDE solvers achieve higher sample quality than ODE ones thanks to the injection of noise for correcting previous approximation errors~\cite{karras2022elucidating}.

\begin{table}[t]
\vskip 0.05in
\begin{center}
\begin{small}
\begin{sc}
\begin{tabular}{lcccr}
\toprule
Solver & Step&FID  $\downarrow$ & CS $\uparrow$  & CR$\uparrow$ \\
\midrule
DDIM& 50 &20.3&0.318&0.9130\\
DPM++& 25 &20.3&0.318&0.9118\\
ER-SDE 5& 50 &\textbf{20.2}&\textbf{0.320}&\textbf{0.9172}\\
\bottomrule
\end{tabular}
\end{sc}
\end{small}
\end{center}
\caption{Performance comparison of typical ODE solvers (DDIM and DPM++) and SDE ones (ER-SDE 5) on MSCOCO-2017 5K with the SD1.5 model. 
CS denotes CLIP Score to measure text-to-image consistency and CR represents the Coverage metric to assess sample diversity.}
\label{table:solver}
\vskip -0.2in
\end{table}

We validate these arguments with results in \cref{table:solver}, where the ER-SDE 5 solver~\cite{cui2023elucidating} outperforms ODE solvers DDIM and DPM++~\cite{lu2022dpm++} in terms of both sample quality and diversity with 50 sampling steps. Besides, the noise injected in SDE solvers implicitly makes stochastic data augmentation for CD. With such observations, we are motivated to explore the possibility of combining SDE solvers and CD, aiming at acquiring stronger teachers for CD.

\section{Methodology}
Previous diffusion acceleration methods either fail to generate high-quality samples within $4$ steps~\cite{lu2022dpm, luo2023latent} or are limited to one-step generation, unable to trade additional sampling steps for improved outcomes~\cite{liu2023instaflow, xu2024ufogen}. 
Our goal is to generate high-quality samples within 2-4 steps to strike a favorable tradeoff between efficiency and efficacy.
We observe that CMs enjoy the trade-off by alternating denoising and noise injection at inference time. However, CD approaches haven't unleashed the potential power of the pre-trained teacher DMs, considering ODE solvers underperform SDE ones when the NFE is large. 
We first lay out a justification for the feasibility of combining CD with SDE solvers. 
Then, we identify several critical factors to make SCott workable for high-resolution text-to-image generation. 
We also introduce GAN loss to further enhance the sampling quality at rare steps. 
Our method is summarized in~\cref{fig:arch}. 
\begin{figure}[t]
\vspace{-2ex}
    \centering
     \includegraphics[width=\linewidth]{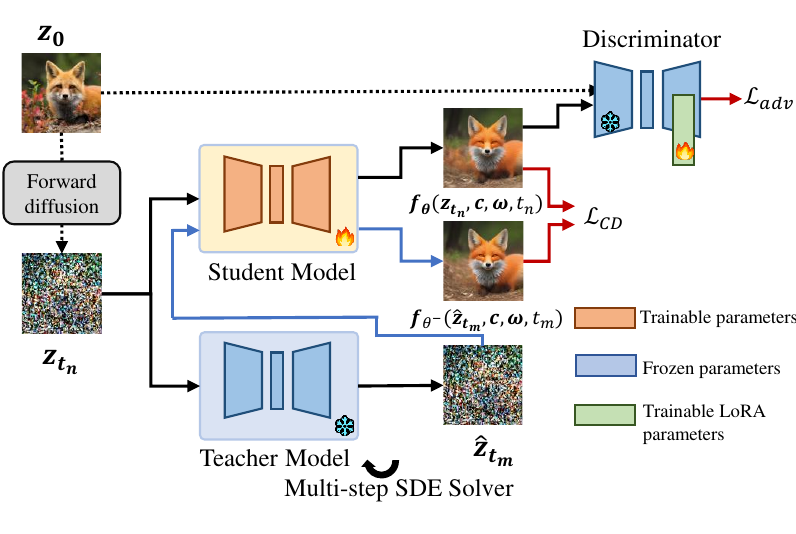}
    \vspace{-4.5ex}
    \caption{\textbf{Overview of SCott}. SCott distills a pre-trained teacher DM into a student one for accelerated sampling. Compared to the vanilla consistency distillation approach, we introduce a multi-step SDE solver to establish a stronger and more versatile teacher. 
    We train the student model with CD loss using SDE solvers.
    Additionally, we include an adversarial learning loss to correct student output, boosting the sample quality with rare sampling steps. Note that we omit the EMA operation for the teacher for brevity. }
    \label{fig:arch}
\end{figure}
\vskip -0.05 in
\setlength{\parskip}{0.2cm plus4mm minus3mm}

\vspace{-1ex}
\subsection{Justification of Using SDE Solvers for CD}
\vspace{-0.1cm}
Despite our motivation for combining SDE solvers and CD being sensible from the perspective of model distillation, a major concern arises as to whether it is reasonable to do so because CD is originally defined on ODE trajectories. 
In this section, we provide theoretical justification for this. 
 Following~\cite{song2023consistency}, we provide the convergence proof of using SDE solvers in CD.
\begin{theorem}
Let $ \Delta t:= \max\limits_{n\in [1,N]} {|t_{n+1} - t_{n}|} $ where 
$t\in [\tau, T]$. Assume $\vf_{\theta}(\cdot , \cdot)$ is Lipschitz in $\vx$ with constant $L_1$. Denote  $\vf(\cdot , \cdot) $ the consistency function of the SDE  defined in \cref{eq: re-sde}.
Assume the SDE solver $\Phi_{SDE}$ has  a local error bound of $O((\Delta t))^{p+1}$ with $p\ge 1$.Then, if $\mathcal{L}_{CD}^{N}(\vtheta, \Phi_{SDE}) = 0$, we have : 
\begin{center}
    $ \sup\limits_{n,\vx} ||\vf_\theta(\vx_{t_n}, t_n) -   \vf(\vx_{t_n}, t_n) ||_2 = O((\Delta t)^p) $ .
\end{center}
\end{theorem}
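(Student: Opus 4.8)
The plan is to run the timestep-induction argument of \cite{song2023consistency} that underlies their Theorem~1, but with the \emph{exact reverse SDE} \cref{eq: re-sde} (rather than the probability-flow ODE) playing the role of the reference trajectory. First I would unpack the hypothesis $\mathcal{L}_{CD}^{N}(\vtheta,\Phi_{SDE})=0$: the loss is a positive-weighted expectation of squared $\ell_2$ distances over the discrete uniform index $n$ and over $\vz_{t_n}\sim p(\vz_{t_n})$, and $p(\vz_{t_n})$ has full support (it is a Gaussian convolution of $\pdata$); hence the integrand vanishes $p$-almost everywhere, and by continuity of $\vf_\vtheta$ it vanishes everywhere. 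Using that at zero loss the online and EMA copies coincide, this yields the clean one-step identity $\vf_\vtheta(\vx,t_{n+1}) = \vf_\vtheta\big(\Phi_{SDE}(\vx,t_{n+1};t_n),\,t_n\big)$ for every $\vx$ and every $n\in\{1,\dots,N-1\}$, where $\Phi_{SDE}(\cdot,t_{n+1};t_n)$ denotes one step of the SDE solver (with its injected noise) from $t_{n+1}$ to $t_n$; I will treat $m=n-1$ as the main case and note that a general $m$ only changes constants by summing the per-step errors.

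Next I would define $e_n := \sup_{\vx}\big\Vert \vf_\vtheta(\vx,t_n)-\vf(\vx,t_n)\big\Vert_2$ and bound it by induction on $n$. Base case: the skip-connection parameterization of $\vf_\vtheta$ forces $\vf_\vtheta(\vx,\tau)=\vx$, and by definition of the consistency function $\vf(\vx,\tau)=\vx$, so $e_1=0$ (recall $t_1=\tau$). Inductive step: write $\vy:=\Phi^{\star}(\vx,t_{n+1};t_n)$ for the exact reverse-SDE flow and $\hat{\vy}:=\Phi_{SDE}(\vx,t_{n+1};t_n)$ for the numerical one. Since $\vf$ is, by definition, constant along the exact SDE trajectory, $\vf(\vx,t_{n+1})=\vf(\vy,t_n)$; combining this with the one-step identity above,
\begin{align*}
\vf_\vtheta(\vx,t_{n+1})-\vf(\vx,t_{n+1})
&= \big[\vf_\vtheta(\hat{\vy},t_n)-\vf_\vtheta(\vy,t_n)\big]
  +\big[\vf_\vtheta(\vy,t_n)-\vf(\vy,t_n)\big].
\end{align*}
The first bracket is $\le L_1\Vert\hat{\vy}-\vy\Vert_2 = L_1\cdot O((\Delta t)^{p+1})$ by the Lipschitz hypothesis and the assumed local error bound of $\Phi_{SDE}$; the second is $\le e_n$. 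Taking $\sup_{\vx}$ gives the recursion $e_{n+1}\le e_n + L_1\cdot O((\Delta t)^{p+1})$.

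Unrolling from $e_1=0$ gives $e_n\le (N-1)L_1\cdot O((\Delta t)^{p+1})$ for all $n$; since $N\,\Delta t = T-\tau = O(1)$ (so $N=O(1/\Delta t)$) and $p\ge 1$, this is $O((\Delta t)^p)$, which is exactly the claimed $\sup_{n,\vx}\Vert\vf_\vtheta(\vx_{t_n},t_n)-\vf(\vx_{t_n},t_n)\Vert_2 = O((\Delta t)^p)$. The part I expect to require the most care is conceptual, not computational: ``the consistency function of the SDE'' must be made precise, since reverse-SDE trajectories are random. The clean resolution is to work pathwise --- fix the Brownian realization (equivalently, the sequence of injected noises) and use it both to define the deterministic exact flow $\Phi^{\star}$ and inside the training solver $\Phi_{SDE}$ --- so that $\vf$ is genuinely constant along $\Phi^{\star}$, the boundary and one-step identities hold verbatim, and the hypothesis on $\Phi_{SDE}$ reads as the pathwise local bound $\Vert\Phi_{SDE}-\Phi^{\star}\Vert_2=O((\Delta t)^{p+1})$; the conclusion then holds for each noise realization, hence also in expectation. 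A secondary bookkeeping point is the general skip $m$: for $m<n-1$ one either treats the composed solver steps from $t_n$ to $t_m$ as a single operator with local error $O((\Delta t)^{p+1})$, or sums the $n-m$ per-step bounds, neither of which affects the final order.
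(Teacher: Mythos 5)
Your proof is correct and follows essentially the same route as the paper's: the timestep induction from Song et al.'s consistency-distillation convergence theorem (our Lemma~1), with the exact reverse SDE replacing the PF-ODE, the zero-loss one-step identity, the boundary condition $e_1=0$, and the Lipschitz-plus-local-error recursion unrolled over $N=O(1/\Delta t)$ steps. The only substantive differences are that the paper's appendix spends most of its effort verifying the $O((\Delta t)^{2})$ local error bound for the concrete first-order DPM SDE solver (the $\Delta_2$/$\Delta_3$ decomposition), which you legitimately take as given since the theorem lists it as a hypothesis, and that your pathwise fixing of the Brownian realization to make ``the consistency function of the SDE'' well-defined is a clarification the paper leaves implicit.
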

\vspace{-0.5ex}
\begin{proof}
We refer to Appendix 1 in our supplementary materials for the full proof.

\end{proof}
\vspace{-0.5ex}
Additionally, we also present an empirical investigation to demonstrate the feasibility of using SDE solvers for CD (Please refer to Appendix 3).

\subsection{Stochastic Consistency Distillation (SCott)} 

\label{stoc distillation}
For large-scale modeling on high-resolution natural images, we, however, observe that directly utilizing an SDE solver for CD leads to training instability and poor convergence. 
We address this issue by introducing multiple crucial modifications to straightforward implementations. 
\vspace{-0.2cm}
\subsubsection{Controlling the Level of Noise in SDE Solvers }
It is a natural idea to control the level of random noise injected in SDE solvers to stabilize training. 
Taking the DDIM solver in \cref{eq: ddim-sde} for example,
the $\sigma_{t_n}$ controls the intensity of injected noise and typically we can introduce a coefficient $\eta$ for scaling it~\cite{song2020ddim}:
\begin{equation}
    \sigma_{t_n}(\eta) := \eta \sqrt{(1 - \alpha_{t_{n-1}})/(1 - \alpha_{t_n})} \sqrt{1 - \alpha_{t_n}/\alpha_{t_{n-1}}}.
\end{equation}
Increasing $\eta$ from 0 to a positive value results in a set of SDE solvers with increasingly intensive noise, starting from an ODE one. 
We empirically experiment with the solvers for SCott and present the results in \cref{table:sde}. 
As shown, we indeed need to select a reasonable noise level to enjoy the benefits of SDE solvers for error correction while avoiding introducing excessive variance. 
Following this insight, we include the more advanced ER-SDE solver~\cite{cui2023elucidating} to SCott.
The novel noise scaling function in the ER-SDE solver also provides a simple way to control the intensity of noise, which aligns with our requirements for SDE solvers. 
\vspace{-0.2cm}
\subsubsection{Multi-step Sampling }
The discretization errors of SDE solvers are larger than those of ODE solvers~\cite{xu2023restart}. 
A simple remediation to this is to decrease the step size, but doing so leads to slow convergence and degraded results for SCott, consistent with the results in~\cite{luo2023latent}. 
To address this, we propose a multi-step sampling strategy for SCott. 
Specifically, for a sample $\vz_{t_n}$ at the time $t_n$ on the SDE trajectory, to obtain the estimated state $\hat{\vz}_{t_m}$ for a preceding timestep $t_m$, we split the time interval $t_n - t_m$ into $h$ intervals, namely, sampling with a step size of $\frac{t_n -t_m}{h}$, 
as illustrated in \cref{fig:multi_step_sde}. 
We denote the solution as $\hat{\vz}_{t_m} = \Phi_{SDE}(\vepsilon_{\theta},\vz_{t_n}, t_n, t_m,  h)$. 
\begin{figure}[H]
    \centering
    \includegraphics[width=.8\linewidth]{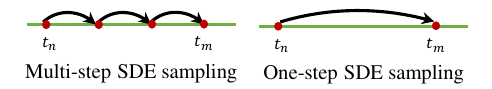}
    \vspace{-0.2cm}
    \caption{Multi-step SDE solver sampling}
    \label{fig:multi_step_sde}
\end{figure}
\vspace{-0.2in}

Through multi-step sampling, we can reduce the discretization errors in SDE solvers as the interval shrinks.
Besides, the multi-step SDE solver aids in correcting the accumulated errors in the sampling path with the injection of random noise.
Empirically, we set $t_m = t_n - 24$ and $h=3$.

\vspace{-0.2cm}
\subsubsection{Introducing GAN into SCott} \label{GAN accelerate}
 For high-resolution text-to-image generation, considering the high data dimensionality and complex data distribution,  simply using L2 loss fails to capture data discrepancy precisely, thus providing imperfect consistency constraints. 
Suggested by CTM~\cite{kim2023consistency} which leverages GAN to improve trajectory estimation in the process of distillation, we integrate GAN into SCott to mitigate the restriction of L2 loss. However, it is non-trivial to directly employ CTM's GAN. Firstly,
CTM's GAN performs on the pixel level while SCott performs on the latent space. For high-resolution image generation, employing a GAN that operates directly on pixel values would significantly increase the training cost. Secondly, SCott is designed to accelerate text-to-image tasks, the interaction between textual information and visual representation is not considered in the design of CTM's GAN. To address these issues, we design a low-rank adaptation (LoRA) \cite{hu2021lora} discriminator with time and text projection, which renders adversarial training successfully in SCott.

\vspace{-0.2cm}
\subsubsection{LoRA Discriminator}
To accelerate training, the discriminator is initialized by the pre-trained U-Net in SD1.5. 
However, we find training the whole U-Net is expensive and unstable. Therefore, we freeze the encoder of the U-Net and only train the LoRA parameters of the decoder of the network. The frozen encoder of the U-Net extracts rich latent representations from the input and conditions, while the LoRA decoder acts as discriminator heads to distinguish between real images and generated images. 
 By employing this approach, only $3.6\%$ of the parameters in the U-Net are updated, which not only contributes to computational efficiency but also increases the overall stability of the adversarial training.

\vspace{-0.2cm}
\subsubsection{Conditional Discriminator}
Since SCott predicts the clean image $\hat{\vz}_0$, we can directly perform adversarial learning between real image $\vz_0$ and generated image $\hat{\vz}_0$.  However, we find such a naive design makes training unstable for text-to-image tasks. To surmount this challenge,  the timestep $t_n$ and text $\vc$ are propagated into the discriminator as conditions.

Following~\cite{sauer2023stylegan}, we train our discriminator $D_{\vphi}$ with hinge loss:
\begin{equation}
    \begin{split}
        &\mathcal{L}_{adv}(\vtheta, \vphi)  = \mathbb{E}_{\vz_0, \vc} \Big[ \max(0, 1 - D_{\vphi}(\vz_0, \vc, 0))  \Big] \\
        &\quad + \mathbb{E}_{\vz_{t_n}, \vc, t_n} \Big[ \max(0, 1 + D_{\vphi}(\vf_{\vtheta}(\vz_{t_n}, \vc, \omega,t_n), \vc, t_n))  \Big],
    \end{split}
    \label{eq: gan}
\end{equation}
where $\vf_{\vtheta}$ denotes the SCott model while $D_{\vphi}$ represents our proposed LoRA Discriminator.  Overall, SCott is trained with the following objective:
\begin{equation}
    \begin{split}
    \mathcal{L}_{SCott}(\vtheta, \vphi)  = \mathcal{L}_{CD}(\vtheta, \vtheta^{-}; \Phi_{SDE} ) + \lambda_{adv}\mathcal{L}_{adv}(\vtheta, \vphi).
    \end{split}
    \label{eq: scott}
\end{equation}
In practice, we set $\lambda_{adv}=0.4$ to control the strength of the discriminator for refining the outputs of $\vf_\vtheta$. 
To save GPU memory, the training is performed in latent space. We find the introduction of GAN loss renders SCott to produce more realistic outputs, particularly at rare steps. 
We present more details of the training process for SCott in Appendix 4.

\begin{table*}[ht]
\setlength{\tabcolsep}{3mm}
\vspace{-0.3cm}
\begin{center}
\begin{sc}
\begin{tabular}{ll@{\hskip3pt}l@{\hskip3pt}cccccr}
\toprule
Method & \multicolumn{2}{c}{References}&Step & Time (s) & FID  $\downarrow$ & CS $\uparrow$  & CR $\uparrow$   \\
\midrule
DPM++~\cite{lu2022dpm++}& ArXiv &(2022) &25&0.88&20.1&0.318&0.9118\\
DDIM~\cite{song2020ddim}& ICLR &(2021) &50&$-$&20.3&0.318&0.9130\\
\hline
PD~\cite{salimans2022progressive}& ICLR &(2022) &4&0.21&26.4&0.300&$-$ \\
CAD~\cite{meng2023distillation}& CVPR &(2023) &8&0.34&24.2&0.300&$-$\\
LCM~\cite{luo2023latent}& ArXiv &(2023) &2&0.10&30.4&0.293&0.7882 \\
INSTAFLOW-0.9B~\cite{liu2023instaflow}& ICLR &(2024) &1&0.09&23.4&0.304&$-$ \\
INSTAFLOW-1.7B~\cite{liu2023instaflow}& ICLR &(2024) &1&0.12&22.4&0.309&$-$ \\
UFOGEN~\cite{xu2024ufogen}& CVPR &(2024) &1&0.09&22.5&0.311&$-$ \\
UFOGEN~\cite{xu2024ufogen}& CVPR &(2024)&4&$-$&22.1&0.307&$-$ \\
\hline
SCott (Ours)&&&1&0.09&26.8&0.295&0.8254\\
SCott (Ours)&&&2&0.12&21.9&0.310&0.9114\\
SCott (Ours)&&&4&0.18&\textbf{21.8}&\textbf{0.311}&\textbf{0.9145}\\
\bottomrule
\end{tabular}

\end{sc}
\end{center}
\caption{Comparisons with the state-of-the-art methods on MSCOCO-2017 5K in terms of  FID,  CS, CR with backbone SD1.5.}
\label{table:coco}
\vspace{-0.4cm}
\end{table*}

\section{Experiments}
In this section, we elaborate on the experimental results of our proposed SCott model for the text-to-image generation task.  We start with the comparison with other works, and then ablate components of SCott, highlighting
the effectiveness of our proposed components.

\subsubsection{Implementation Details}
We use LAION-Aesthetics-6+ dataset~\cite{schuhmann2022laion}. We train SCott with 4 A100 GPUs and a batch size of 40 for 40K iterations. The learning rate is 8e-6 for SCott and 2e-5 for the discriminator. (See more experimental setups in Appendix 4 )
\vspace{-0.1cm}
\begin{table}[t]
\footnotesize

\begin{center}
\begin{sc}
\resizebox{\linewidth}{!}{%
\begin{tabular}{lcr}
\toprule
Method&  Step  & FID  $\downarrow$   \\
\midrule
SWIFTBRUSH~\cite{nguyen2024swiftbrush}  & 1  &  16.67 \\ 
UFOGEN~\cite{xu2024ufogen} & 1 & 12.78 \\ 
INSTAFLOW-0.9B~\cite{xu2024ufogen}  & 1 & 13.10 \\ 
DMD~\cite{yin2024one}  & 1 & 11.49  \\ 
SCott (Ours) &  2  & 11.13 \\ 
SCott (Ours) & 4  & \textbf{10.68 }\\ 
\bottomrule
\end{tabular}
}
\end{sc}
\end{center}
\caption{Comparisons with the state-of-the-art methods on MSCOCO-2014 30K in terms of  FID with backbone SD1.5.}
\vspace{-0.2cm}
\label{table:coco 30k}
\end{table}

\subsubsection{Comparison on MSCOCO-2017 5K with SD1.5}
To kick-start the comparisons with the state-of-the-art methods, we evaluate the MSCOCO-2017 5K validation dataset~\cite{lin2014microsoft}. 
 Zero-shot FID and CLIP Score (CS) with ViT-g/14 backbone~\cite{radford2021learning} are exploited as objective metrics. To measure diversity, Coverage (CR)~\cite{naeem2020reliable}  is used as another metric. 
Table \ref{table:coco} summarizes the performance of our SCott and comparative methods consisting of PD~\cite{salimans2022progressive}, CAD~\cite{meng2023distillation}, LCM~\cite{luo2023latent}, InstaFlow~\cite{liu2023instaflow}, UFOGen~\cite{xu2024ufogen}.  
Since PD~\cite{salimans2022progressive}, CAD~\cite{meng2023distillation}, InstaFlow~\cite{liu2023instaflow}, and UFOGen~\cite{xu2024ufogen} do not list CR value, we leave them $-$. Time denotes inference time on a single A100. LCM is implemented according to the official in our setting.

\begin{table}[ht]
\setlength{\tabcolsep}{2.5mm}
\begin{center}
\begin{sc}
\begin{tabular}{lcccccr}
\toprule
Method& Step & FID  $\downarrow$ & CS $\uparrow$  & CR $\uparrow$  \\
\midrule
DPM++&25&22.1&0.320&0.8664 \\
DDIM&50&22.9&0.320&0.8536 \\
\midrule
LCM&2&37.2&0.296&0.7016 \\
SCott (Ours)&2&\textbf{24.9}&\textbf{0.301}&\textbf{0.8794}\\
\bottomrule
\end{tabular}
\end{sc}
\end{center}
\caption{Comparisons with the state-of-the-art methods on MJHQ-5K in terms of  FID,  CS, CR, with backbone RV5.1.}
\label{mjhq}
\vskip -0.1in
\end{table}

Among all the methods, our 2-step 
SCott presents superior FID and CS values than 4-step PD, 8-step CAD,  1-step InstaFlow-0.9B, 2-step LCM. 2-step SCott achieves better FID than 1-step UFOGen and comparable CS. It is impressive to see that our 2-step SCott beats InstaFLow-1.7B which doubles the parameter size. These results demonstrate our SCott significantly enhances the quality of the generated images while reducing inference steps. The improvements lie in that the proposed $\mathcal{L_{CD}}$ in SCott is able to reduce the sampling steps remarkably and adversarial loss further improves image quality. Interestingly, the CS decreases by 0.004 when the inference step increases from 1 to 4 for UFOGen. Meanwhile, by changing the inference step from 1 to 4,  all the metrics are improved for our SCott, and 4-step SCott outperforms  4-step UFOGen regarding both FID and CS. The result indicates the improvement is limited for UFOGen when increasing steps while our method is much more powerful in enhancing image quality with additional steps in a flexible manner, which is meaningful in the scenarios requiring high-quality images with more affordable computational budgets.    
The proposed SCott obtains higher CR  than LCM, and even outperforms the ODE solvers, 25-step DPM++, and 50-step DDIM, indicating our SCott successfully increases sample diversity due to the introduced randomness in SDE solvers.

We present the qualitative comparisons involving InstaFlow and LCM (See Figure 5 in Appendix 5). Notably, 2-step SCott gains significant improvements over 2-step LCM and 1-step InstaFlow in terms of image quality and text-to-image alignment. We also observe that our generated images exhibit sharper textures and finer details, compared to the images generated by InstaFlow and LCM.

\noindent \textbf{Comparison on MSCOCO-2014 30K with SD1.5} 
For complete comparisons, we also benchmark our method against DMD, SwiftBrush, and UFOGen on MSCOCO-2014 30K~\cite{lin2014microsoft}. We follow the evaluation setup in DMD and use a CFG scale of 3 following DMD. As depicted in \cref{table:coco 30k}, 
 our 2-step SCott surpasses all the above methods in FID. Additionally, it's worth noting that these methods can not improve output with more steps, whereas SCott offers the feasibility to enhance the sample quality with additional steps. With 4-step inference, SCott achieves a superior FID of 10.68.
 These results validate the effectiveness of our approach in achieving higher-quality samples within 2 steps and improving performance by leveraging additional steps.

\begin{table}[t]
\setlength{\tabcolsep}{4mm}
\begin{center}
\begin{small}
\begin{sc}
\begin{tabular}{lcccr} 
\toprule
 SDE Solver & FID  $\downarrow$ & CS $\uparrow$  & CR $\uparrow$   \\
\midrule
DDIM ($\eta=0$)&27.4&0.296&0.8450\\
DDIM ($\eta=0.1$)&26.2& 0.297&0.8713 \\
DDIM ($\eta=0.2$)&25.2&0.299&0.8786 \\
DDIM ($\eta=0.3$)&27.7&0.298&0.8717 \\
DDIM ($\eta=0.6$)&29.7&0.298&0.8162 \\
ER-SDE 5 & \textbf{24.9} & \textbf{0.301} & \textbf{0.8794}\\
\bottomrule
\end{tabular}
\end{sc}
\end{small}
\end{center}
\vskip -0.1in
\caption{Comparisons of different solvers in CD on MJHQ-5K with 2-step inference. All models are based on RV5.1.   }
\label{table:sde}
\end{table}

\subsubsection{Comparison on MJHQ-5K with RV5.1} To better assess the quality of produced images, we further conduct experiments on MJHQ-5K, randomly selected from MJHQ-30K\footnote{\url{https://huggingface.co/playgroundai/playground-v2-1024px-aesthetic}.}, since it owns high image quality and image-text alignment, and the correlation between human preference and FID score on the MJHQ is verified by user study. The evaluation metrics of several methods with RV5.1 as a teacher are listed in Table \ref{mjhq}. The reason for distilling RV5.1 is that RV5.1 is much stronger than SD1.5 for text-to-image consistency, which can be found in Appendix 5. LCM is also implemented according to the official code. Since the training codes of PD, CAD, InstaFlow, and UFOGen are unavailable, their metrics are not included. The result again presents our SCott outperforms LCM by a large margin, because our SDE-based CD provides a stronger and more versatile teacher than ODE-based CD in LCM, and the student output is further refined to be real data by the proposed discriminator.
\vspace{-0.2cm}
 
\subsubsection{Ablation Studies}
\label{sec:ablate}
 To analyze the key components of our method, we make a thorough ablation study to verify the effectiveness of the proposed SCott.

\textbf{SDE Solver.}
Table \ref{table:sde} depicts the results using deterministic and stochastic solvers for CD. DDIM ($\eta$) represents DDIM solver with noise coefficient $\eta$, where $\eta$ denotes the hyper-parameter that controls the strength of random noise injected in Eq.\ref{eq: ddim-sde}. $\eta$ achieves an interpolation between the deterministic DDIM ($\eta=0$) and original DDPM ($\eta$ = 1). As observed in Table \ref{table:sde}, the SDE-based DDIM ($\eta=0.1,0.2$) surpasses ODE-based DDIM ($\eta=0$) for CD,  demonstrating the superiority of SDE solver for CD. This is because the injection of noise at moderate intensity in SDE solvers aids
in correcting estimated errors of the teacher model with multiple sampling steps, leading to a more powerful teacher. However, excessive noise intensity leads to poor convergence and degraded samples, as shown in DDIM ($\eta=0.6$).
These results indicate it is crucial to control the noise strength in SCott since large noise leads to low training stability. The 2-order ER-SDE 5 solver further enhances the performance, 
this is because 1. the 2-order ER-SDE 5 exhibits smaller discretization errors compared to the 1-order DDIM solver. 2. the noise function in ER-SDE 5 mitigates excessive noise, which leads to good convergence. \cref{discriminator ab} shows that SCott without GAN also surpasses the ODE solver-based LCM, further illustrating the benefits of using SDE solvers for CD.

\begin{table}[t]

\setlength{\tabcolsep}{4.5mm}
\begin{center}
\begin{small}
\begin{sc}
\begin{tabular}{lcccr}
\toprule
Solver Step & FID  $\downarrow$ & CS $\uparrow$  & CR $\uparrow$   \\
\midrule
1&27.4&0.299&0.8461 \\
2&25.4&0.300&0.8724\\
3&\textbf{24.9}&\textbf{0.301}&\textbf{0.8794} \\
\bottomrule
\end{tabular}
\end{sc}
\end{small}
\end{center}
\vskip -0.1in
\caption{Comparisons of choosing different SDE solver steps during training. All models are based on RV5.1 with 2-step inference on MJHQ-5K.  }
\label{table:length}
\end{table}

\textbf{Multi-step SDE Solver Sampling.}
As outlined in Table \ref{table:length}, we study the sampling steps in the process of estimating $\hat{\vz}_{t_m}$  given $\vz_{t_n}$. 
The results indicate that multi-step SDE solvers are superior to single-step solvers. The reason is that SDE solvers usually require multiple iterations to reach the correct destination, demonstrating the choice of SDE solver steps during CD is critical to make SCott successful. 


\begin{table}[t]

\begin{center}
\begin{small}
\begin{sc}
\begin{tabular}{lccr}
\toprule
Loss &  FID  $\downarrow$ & CS $\uparrow$  & CR $\uparrow$   \\
\midrule
SCott (\textnormal{Without GAN})   & 25.0 & 0.300 &  0.867  \\
LCM  & 30.4 & 0.293 & 0.788 \\ \hline
LCM + GAN    & 29.2  & 0.297  & 0.8160   \\ 
SCott (\textnormal{Full Discriminator})  & 23.5& 0.302 & 0.8880 \\ 
SCott (\textnormal{Without Condition})  & 26.1 & 0.297  &  0.8412\\ 
SCott  & \textbf{21.9 }& \textbf{0.310} & \textbf{0.9145}  \\ 
\bottomrule
\end{tabular}
\end{sc}
\end{small}
\end{center}
\caption{The performance comparisons of Discriminator on MSCOCO-2017 5k with backbone SD 1.5.}
\label{discriminator ab}
\end{table}

\textbf{Discriminator.} \cref{discriminator ab} illustrates our discriminator leads to gains with respect to FID, CS, and CR using different inference steps. We observe our LoRA discriminator achieves better sample qualities than the fully parameterized one. 
Compared with the discriminator without condition, SCott leverages both the time and text conditions. Such a design helps the discriminator better distinguish between real and generated samples.
Interestingly, we observe that our discriminator can also improve the outputs of LCM,
which indicates the generalization capability of our proposed discriminator.

\section{Conclusions}
In this paper, we propose stochastic consistency distillation (SCott), a novel approach for accelerating text-to-image diffusion models. SCott integrates SDE solvers into consistency distillation to unleash the potential of the teacher, implemented by controlling noise strength and sampling step of SDE solvers. Adversarial learning is further utilized to aid SCott in generating high-quality images in rare-step sampling. SCott is capable of yielding high-quality images with 2 steps only, surpassing 2-step LCM, 1-step InstaFlow, and 4-step UFOGen. Additionally, SCott consistently improves performance by increased inference cost within 4 steps and exhibits higher diversity than competing baselines.

\newpage


\bibliography{aaai25}

\newpage
\title{ SCott: Accelerating Diffusion Models with Stochastic Consistency Distillation \\ Supplementary Materials}

\setcounter{secnumdepth}{2}
\onecolumn
\begin{centering}
\textbf{{\Huge SCott: Accelerating Diffusion Models with Stochastic Consistency Distillation}} \\
\vspace{ 0.5 cm}
\textbf{
{\Huge Supplementary Materials}} \\
\vspace{1cm}
\end{centering}

\section{Theoretical justification for using SDE solvers in CD}
\begin{theorem}\label{proof sde}
Let $ \Delta t:= \max\limits_{n\in [1,N]} {|t_{n+1} - t_{n}|} $ where 
$t\in [\tau, T]$. Assume $\vf_{\theta}(\cdot , \cdot)$ is Lipschitz in $\vx$ with constant $L_1$. Denote  $\vf(\cdot , \cdot) $ the consistency function of the reverse SDE  defined in Equation 3 in the main part of our paper.
Assume the SDE solver $\Phi_{SDE}$ has  a local error bound of $O((t_{n+1} - t_{n}))^{p+1}$ with $p\ge 1$.
Then, if we have $\mathcal{L}_{\mathcal{KL}}^{N}(\vtheta, \Phi_{SDE}) = 0$, we have

\centering{$  \sup\limits_{n,\vx} ||\vf_\theta(\vx_{t_n}, t_n) -   \vf(\vx_{t_n}, t_n) ||_2 = O((\Delta t)^p) $ }.
\end{theorem}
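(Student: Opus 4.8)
The plan is to transplant the convergence argument behind Theorem~1 of \cite{song2023consistency} from the probability-flow ODE to the reverse-time SDE of \cref{eq: re-sde}, tracking how the local error of $\Phi_{SDE}$ accumulates over the $N$ grid points. Throughout I take the teacher step to act between adjacent grid points (i.e.\ $m=n-1$), which is the regime in which $\Delta t$ governs everything, and I work pathwise: for a fixed realization of the driving Brownian motion the reverse SDE defines a well-posed flow, $\vf(\cdot,\cdot)$ is the consistency function of that flow, and the solver $\Phi_{SDE}$ uses the matching discretized increments.

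First I would exploit $\mathcal{L}_{CD}^{N}(\vtheta,\Phi_{SDE})=0$: at the optimum the EMA weights coincide with $\vtheta$, and since $\lambda(\cdot)$ is positive with the intermediate marginals fully supported, zero loss forces the pointwise identity $\vf_{\theta}(\vx_{t_{n+1}},t_{n+1}) = \vf_{\theta}(\hat{\vx}_{t_n},t_n)$ with $\hat{\vx}_{t_n}=\Phi_{SDE}(\vepsilon_{\theta},\vx_{t_{n+1}},t_{n+1},t_n)$, for every $n$ and (almost) every $\vx_{t_{n+1}}$. Define the worst-case error $E_n := \sup_{\vx}\|\vf_{\theta}(\vx_{t_n},t_n)-\vf(\vx_{t_n},t_n)\|_2$; the boundary condition built into the parametrization, $\vf_{\theta}(\cdot,\tau)=\mathrm{id}=\vf(\cdot,\tau)$, gives $E_1=0$.

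Next comes the one-step recursion. Let $\vx_{t_n}^{\star}$ be the exact reverse-SDE solution at $t_n$ started from $(\vx_{t_{n+1}},t_{n+1})$; invariance of the consistency function along trajectories gives $\vf(\vx_{t_{n+1}},t_{n+1}) = \vf(\vx_{t_n}^{\star},t_n)$. Inserting $\pm\,\vf_{\theta}(\vx_{t_n}^{\star},t_n)$ and using the identity above,
\[
\vf_{\theta}(\vx_{t_{n+1}},t_{n+1})-\vf(\vx_{t_{n+1}},t_{n+1}) = \big[\vf_{\theta}(\hat{\vx}_{t_n},t_n)-\vf_{\theta}(\vx_{t_n}^{\star},t_n)\big] + \big[\vf_{\theta}(\vx_{t_n}^{\star},t_n)-\vf(\vx_{t_n}^{\star},t_n)\big].
\]
The second bracket is bounded by $E_n$; the first by $L_1\|\hat{\vx}_{t_n}-\vx_{t_n}^{\star}\|_2 = L_1\cdot O((\Delta t)^{p+1})$, using the $L_1$-Lipschitzness of $\vf_{\theta}$ and the assumed local error bound of $\Phi_{SDE}$ (initialized at the true state $\vx_{t_{n+1}}$). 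Taking $\sup_{\vx}$ yields $E_{n+1}\le E_n + L_1\,O((\Delta t)^{p+1})$, and unrolling from $E_1=0$ together with $N=O((\Delta t)^{-1})$ on the uniform grid of $[\tau,T]$ gives $\sup_{n,\vx}\|\vf_{\theta}(\vx_{t_n},t_n)-\vf(\vx_{t_n},t_n)\|_2 = E_N = O((\Delta t)^{p})$.

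The main obstacle is the stochasticity. Unlike the ODE case, the trajectory through $(\vx_{t_{n+1}},t_{n+1})$ is not unique, so both ``the consistency function of the reverse SDE'' and the ``local error bound $O((\Delta t)^{p+1})$'' must be made precise in the pathwise sense sketched above, and one must check that the $\sup_{\vx}$ in $E_n$ legitimately absorbs the dependence on the random intermediate state $\hat{\vx}_{t_n}$---i.e.\ that the recursion closes uniformly rather than merely in expectation (where the L2 minimizer would only be forced to match a conditional mean). Once this bookkeeping is in place, the remainder is the routine telescoping estimate.
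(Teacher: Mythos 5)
Your proposal is correct relative to the stated hypotheses, and its core --- the telescoping recursion $E_{n+1}\le E_n + L_1\cdot O((\Delta t)^{p+1})$ with boundary condition $E_1=0$ --- is exactly the accumulation argument the paper uses, namely the proof of Lemma~1 (imported from \cite{song2023consistency}), which the paper reproduces verbatim for the ODE case and then invokes. Where you diverge is in what you treat as the work to be done. Since the theorem explicitly \emph{assumes} the local error bound $O((t_{n+1}-t_n)^{p+1})$ for $\Phi_{SDE}$, you take it as given and spend your effort re-running the induction for the SDE consistency function; the paper instead spends essentially its entire proof \emph{verifying} that assumption for a concrete solver (the first-order DPM SDE solver), by splitting the one-step error into an exactly-computed linear term, a deterministic integral term bounded by the Peano remainder of a Taylor expansion ($O(h^2)$), and a stochastic noise-mismatch term bounded via Wiener-increment estimates ($O(h^2)$), yielding $p=1$, after which it simply cites Lemma~1 to finish. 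Your route is arguably the more faithful reading of the theorem as stated, and the caveat you raise --- that for an SDE the trajectory through $(\vx_{t_{n+1}},t_{n+1})$ is not unique, so both the consistency function and the local error bound need a pathwise (or otherwise uniform-in-randomness) interpretation for the $\sup_{\vx}$ recursion to close --- is a genuine subtlety that the paper's proof also glosses over when it transfers Lemma~1, stated for the PF ODE, to the SDE setting. The one thing your write-up loses relative to the paper is the constructive content: the paper's bound on $\Delta_2$ and $\Delta_3$ is what certifies that the hypothesis $p\ge 1$ is actually attainable by a real solver, which is the substantive claim the authors want to make; if you want your proof to stand alone as a justification for the method rather than a conditional statement, you would need to supply that verification (or cite one) as well.
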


\noindent The following lemma provides the convergence proof of using ODE solvers in CD (proof in \cite{song2023consistency}), which is crucial to our proof for \cref{proof sde}.

\begin{lemma}[Proof in \cite{song2023consistency} ] 
Let $ \Delta t:= \max\limits_{n\in [1,N]} {|t_{n+1} - t_{n}|} $ where 
$t\in [\tau, T]$. Assume $\vf_{\theta}(\cdot , \cdot)$ is Lipschitz in $\vx$ with constant $L_1$. Denote  $\vf(\cdot, \cdot) $ the consistency function of the PF ODE.
Assume the ODE solver $\Phi_{ODE}$ has  a local error bound of $O((t_{n+1} - t_{n}))^{p+1}$ with $p\ge 1$.
Then, if we have $\mathcal{L}_{CD}^{N}(\vtheta, \Phi_{ODE}) = 0$, then

\centering{$  \sup\limits_{n,\vx} ||\vf_\theta(\vx_{t_n}, t_n) -   \vf(\vx_{t_n}, t_n) ||_2 = O((\Delta t)^p) $ }.
\label{lemma1}
\end{lemma}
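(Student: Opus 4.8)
The plan is to adapt the standard induction-over-discretization-points argument of \cite{song2023consistency}, since $\Phi_{ODE}$ here plays exactly the role it plays there. The central object is the pointwise error $e_n := \sup_{\vx} \|\vf_{\theta}(\vx, t_n) - \vf(\vx, t_n)\|_2$, and the goal is to establish $\max_n e_n = O((\Delta t)^p)$ by showing that $e_n$ obeys a recursion with vanishing base case and a per-step increment of order $p+1$.

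First I would extract the consequence of the zero-loss hypothesis. Because $\lambda(\cdot) > 0$ and the expectation defining $\mathcal{L}_{CD}^{N}$ ranges over all indices $n$ and over the full-support marginals $p(\vz_{t_n})$, the condition $\mathcal{L}_{CD}^{N}(\vtheta, \Phi_{ODE}) = 0$ forces the integrand to vanish pointwise: for every $n$ and every $\vx$ one has $\vf_{\theta}(\vx, t_n) = \vf_{\theta}(\hat{\vx}_{t_{n-1}}, t_{n-1})$, where $\hat{\vx}_{t_{n-1}} = \Phi_{ODE}(\vepsilon_{\theta}, \vx, t_n, t_{n-1})$ is the one-step solver estimate (I take $\vtheta^- = \vtheta$, since at zero loss the EMA target coincides with the network). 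The base case $e_1 = 0$ is then immediate from the boundary condition $\vf_{\theta}(\vx, \tau) = \vx = \vf(\vx, \tau)$ at $t_1 = \tau$.

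For the inductive step I would fix $\vx$ at time $t_n$ and insert the exact PF-ODE solution $\vx^{\mathrm{exact}}_{t_{n-1}}$ obtained by integrating backward from $(\vx, t_n)$ to $t_{n-1}$. Rewriting $\vf_{\theta}(\vx, t_n)$ via the zero-loss identity and using invariance of the true consistency function along exact ODE trajectories (so that $\vf(\vx, t_n) = \vf(\vx^{\mathrm{exact}}_{t_{n-1}}, t_{n-1})$), I would split the error as $[\vf_{\theta}(\hat{\vx}_{t_{n-1}}, t_{n-1}) - \vf_{\theta}(\vx^{\mathrm{exact}}_{t_{n-1}}, t_{n-1})] + [\vf_{\theta}(\vx^{\mathrm{exact}}_{t_{n-1}}, t_{n-1}) - \vf(\vx^{\mathrm{exact}}_{t_{n-1}}, t_{n-1})]$. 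The first bracket is bounded by $L_1 \|\hat{\vx}_{t_{n-1}} - \vx^{\mathrm{exact}}_{t_{n-1}}\|_2$ by Lipschitzness of $\vf_{\theta}$, and this distance is precisely the solver's local truncation error, hence $O((\Delta t)^{p+1})$; the second bracket is at most $e_{n-1}$ by definition. Taking $\sup_{\vx}$ yields the recursion $e_n \le e_{n-1} + L_1 \cdot O((\Delta t)^{p+1})$.

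Finally I would unroll from $e_1 = 0$ to get $e_n \le (n-1) L_1 \cdot O((\Delta t)^{p+1}) \le N L_1 \cdot O((\Delta t)^{p+1})$, and invoke $N = O((\Delta t)^{-1})$ (since $N \Delta t \ge T - \tau$ is order one) to conclude $\sup_n e_n = O((\Delta t)^p)$. The main obstacle I anticipate is the bookkeeping in the decomposition: one must route the comparison through the exact ODE solution $\vx^{\mathrm{exact}}_{t_{n-1}}$ so that Lipschitzness of $\vf_{\theta}$ absorbs the local solver error while the induction hypothesis absorbs the remaining discrepancy. Choosing the wrong intermediary, for instance comparing at $\hat{\vx}_{t_{n-1}}$ through $\vf$ (which is not assumed Lipschitz), breaks the argument. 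The other delicate point is confirming that the per-step errors accumulate only linearly in $N$, which is exactly what converts the local order $p+1$ into the global order $p$.
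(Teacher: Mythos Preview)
Your proposal is correct and follows essentially the same argument as the paper (which reproduces \cite{song2023consistency}): extract the pointwise identity from zero loss, route the error through the exact ODE solution at the preceding timestep, apply Lipschitzness of $\vf_\theta$ to absorb the local solver error $O((\Delta t)^{p+1})$, and unroll the recursion across $N = O((\Delta t)^{-1})$ steps to obtain the global $O((\Delta t)^p)$ bound. The only cosmetic difference is that you define $e_n$ as a supremum over $\vx$ whereas the paper tracks a single trajectory, but the decomposition and bookkeeping are identical.
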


\begin{proof}[Proof of lemma \ref{lemma1} in \cite{song2023consistency}]
From  $\mathcal{L}_{CD}^{N}(\mathbf{\theta}, \Phi_{ODE}) = 0$, derive
\[
\vf_\theta(\mathbf{x_{t_{n+1}}}, t_{n+1}) \equiv \vf_\theta({\hat{\vx}_{t_{n}}^{\Phi_{ODE}}}, t_{n}).
\]
\\
Denote

\begin{align*}
\mathbf{e_{n+1}} & =  \mathbf{f_\theta}(\mathbf{x_{t_{n+1}}}, t_{n+1}) - \mathbf{f}(\mathbf{x_{t_{n+1}}}, t_{n+1})  \\
& = \mathbf{f_\theta}(\mathbf{\hat{x}_{t_n}^{\Phi_{ODE}}}, t_n) - \mathbf{f_\theta}(\mathbf{x_{t_{n}}}, t_{n}) + \mathbf{f_\theta}(\mathbf{x_{t_{n}}}, t_{n})  - \mathbf{f}(\mathbf{x_{t_{n+1}}}, t_{n+1})     \\
& = \mathbf{f_\theta}(\mathbf{\hat{x}_{t_n}^{\Phi_{ODE}}}, t_n) - \mathbf{f_\theta}(\mathbf{x_{t_{n}}}, t_{n})+ \mathbf{f_\theta}(\mathbf{x_{t_{n}}}, t_{n})  - \mathbf{f}(\mathbf{x_{t_{n}}}, t_{n})      \\
& =   \mathbf{f_\theta}(\mathbf{\hat{x}_{t_n}^{\Phi_{ODE}}}, t_n) - \mathbf{f_\theta}(\mathbf{x_{t_{n}}}, t_{n}) + \mathbf{e_{n}}.
\end{align*}
\\
Recall that $\mathbf{f_{\theta}(\cdot, \cdot)}$ has Lipschitz constant $L_1$, 
\begin{align*}
  || e_{n+1} ||_2 &\leq ||e_{n} ||_2 + L_1||\mathbf{\hat{x}_{t_n}^{\Phi_{ODE}}} -  \mathbf{x_{t_{n}}} ||_2 \\
  & = ||e_{n} ||_2 + L_1\cdot O((t_{n+1} - t_{n} )^{p+1}) \\
  & = ||e_{n} ||_2 +  O((t_{n+1} - t_{n} )^{p+1}).
\end{align*}
\\
With the boundary condition where \[ 
\mathbf{e_{1}}  =  \mathbf{f_\theta}(\mathbf{x_{t_{1}}}, t_{1}) - \mathbf{f}(\mathbf{x_{t_{1}}}, t_{1}) = \mathbf{x_{t_{1}}} - \mathbf{x_{t_{1}}} = 0,
\]
\\
then 
\begin{align*}
    ||e_n || &\leq ||e_1|| +  \sum\limits_{i=1}^{i=n-1} O((t_{i+1} - t_{i})^{p+1}) \\
            & \leq O(({\Delta t})^p) (T - \tau) = O(({\Delta t})^p).
\end{align*}

\noindent With this, the proof for lemma \ref{lemma1} is completed. 
\label{proof lemma 1}
\end{proof}

Then we provide the proof for convergence of using SDE solvers in CD. 
\begin{proof}[Proof of \cref{proof sde}] 
Consider the reverse SDE,
\begin{equation}\label{proof re sde}
    \textbf{SDE: \qquad} \\
     \mathrm{d} {\vx_t} = [f_t \vx_t + \frac{g^2_t }{\sigma_t} \mathbf{\epsilon_{\theta}}(\vx_t, t)]  \mathrm{d}t  + g_t \mathrm{d}\bar{\vw}_t . 
\end{equation}

\noindent Following \cite{lu2022dpm++}, we reparameterize the SDE with $\lambda_t =  \log \frac{\alpha_t}{\sigma_t}$, $g^2_t := \frac{\mathrm{d}\sigma_t^2}{\mathrm{d}t} - 2\frac{\mathrm{d log}\alpha_t}{\mathrm{d}t}\sigma_t^2 = -2\sigma^2_t \frac{\mathrm{d}\lambda_t}{\mathrm{d}t} $ and $g_t = \sigma_t \sqrt{-2\frac{\mathrm{d}\lambda_t}{\mathrm{d}t}}$, $\mathrm{d}\mathbf{w_{\lambda}} := \sqrt{-\frac{\mathrm{d}\lambda_t}{\mathrm{d}t}}  \mathrm{d}\bar{\mathbf{w}}_t $.

\noindent Derive the integration of \cref{proof re sde} from $\vx_{t_n}$ to $\vx_{t_m}$, which is

\begin{equation} \label{exact sde}
    \mathbf{x_{t_m}} = \underbrace{\frac{\alpha_{t_m}}{\alpha_{t_n}}\mathbf{x_{t_n}}}_{(1)} - 
    \underbrace{2\alpha_{t_m}\int_{\lambda_{t_n}}^{\lambda_{t_m}} e^{-\lambda}\mathbf{\hat{\epsilon}_{\theta}}(\mathbf{\hat{x}_{\lambda}}, \lambda) \mathrm{d}\lambda}_{(2)} +  
    \underbrace{\sigma_{t_n}\sqrt{e^{2(\lambda_{t_m} - \lambda_{t_n})} - 1} \vz_{t_n}}_{(3)} ,
\end{equation}
where $\vz_{t_n} \sim \mathcal{N}(\mathbf{0}, \mathbf{I}) $.
The first-order DPM SDE solver takes the form of
\begin{equation}\label{dpm sde}
    \mathbf{\hat{x}^{\Phi_{SDE}}_{t_m}} = \underbrace{\frac{\alpha_{t_m}}{\alpha_{t_n}} \mathbf{x_{t_n}}}_{(1)} - \underbrace{2\sigma_{t_m} (e^{\lambda_{t_m} - \lambda_{t_n}} - 1) \mathbf{\epsilon_{\theta}}(\mathbf{x_{t_n}}, t_n)}_{(2)} + \underbrace{\sigma_{t_n}\sqrt{e^{2(\lambda_{t_m} - \lambda_{t_n}}) -1} \mathbf{z_{t_m}}}_{(3)}.
\end{equation}

\noindent Thus we can bound the local error between \cref{dpm sde} and \cref{exact sde}.
Since the (1) term can be calculated exactly, the total errors $\Delta$ between \cref{dpm sde} and \cref{exact sde} can be divided into the errors $\Delta_2$ in (2) term and the errors $\Delta_3$ in (3) term

\begin{equation} \label{total error}
|| \vx_{t_m} - {\hat{\vx}^{\Phi_{SDE}}_{t_m}} ||_2 = ||\Delta ||_2 = || \Delta_2 + \Delta_3 ||_2 \leq ||\Delta_2 ||_2 + || \Delta_3 ||_2 .
\end{equation}

\noindent Denote $h_{t_n}:=\lambda_{t_m} - \lambda_{t_n}$ and $\Delta_{h}:= \max\limits_{n\in [1, N]} h_{t_n}$. 
Consider the $|| \Delta_2||$ term, we can estimate the error $\Delta_2$ by  the Peano remainder term of the Taylor  expansion,
\begin{equation}
    \int_{\lambda_{t_n}}^{\lambda_{t_m}} e^{-\lambda}\mathbf{\hat{\epsilon}_{\theta}}(\mathbf{\hat{x}_{\lambda}}, \lambda) \mathrm{d}\lambda = (e^{ -\lambda_{t_n}} - e^{-\lambda_{t_m}}) \mathbf{\epsilon_{\theta}}(\mathbf{x_{t_n}}, t_n) + O(({h_{t_n}})^2).
\end{equation}
With this, $|| \Delta_2||_2 $ is bounded by $O((\Delta_{h})^2)$.

\noindent Consider the $|| \Delta_3||$ term, we can rewrite 
\begin{align}
        \Delta_3 &= \sigma_{t_n}\sqrt{e^{2h_{t_n}} -1} \mathbf{z_{t_n}} -  \sigma_{t_n}\sqrt{e^{2h_{t_n}} -1} \mathbf{z_{t_m}} \\
        &= \frac{\sigma_{t_n}}{t_n}\sqrt{e^{2h_{t_n}} -1} t_n \mathbf{z_{t_n}} - \frac{\sigma_{t_n}}{t_m}\sqrt{e^{2h_{t_n}} -1} t_m \mathbf{z_{t_m}} .
\end{align}

\noindent Since $  \lim\limits_{h \rightarrow  0}  \frac{\sqrt{e^{2h -1}}}{h} = 1$,
we can derive that
\begin{equation} \label{delta3 1}
\Big|\frac{\sigma_{t_n}}{t_n}\sqrt{e^{2h_{t_n}} -1}  - \frac{\sigma_{t_n}}{t_m}\sqrt{e^{2h_{t_n}} -1} \Big| 
    = \Big|\frac{\sigma_{t_n}}{t_n t_m} \underbrace{\sqrt{e^{2h_{t_n}} -1}}_{O({h_{t_n}})} \underbrace{(t_n -t_m)}_{O({h_{t_n}})} \Big|,
\end{equation}

\noindent which is bounded by $O((\Delta_h)^2)$.

\noindent Thus, we can derive the bound of the $|| \Delta_3||$ term,

\begin{align}
        |\Delta_3| &= \Big|\frac{\sigma_{t_n}}{t_n}\sqrt{e^{2h_{t_n}} -1} t_n \mathbf{z_{t_n}} - \frac{\sigma_{t_n}}{t_m}\sqrt{e^{2h_{t_n}} -1} t_m \mathbf{z_{t_m}} \Big| \\
        & = \Big|\frac{\sigma_{t_n}}{t_n}\sqrt{e^{2h_{t_n}} -1} t_n \mathbf{z_{t_n}} - 
         \frac{\sigma_{t_n}}{t_m}\sqrt{e^{2h_{t_n}} -1} t_n \mathbf{z_{t_n}}
        + \frac{\sigma_{t_n}}{t_m}\sqrt{e^{2h_{t_n}} -1} t_n \mathbf{z_{t_n}}
        -\frac{\sigma_{t_n}}{t_m}\sqrt{e^{2h_{t_n}} -1} t_m \mathbf{z_{t_m}} \Big| \\
        &   \leq
       \Big|\frac{\sigma_{t_n}}{t_n}\sqrt{e^{2h_{t_n}} -1} -\frac{\sigma_{t_n}}{t_m}\sqrt{e^{2h_{t_n}} -1} \Big| \cdot \Big|t_n \mathbf{z_{t_n}} \Big| + \Big|\frac{\sigma_{t_n}}{t_m}\sqrt{e^{2h_{t_n}} -1} \Big| \cdot  \Big| t_n \mathbf{z_{t_n}} - t_m \mathbf{z_{t_m}} \Big|.
\end{align}

\noindent Notice that $t\vz$ is a Wiener process, therefore, we have $| t_n \mathbf{z_{t_n}} - t_m \mathbf{z_{t_m}} \Big| $  bounded by $|O(h_{t_n})|$.

\noindent Thus, we have
\begin{align}
    |\Delta_3| &\leq \underbrace{
       \Big|\frac{\sigma_{t_n}}{t_n}\sqrt{e^{2h_{t_n}} -1} -\frac{\sigma_{t_n}}{t_m}\sqrt{e^{2h_{t_n}} -1} \Big|}_{ O((h_{t_n})^2) \text{by } \cref{delta3 1} } \cdot \underbrace{\Big|t_n \mathbf{z_{t_n}} \Big|}_{\text{Bounded Random Variable}} + \underbrace{\Big|\frac{\sigma_{t_n}}{t_m}\sqrt{e^{2h_{t_n}} -1} \Big|}_{O(h_{t_n})} \cdot  \underbrace{\Big| t_n \mathbf{z_{t_n}} - t_m \mathbf{z_{t_m}} \Big|}_{O(h_{t_n})} \\
        & \leq O((h_{t_n})^2) .
\end{align}

\noindent With this, we prove that both $||\Delta_2||_2 $ and $||\Delta_3||_2$ are bounded by $O((\Delta_{h})^2) $. Thus we can verify the bound $O((\Delta_h)^2) $ of the total local errors $||\Delta||_2$ in \cref{total error}.

By lemma \ref{lemma1}, the SDE solver has a local error bound $O((\Delta_{h})^2) $, and $O((\Delta_{h})^2)= O((\Delta_t)^2)$,
we have $  \sup\limits_{n,\vx} ||\vf_\theta(\vx_{t_n}, t_n) -   \vf(\vx_{t_n}, t_n) ||_2 = O((\Delta t)^p) $, p=1, which completes the proof.

\end{proof}

\section{Multi-step Sampling of Stochastic Consistency Distillation Model}
We present the multi-step sampling algorithm of SCott. 
SCott can generate samples from initial Gaussian noise with one step.
As a consistency model, SCott enjoys 
improving outcomes by alternating denoising and noise injection at inference time. Specifically, at the $n$-th iteration, we first inject noise to the previous predicting sample $\vz$ according to the forward diffusion $\hat{\vz}_{t_n} \sim \mathcal{N}(\alpha(t_n)\vz, \sigma(t_n)\mathbf{I})$. Then we predict the next $\vz$ using SCott. Such a procedure can improve sample quality. Note that $\bar{\vf}_{\vtheta}$ denotes the function that produces Gaussian mean with the model $\vf_{\vtheta}$. The pseudo-code is provided in \cref{alg:sampling}.

\begin{algorithm}[htb] 
   \caption{Multi-step Stochastic Consistency Model Sampling.}
\begin{algorithmic}
    \STATE{\bfseries Input:} SCott model $\vf_\vtheta$, sequence of timesteps $t_N >  t_{N-1} >  \cdot\cdot\cdot > t_2 > t_1$, noise schedule $\alpha_t, \sigma_t$, CFG scale $\omega$, text condition $\vc$ , initial noise $\hat{\vz}_T$\\ 
    $\vz \leftarrow \bar{\vf}_\vtheta(\hat{\vz}_T, c, \omega ,T) $
   \FOR{$n=N$ to $1$}
    \STATE Sample  $ \hat{\vz}_{t_n} \sim \mathcal{N}(\alpha(t_n)\vz, \sigma(t_n)\mathbf{I}) $ \\
   Sample $\vz \leftarrow \bar{\vf}_\vtheta(\hat{\vz}_{t_n}, c, \omega, t_n)$ \\
    \ENDFOR\\
    \STATE{\bfseries Output:} $\vz$

\end{algorithmic}
    \label{alg:sampling}
\end{algorithm}

\section{Empirical investigation of using SDE solvers for CD.} 
For empirical investifation of using SDE solvers for CD,
we experiment on a simple 1D task---characterizing a Gaussian mixture distribution with 3 uniform components $\pdata(\vx) := \frac{1}{3} \mathcal{N}(\vx; -1.5, 0.2^2\mathbf{I}) + \frac{1}{3}\mathcal{N}(\vx; 0, 0.2^2\mathbf{I})  + \frac{1}{3}\mathcal{N}(\vx; 1.5, 0.2^2\mathbf{I})$. 
We first train a DM, instantiated as a 4-layer MLP, with a standard Gaussian prior on it, and then perform CD with SDE and ODE solvers respectively. 
In particular, we select the EDM solver~\cite{karras2022elucidating} with the noise coefficient equaling 0 as the ODE solver and the EDM solver with that equaling 2 as the SDE solver. 

We plot the one-step sampling results of the trained models in \cref{fig:toy}, where the Gaussian noise and sampling trajectory are also displayed. 
As shown, the stochastic CD can successfully map noise points to suitable target samples, even with higher accuracy than the vanilla CD. 

One possible explanation for this phenomenon is that, for a given $t_m$, the sampled states from the SDE solver are likely to surround the deterministic states sampled from the ODE solver. 
This situation can be viewed as a form of data augmentation, which helps the student model better align and rectify its predictions.
These results provide concrete evidence that we can indeed use SDE solvers for CD. 
\begin{figure}[h]
    \centering
    \includegraphics[width=0.6\linewidth]{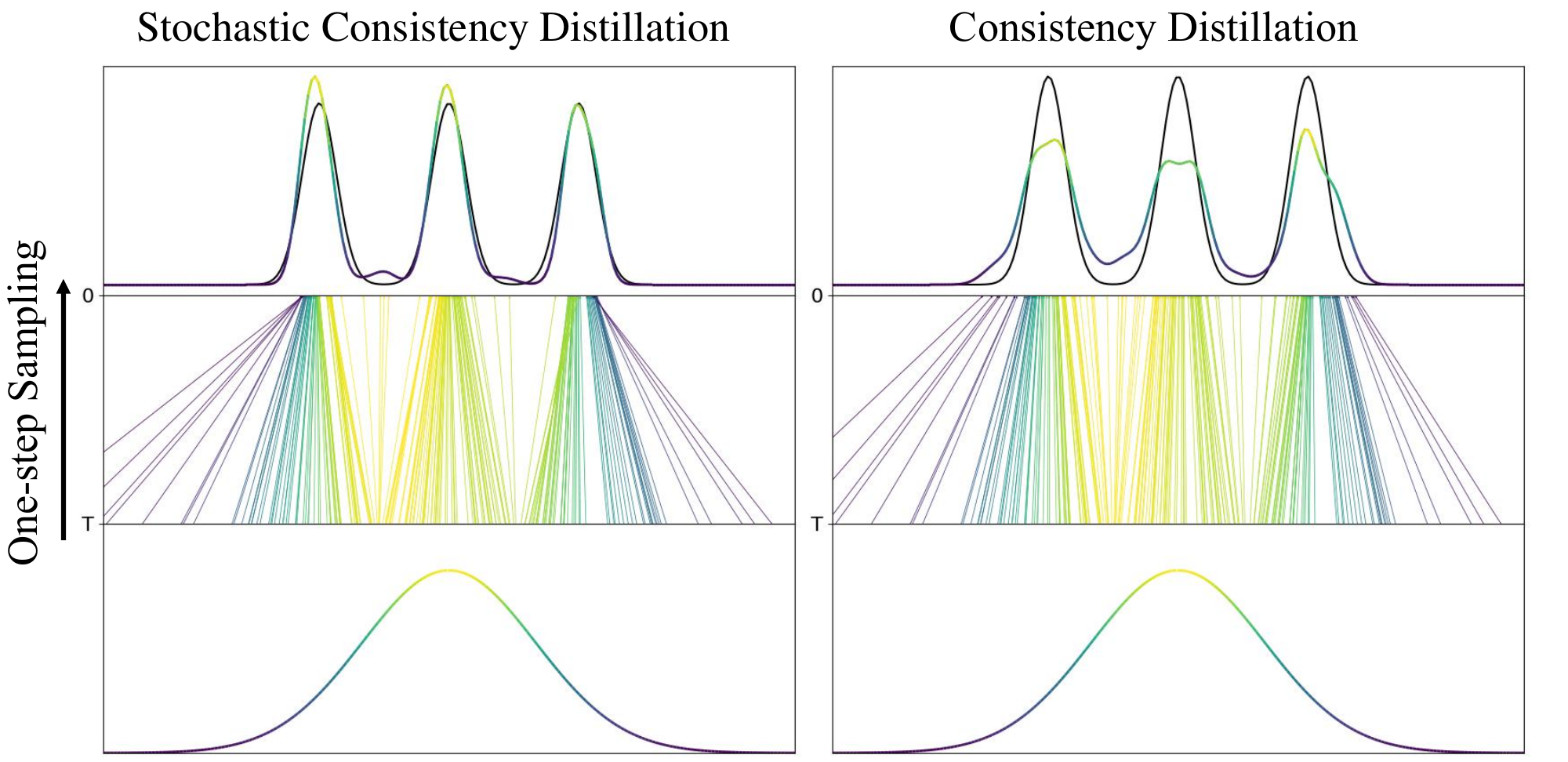}
    \caption{Comparison of stochastic CD (based on SDE solvers) and vanilla CD (based on ODE solvers) on a synthetic generation task. The other experimental settings for the two cases are identical. }
    \label{fig:toy}
\end{figure}

\section{Implementation Details}
\textbf{Trainig.}
We use LAION-Aesthetics-
6+ subset of LAION-5B~\cite{schuhmann2022laion} to train our model.  We train the model with 4 A100 GPUs and a batch size of 40 for 40,000 iterations. For SD training, the learning rate is 8e-6, and the learning rate is 2e-5 when training discriminator. 

\noindent\textbf{Model.}
We use the publicly available Realistic-Vision-v51 (RV5.1) as the teacher, which is obtained by fine-tuning the pre-trained Stable Diffusion-V1.5 (SD1.5)~\cite{rombach2022high}. Both the student and discriminator backbone are initialized by the teacher. When compared to other methods, we use SD1.5 as the teacher for fair comparison. In addition, we apply EMA with a coefficient of 0.995 for the student model. 

\noindent\textbf{Algorithm.} Algorithm \ref{alg:stoc} details our stochastic consistency distillation training procedure.
\begin{algorithm}[h] 
   \caption{Stochastic Consistency Distillation}
\begin{algorithmic}
\label{alg:stoc}
    \STATE{\bfseries Input:} dataset $\mathcal{D}$, initial consistency model parameter $\vtheta$, discriminator parameter $\vphi$, SDE solver $\Phi_{SDE}$, noise schedule $\alpha_t, \sigma_t$, sampling steps $h$, adversarial loss $\mathcal{L}_{adv}(\cdot, \cdot)$, 
     weight $\lambda(\cdot)$,  CFG scale $\omega$, 
     EMA rate $\mu$, loss coefficient $\lambda_{adv}$ \\
     $\vtheta^{-} \leftarrow \vtheta$
   \REPEAT
   \STATE Sample  $(\vx_0, \vc) \sim \mathcal{D} $ \\
   Sample $\vz_{t_n} \sim \mathcal{N} (\alpha_{t_n} \vx_0, \sigma^2_{t_n} \mathbf{I}) $ \\
    $\hat{\vz}_{t_{m}} \leftarrow  \Phi_{SDE}(\vepsilon_{\theta},\vz_{t_{n}}, t_{n},  t_{m}, h) 
   $
   \\
    Calculate $\mathcal{L}_{CD}(\vtheta,\vtheta^{-}; \Phi_{SDE} )$ with \cref{eq:cd} \\
    Calculate $ \mathcal{L}_{adv}(\vtheta, \vphi)$ with \cref{eq: gan} \\
    Calculate $\mathcal{L}_{SCott}(\vtheta, \vphi)$ with \cref{eq: scott} \\
 $\vtheta \leftarrow \vtheta -  \frac{\partial }{\partial \vtheta} \mathcal{L}_{SCott}(\vtheta, \vphi)$ \\
 $\vphi \leftarrow \vphi - \frac{\partial}{\partial \vphi}\mathcal{L}_{SCott}(\vtheta, \vphi)$ \\
 $\vtheta^{-} \leftarrow  $ stop\_grad$(\mu \vtheta^{-} + (1 - \mu) \vtheta)$
   \UNTIL{convergence}
    
\end{algorithmic}
\end{algorithm}

\newpage
\section{Additional Results }
\subsection{Qualitative Results For Table 2 in the main part of our paper} 


\begin{figure*}[th]
\centering
\textbf{DDIM (50 steps) \quad DPM++ (25 steps) \quad LCM (2 steps)\quad InstaFlow (1 step)\quad SCott (2 steps)}\\
\includegraphics[width=0.16\textwidth]{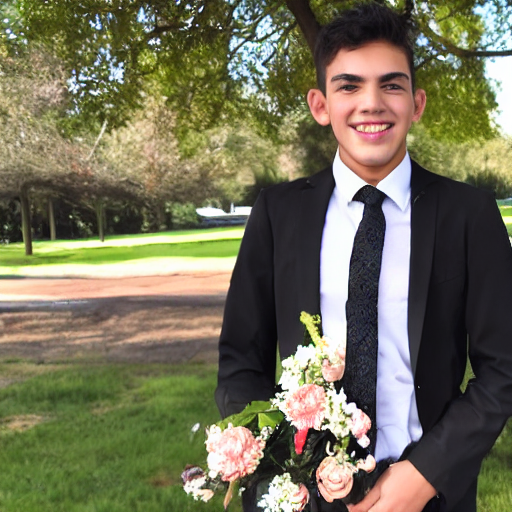}
\includegraphics[width=0.16\textwidth]{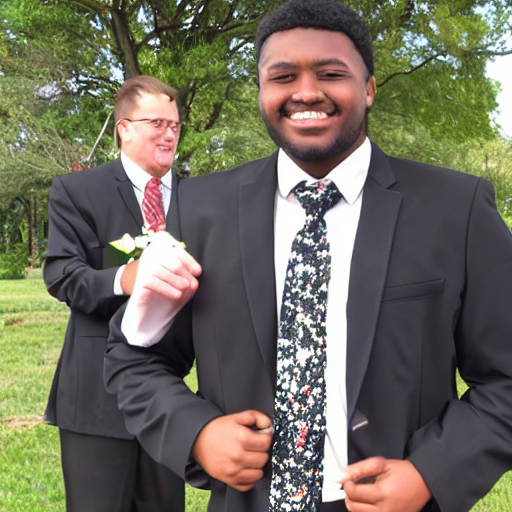}
\includegraphics[width=0.16\textwidth]{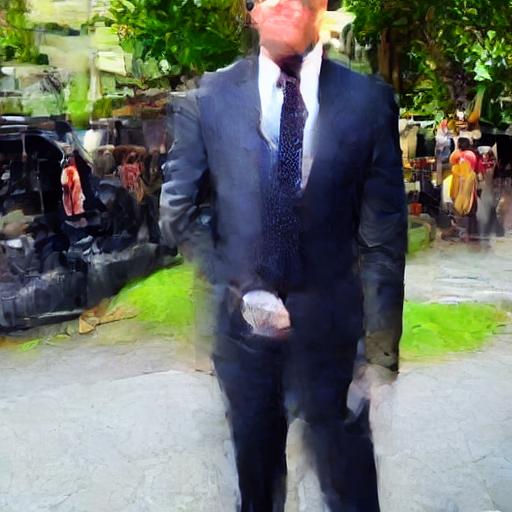}
\includegraphics[width=0.16\textwidth]{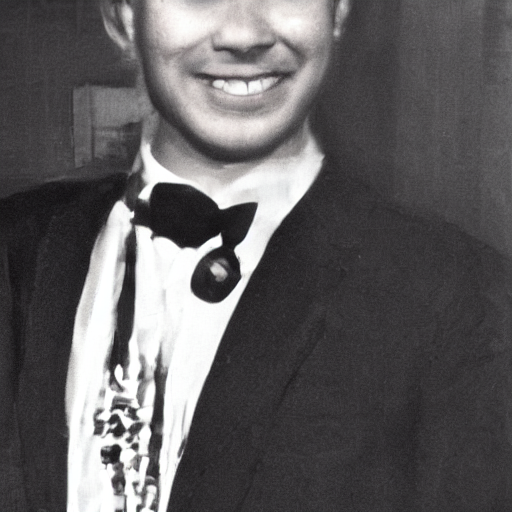}
\includegraphics[width=0.16\textwidth]{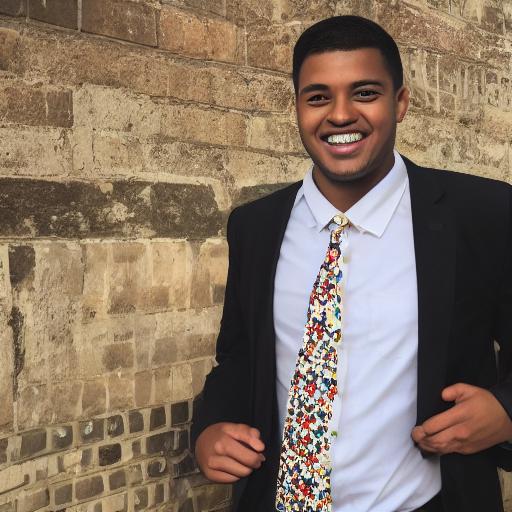}

\small{A young man wearing black attire and a flowered tie is standing and smiling.} \\
\includegraphics[width=0.16\textwidth]{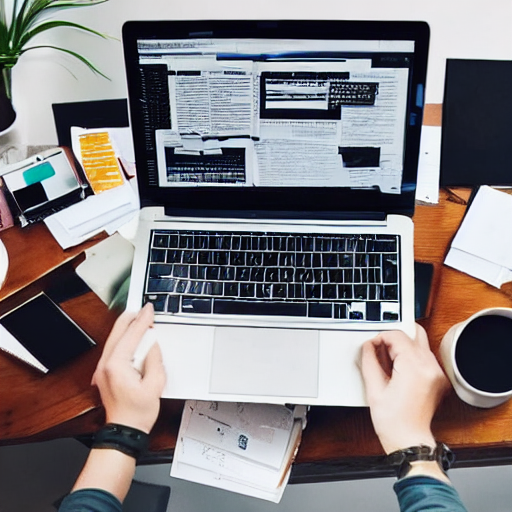}
\includegraphics[width=0.16\textwidth]{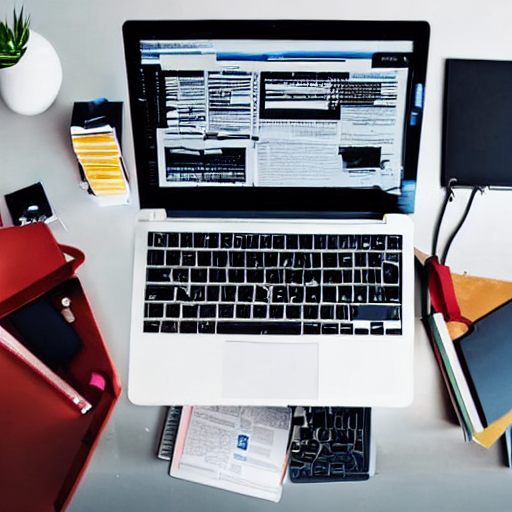}
\includegraphics[width=0.16\textwidth]{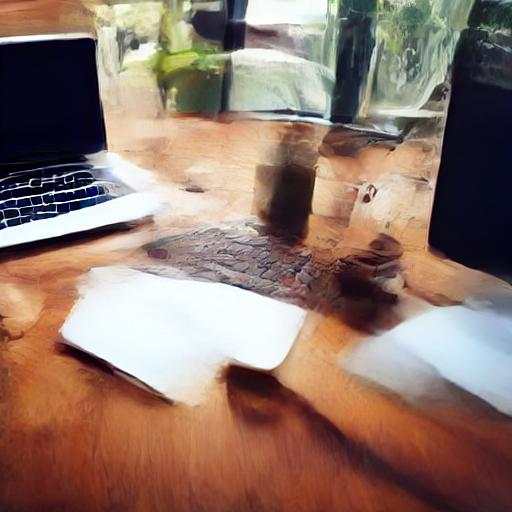}
\includegraphics[width=0.16\textwidth]{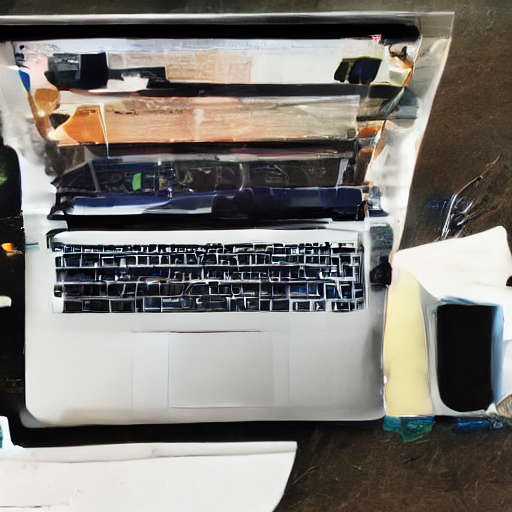}
\includegraphics[width=0.16\textwidth]{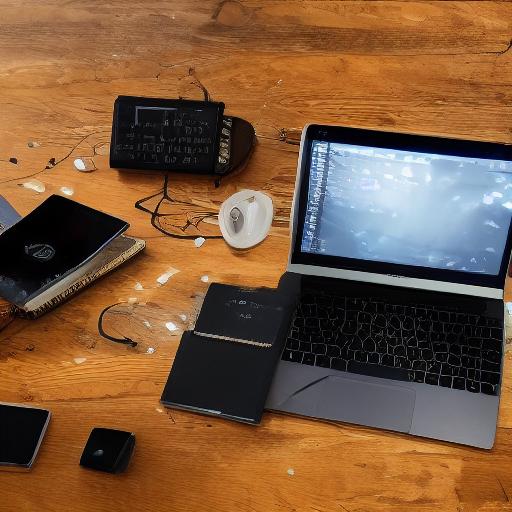}

\small{A picture of a messy desk with an open laptop.} \\
\includegraphics[width=0.16\textwidth]{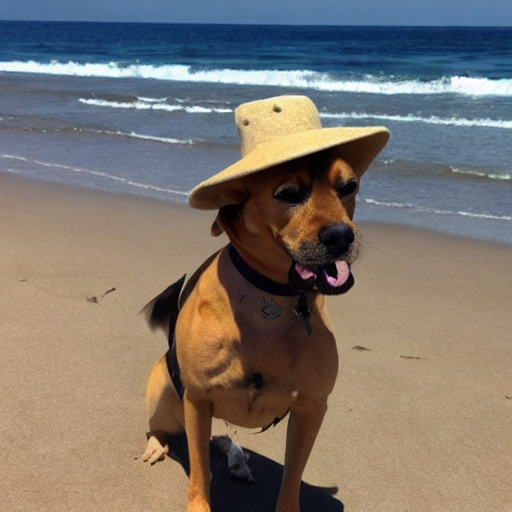}
\includegraphics[width=0.16\textwidth]{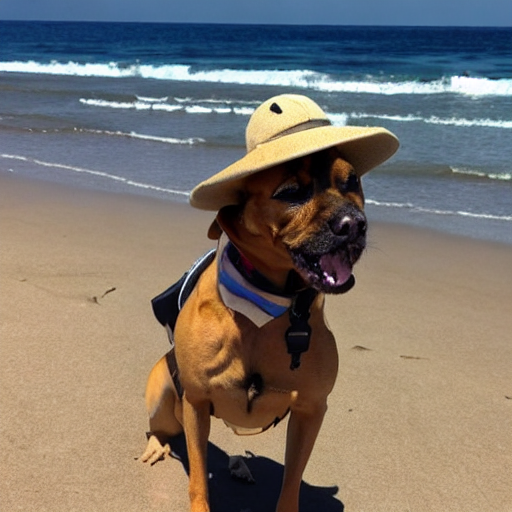}
\includegraphics[width=0.16\textwidth]{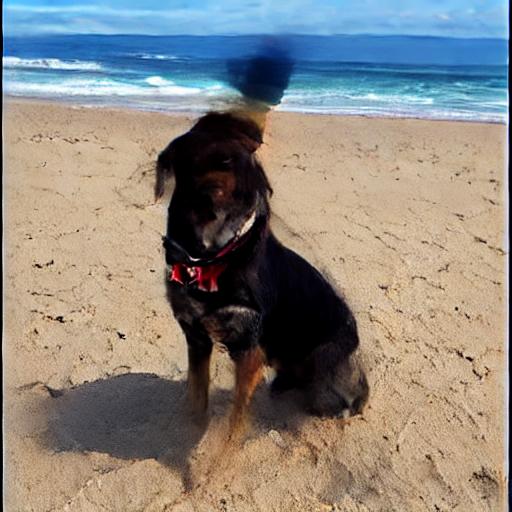}
\includegraphics[width=0.16\textwidth]{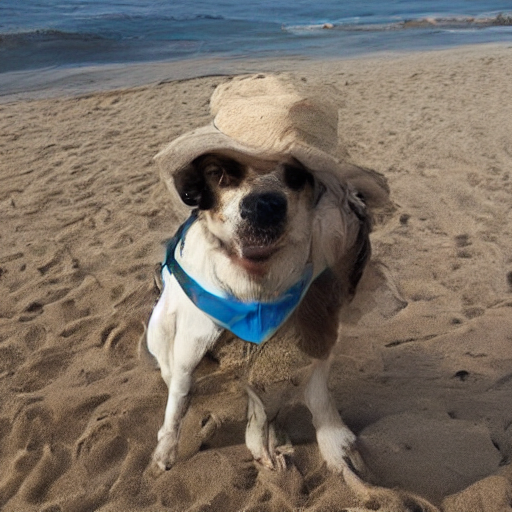}
\includegraphics[width=0.16\textwidth]{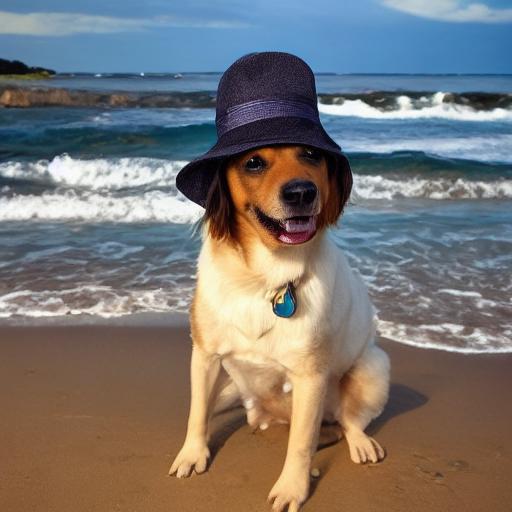}

\small{A dog wearing a hat on the beach.}\\
\includegraphics[width=0.16\textwidth]{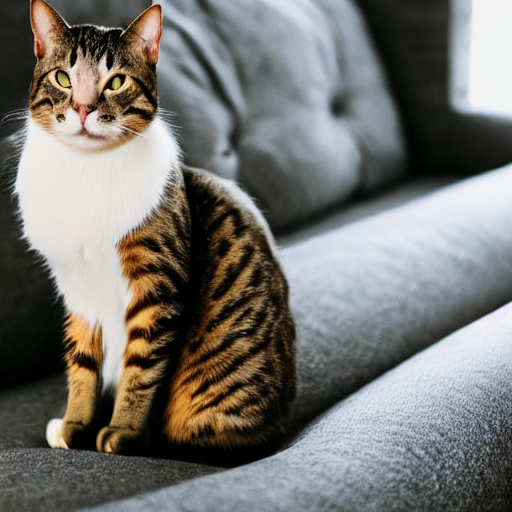}
\includegraphics[width=0.16\textwidth]{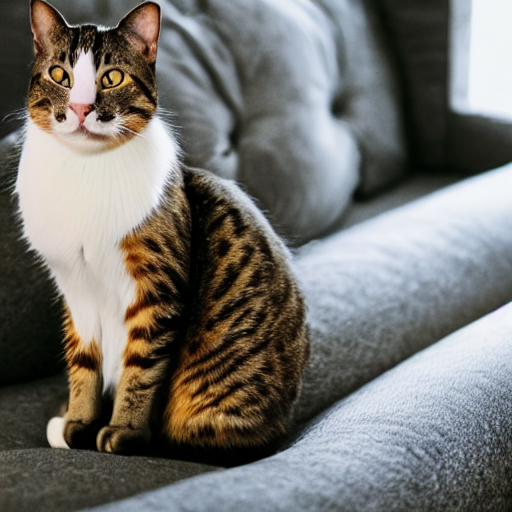}
\includegraphics[width=0.16\textwidth]{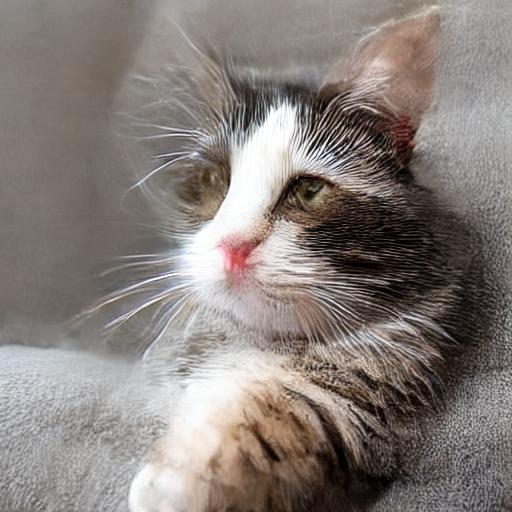}
\includegraphics[width=0.16\textwidth]{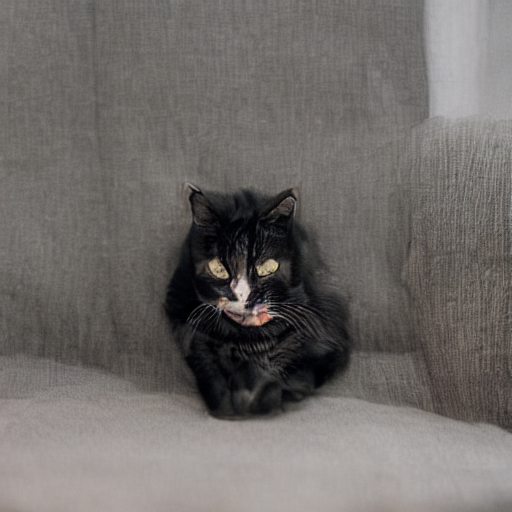}
\includegraphics[width=0.16\textwidth]{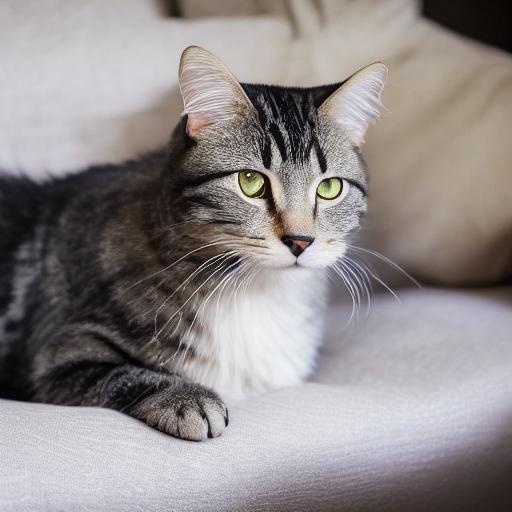}

\small{A cat sitting on couch.}\\
\includegraphics[width=0.16\textwidth]{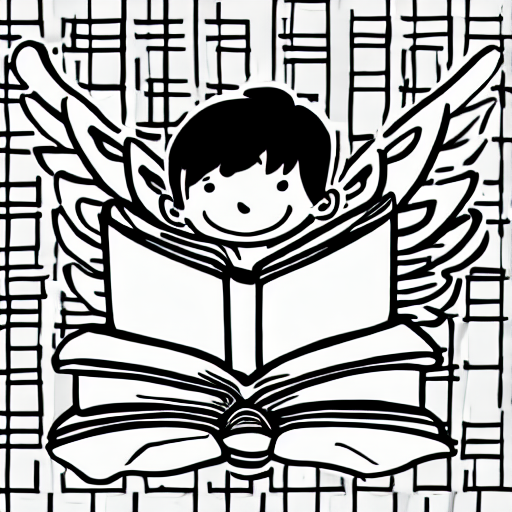}
\includegraphics[width=0.16\textwidth]{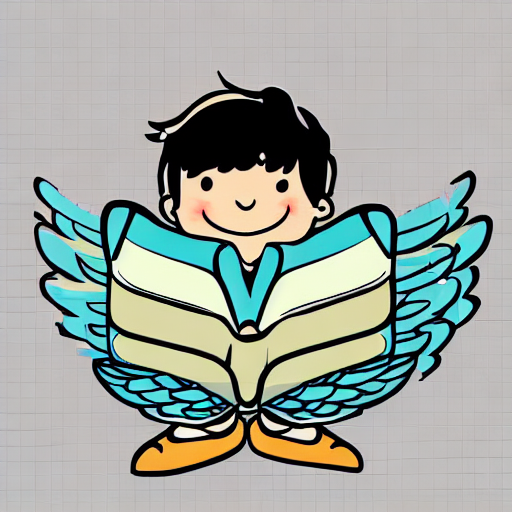}
\includegraphics[width=0.16\textwidth]{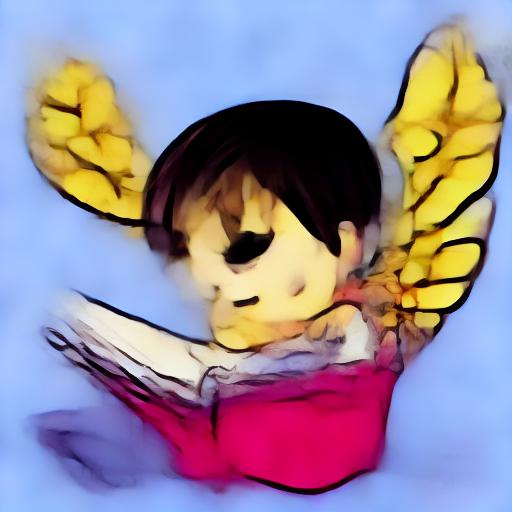}
\includegraphics[width=0.16\textwidth]{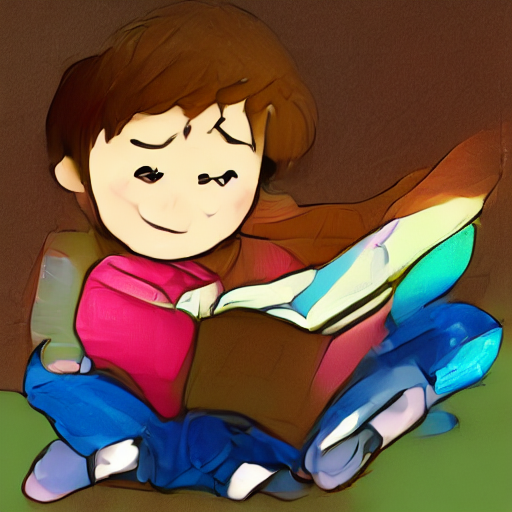}
\includegraphics[width=0.16\textwidth]{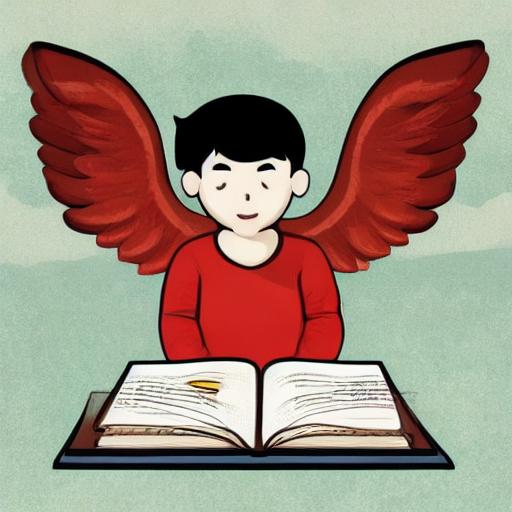}

\small{A cute boy with wings is reading book, Cartoon Drawings, high details.} 
\caption{Qualitative comparisons of SCott against competing methods and DDIM, DPM++ baselines. All models are initialized by SD1.5.}
  \label{fig:sd1.5}
\end{figure*} 

We provide qualitative examples to benchmark our SCott against the state-of-the-art fast sampling methods, LCM~\cite{luo2023latent}, InstaFlow~\cite{liu2023instaflow}, and DDIM~\cite{song2020ddim}, DPM++~\cite{lu2022dpm++} baselines in \cref{fig:sd1.5}, corresponding to Table 2 in the main part of our paper. 

\subsection{Comparison on MSCOCO-2017 5K with Dreamshaper-v7.}

InstaFlow and LCM teams release the checkpoints and inference codes of InstaFlow+dreamshaper-7\footnote{\url{https://github.com/gnobitab/InstaFlow?tab=readme-ov-file}.} and LCM-dreamshaper-7\footnote{\url{https://huggingface.co/latent-consistency/lcm-lora-sdv1-5}.}, both of which are build on dreamshaper-v7 (DS-V7)\footnote{\url{https://huggingface.co/stablediffusionapi/dreamshaper-v7}.}. Therefore, we conduct further comparative analysis with LCM and InstaFlow based on DS-V7. Table \ref{table:ds} lists the metrics of different methods evaluated on MSCOCO-2017 5K. We observe our method surpasses InstaFlow-0.9B and LCM with respect to CS and CR, which again indicates our method has better text-to-image consistency and diversity.

\begin{table*}[h]
\caption{Comparisons with the state-of-the-art methods on MSCOCO-2017 5K in terms of FID,  CS, CR. All models are based on  DS-V7.  }
\label{table:ds}
\vskip 0.15in
\begin{center}
\begin{small}
\begin{sc}
\begin{tabular}{lcccccr}
\toprule
Method& Step & Time (s) & FID  $\downarrow$ & CS $\uparrow$  & CR $\uparrow $  \\
\midrule
DPM++~\cite{lu2022dpm++}&25&0.88&31.1&0.325&0.8564 \\
DDIM~\cite{song2020ddim}&50&$-$&32.0&0.323&0.8435 \\
\hline 
LCM~\cite{luo2023latent}&4&0.21&41.1&0.300&0.7495\\
InstaFlow-0.9B~\cite{liu2023instaflow}&1&0.09&\textbf{24.9}&0.310&0.8630\\
SCott (Ours)&2&0.13&28.6&\textbf{0.318}&\textbf{0.8688}\\
\bottomrule
\end{tabular}
\end{sc}
\end{small}
\end{center}
\vskip -0.1in
\end{table*}

For FID, InstaFlow-0.9B~\cite{liu2023instaflow} shows lower value than DPM++~\cite{lu2022dpm++}, DDIM~\cite{song2020ddim} and our method. The probable reason is that zero-shot FID calculated on  MSCOCO-2017 5K  for evaluating visual
quality might not be the most reliable, as discussed in prior works~\cite{betzalel2022study,podell2023sdxl}. Therefore, we exploit qualitative assessment for intuitive comparisons. Figure \ref{fig:dsv7} demonstrates SCott's generated images are superior to InstaFlow and LCM by a substantial margin. More comparison results are shown in \cref{more comparison}.

\begin{figure}[htbp]
\centering
\textbf{DDIM (50 steps) \quad DPM++ (25 steps) \quad LCM (2 steps)\quad InstaFlow (1 step)\quad SCott (2 steps)}\\
\includegraphics[width=0.16\textwidth]{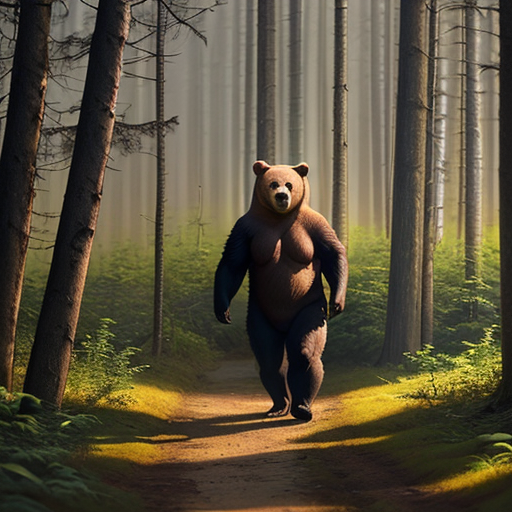}
\includegraphics[width=0.16\textwidth]{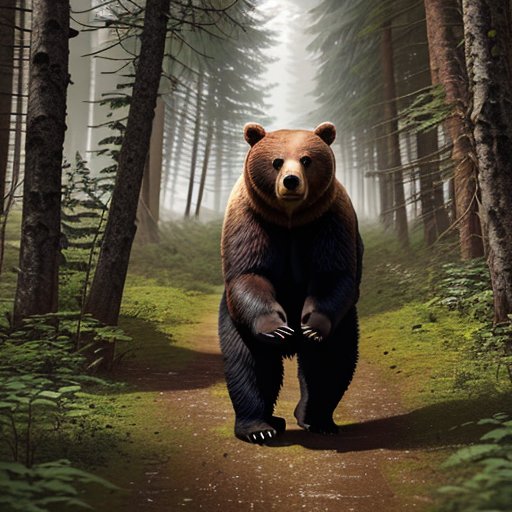}
\includegraphics[width=0.16\textwidth]{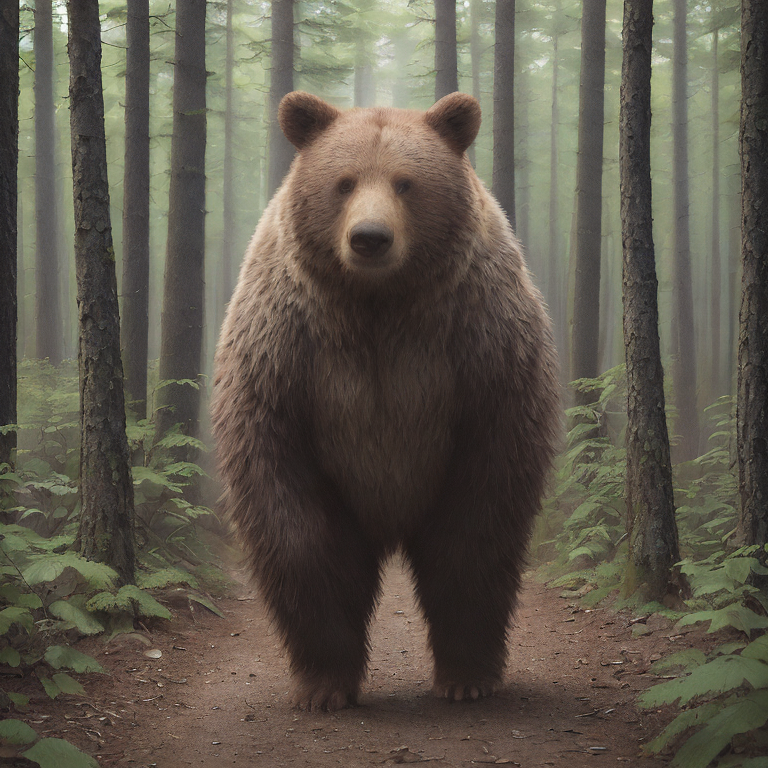}
\includegraphics[width=0.16\textwidth]{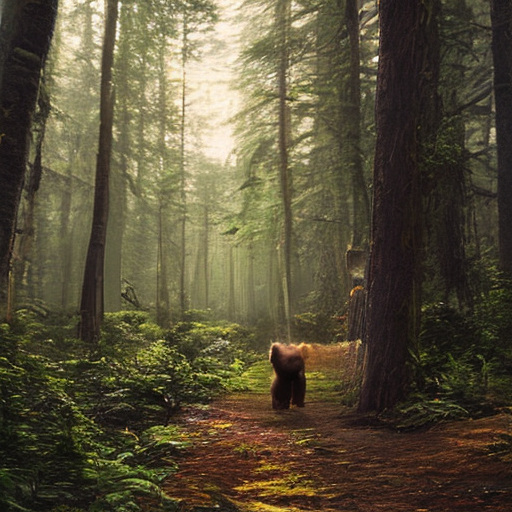}
\includegraphics[width=0.16\textwidth]{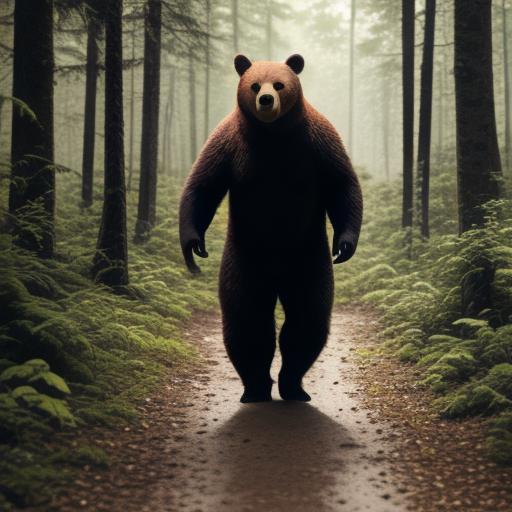}

\small{A bear walking in the forest in the early morning.} \\
\includegraphics[width=0.16\textwidth]{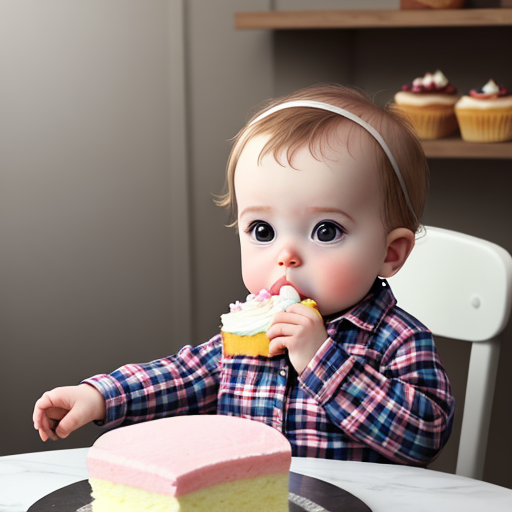}
\includegraphics[width=0.16\textwidth]{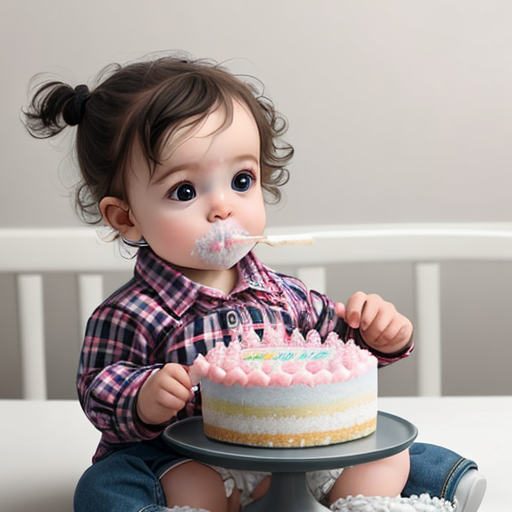}
\includegraphics[width=0.16\textwidth]{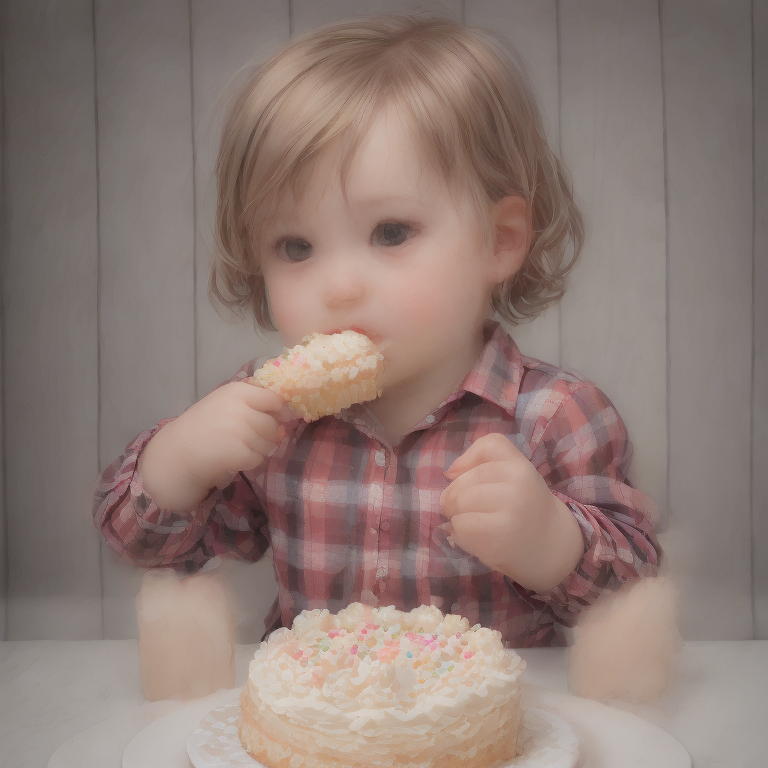}
\includegraphics[width=0.16\textwidth]{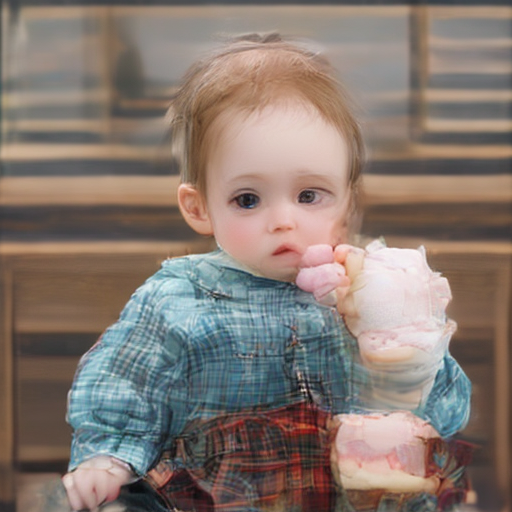}
\includegraphics[width=0.16\textwidth]{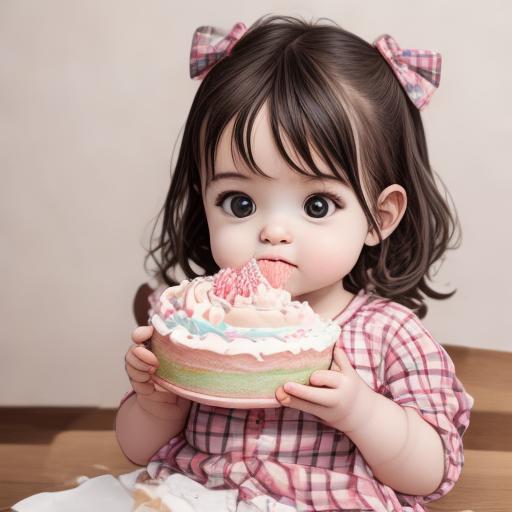}

\small{A Cute Puppy with wings, Cartoon Drawings, high details.} \\
\includegraphics[width=0.16\textwidth]{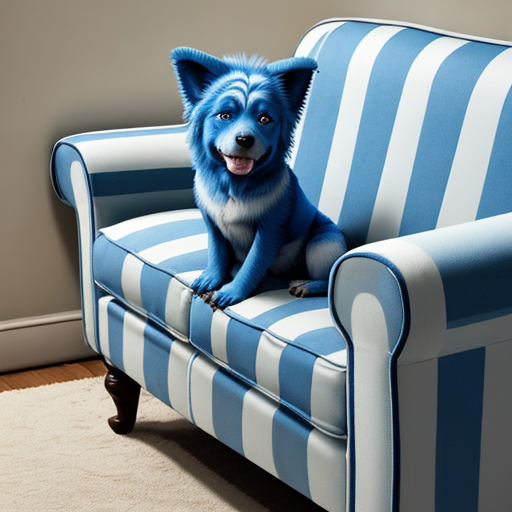}
\includegraphics[width=0.16\textwidth]{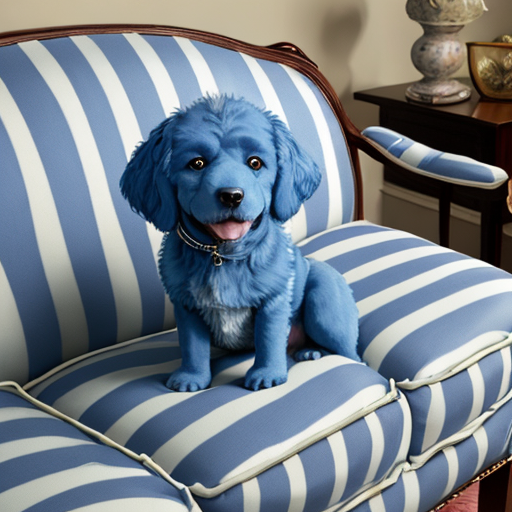}
\includegraphics[width=0.16\textwidth]{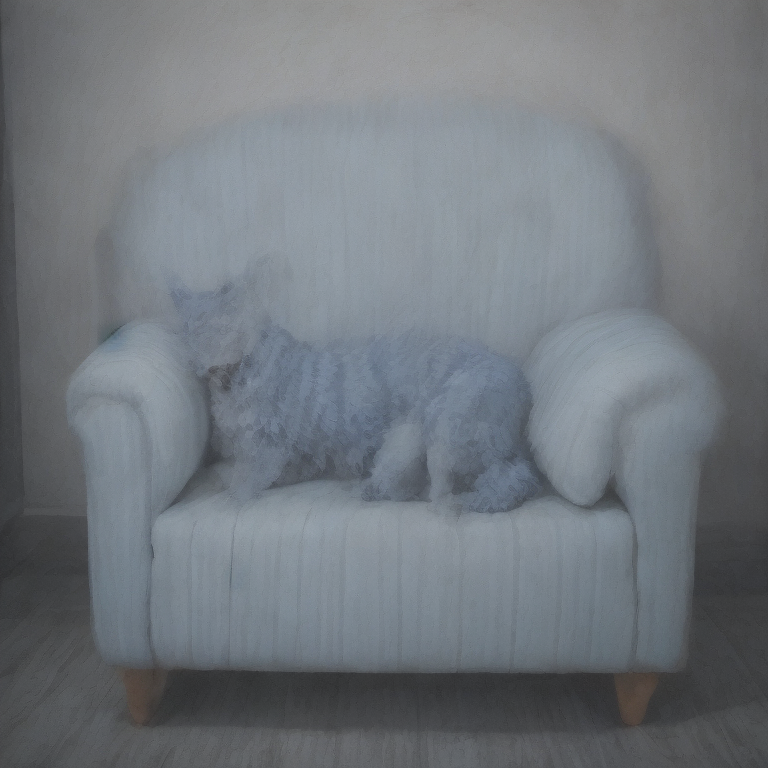}
\includegraphics[width=0.16\textwidth]{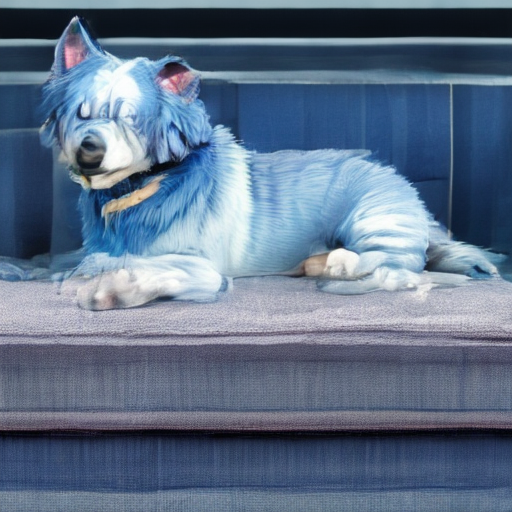}
\includegraphics[width=0.16\textwidth]{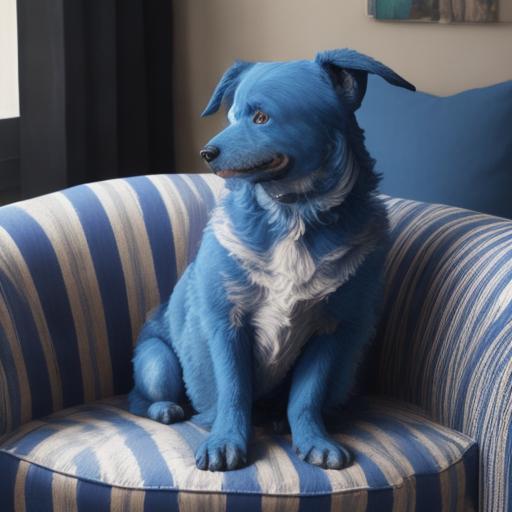}

\small{A blue dog sitting on a striped couch.}\\
\includegraphics[width=0.16\textwidth]{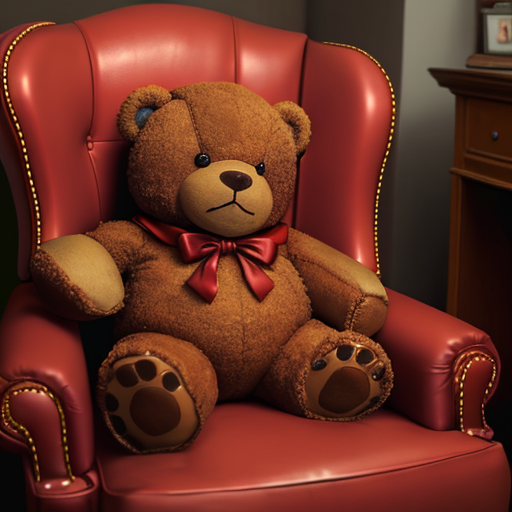}
\includegraphics[width=0.16\textwidth]{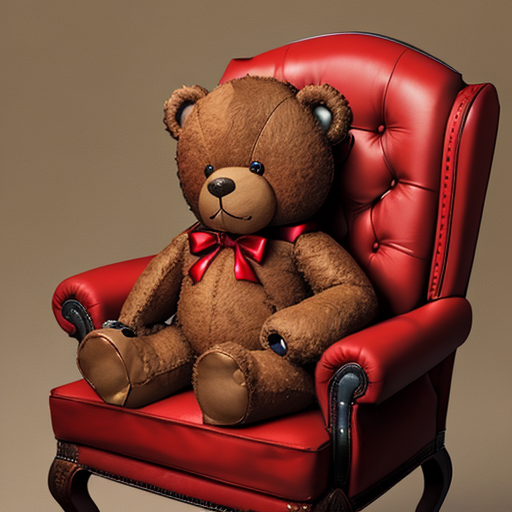}
\includegraphics[width=0.16\textwidth]{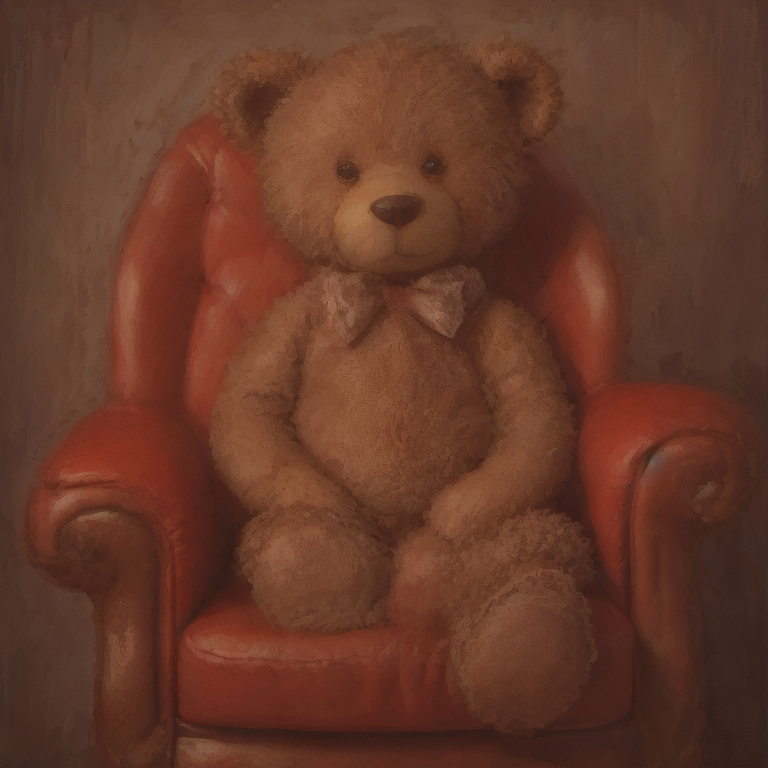}
\includegraphics[width=0.16\textwidth]{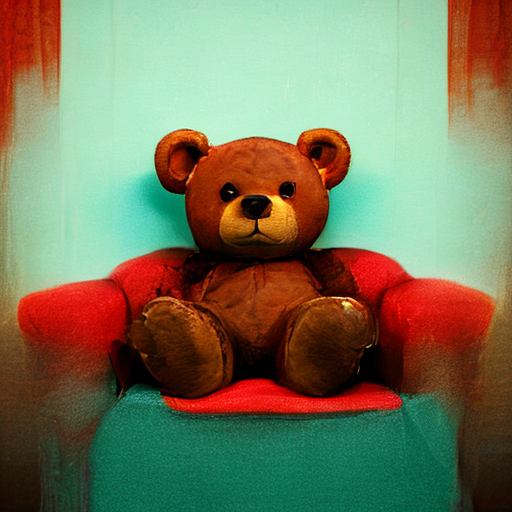}
\includegraphics[width=0.16\textwidth]{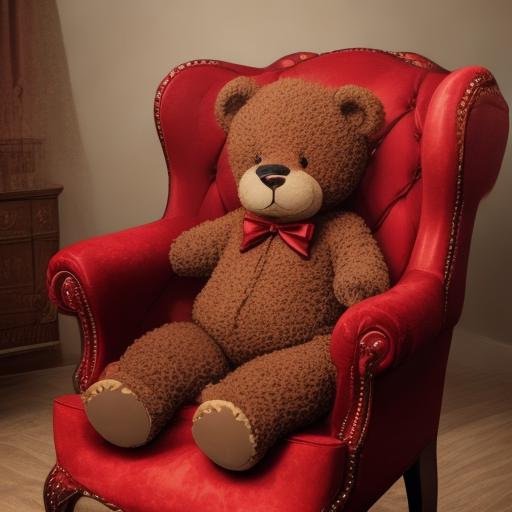}

\small{A brown teddy bear sitting in a red chair.}\\
\includegraphics[width=0.16\textwidth]{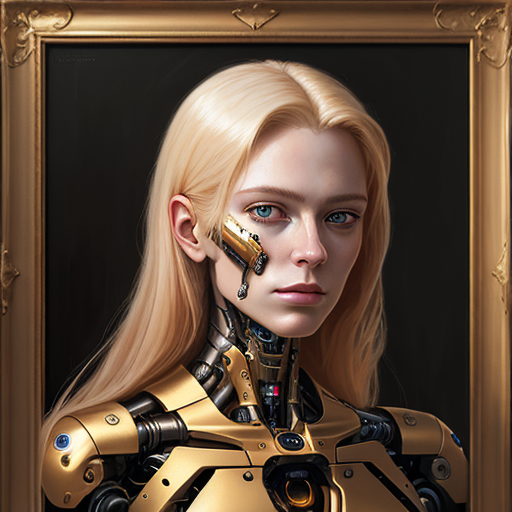}
\includegraphics[width=0.16\textwidth]{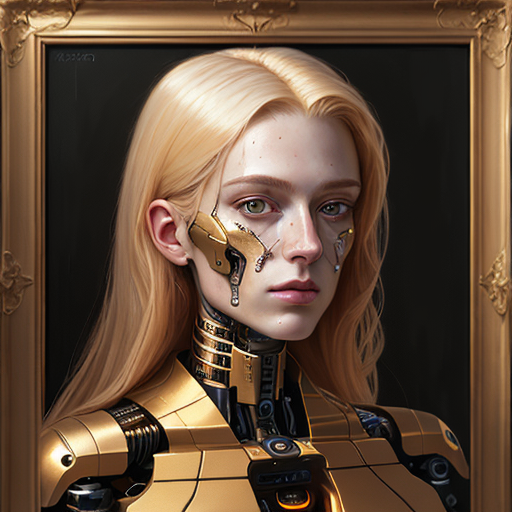}
\includegraphics[width=0.16\textwidth]{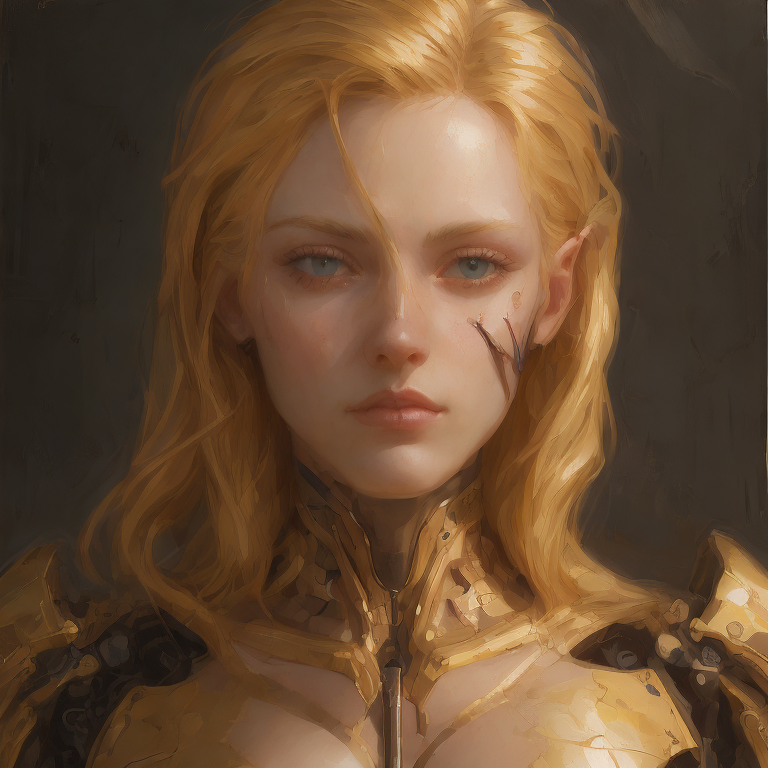}
\includegraphics[width=0.16\textwidth]{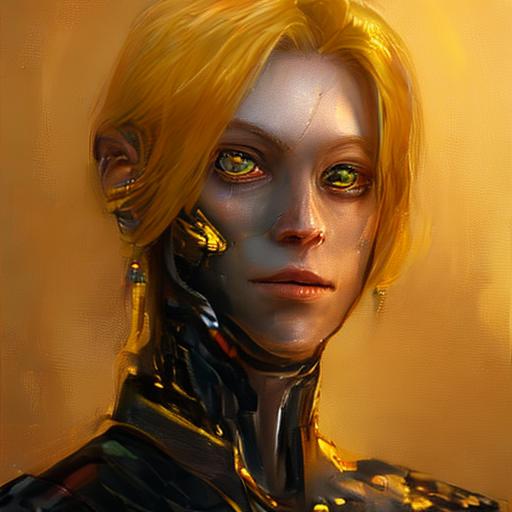}
\includegraphics[width=0.16\textwidth]{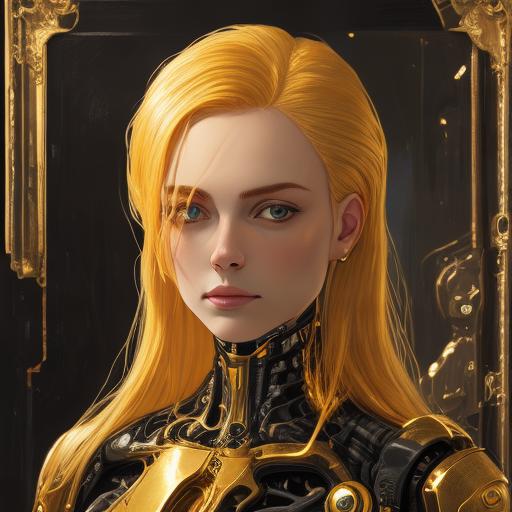}

\small{Self-portrait oil painting, a beautiful cyborg with golden hair, 8k.} 
\caption{Qualitative comparisons of SCott against competing methods and DDIM, DPM++ baselines. All models are initialized by DS-V7.}
\label{fig:dsv7}
\end{figure}


\subsection{More Ablation Studies}

\begin{table}[ht]
\caption{Performance comparison of our SCott on MSCOCO-2017 5K using different teachers with 2-step inference.} 
\label{table:teacher}
\vskip 0.15in
\begin{center}
\begin{small}
\begin{sc}
\begin{tabular}{lcccccr}
\toprule
Teacher & FID  $\downarrow$ & CS $\uparrow $  & CR $\uparrow$ \\
\midrule

SD1.5&\textbf{22.1}&0.308&\textbf{0.9169}\\
DS-V7&28.6&0.318&0.8688\\
RV5.1&26.2&\textbf{0.323}&0.8912\\
\bottomrule
\end{tabular}
\end{sc}
\end{small}
\end{center}
\vskip -0.1in
\end{table}

\textbf{Teacher type.} \cref{table:teacher} shows the ablation study on different teacher types. 
RV5.1 gets the highest CS value, indicating it achieves the highest text-to-image alignment. SD1.5 shows better FID than RV5.1 and DS-V7. This might be because RV5.1 and DS-V7 are fine-tuned from SD1.5 with high-quality images, losing some diversities.  

\subsection{Visual Results of 
Diversity} \label{diversity}
The diversity advantages of our SCott are visualized in \cref{fig: div dog,fig: div young,fig: div desk,fig: div high}. SCott (w/o GAN) denotes the model trained by loss $\mathcal{L_{KL}}$ while CD is obtained by replacing the SDE solver in SCott (w/o GAN) to ODE solver DDIM. Both the models are initialized by RV5.1.  All images are generated with 6 sampling steps. We can observe that SCott (w/o GAN) exhibits more inter-sample diversities.  Besides, the SCott (w/o GAN)'s synthesized images present richer details. Generally, SCott (w/o GAN) leads to better human preference than CD.

\begin{figure}[htbp]
\centering
\begin{minipage}[t]{0.49\textwidth}
\centering
\includegraphics[width=3cm]{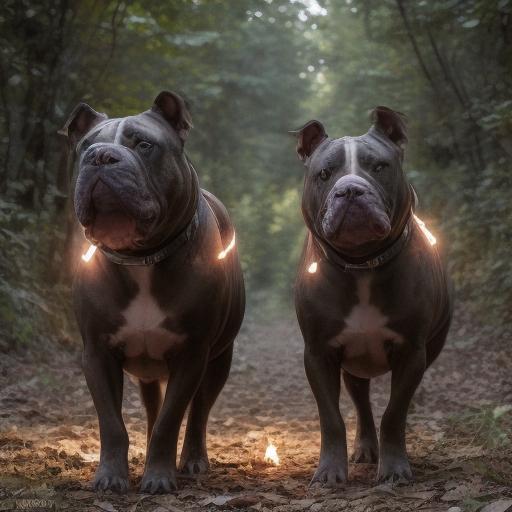}
\includegraphics[width=3cm]{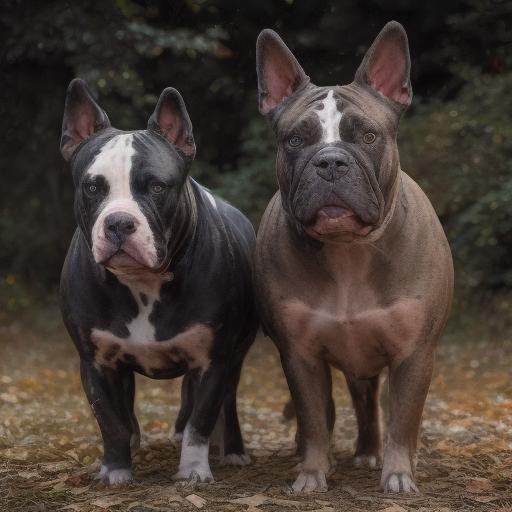}
\includegraphics[width=3cm]{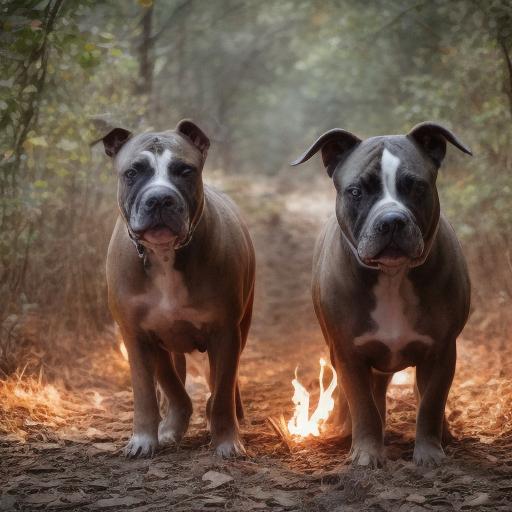}
\includegraphics[width=3cm]{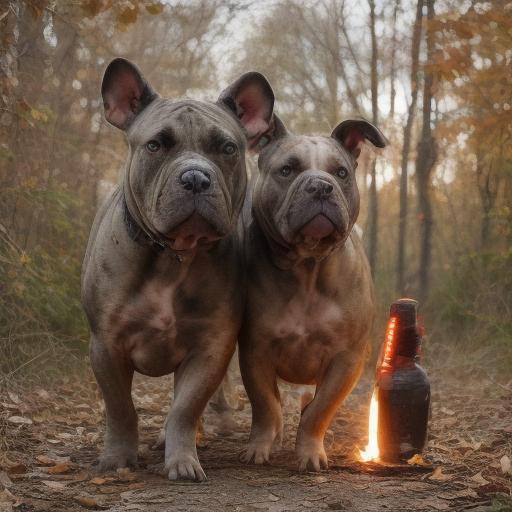}
\end{minipage}
\begin{minipage}[t]{0.49\textwidth}
\centering
\includegraphics[width=3cm]{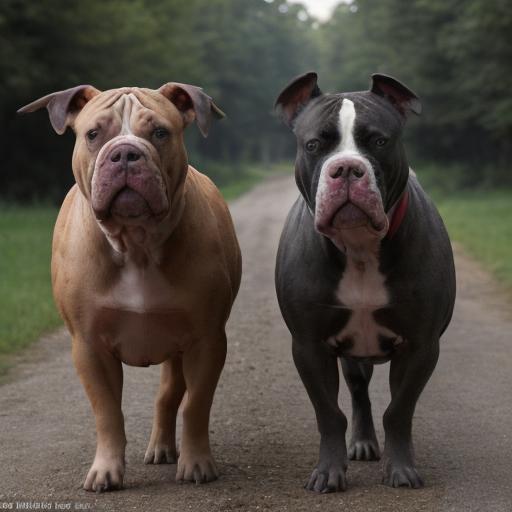}
\includegraphics[width=3cm]{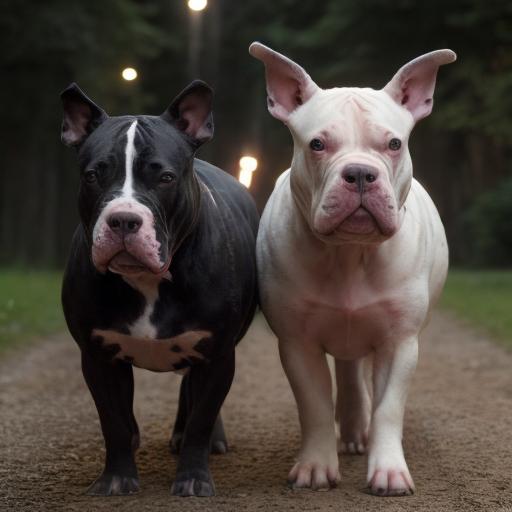}
\includegraphics[width=3cm]{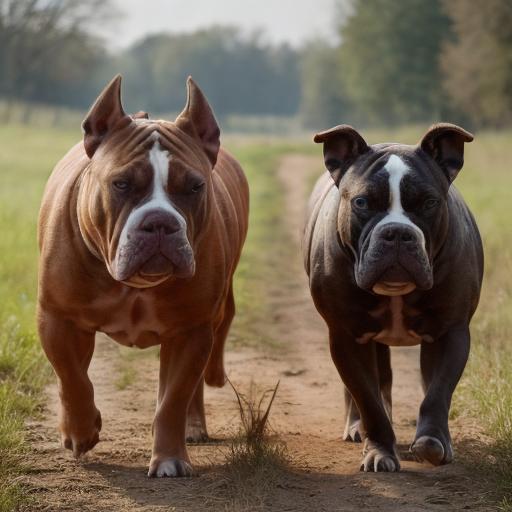}
\includegraphics[width=3cm]{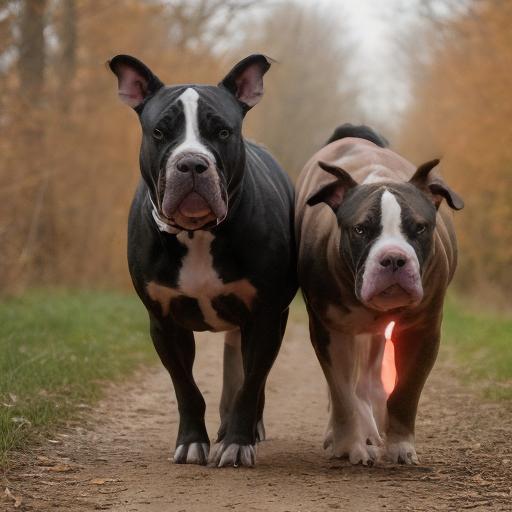}
\end{minipage}
 \begin{center}
\footnotesize CD\qquad\qquad\qquad\qquad\qquad\qquad\qquad\qquad\qquad\qquad\qquad\qquad\qquad\qquad\quad\quad SCott (w/o GAN)
\end{center} 
\caption{Prompt: 2 american bullys with ungre face red eyes crop ears big hed walking torch me.}
\label{fig: div dog}
\end{figure}

\begin{figure}[htbp]
\centering
\begin{minipage}[t]{0.49\textwidth}
\centering
\includegraphics[width=3cm]{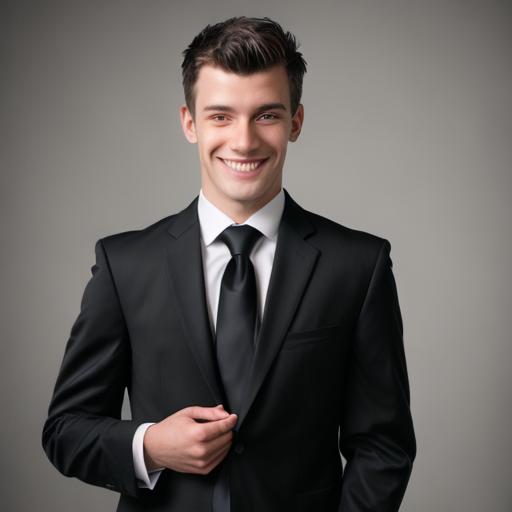}
\includegraphics[width=3cm]{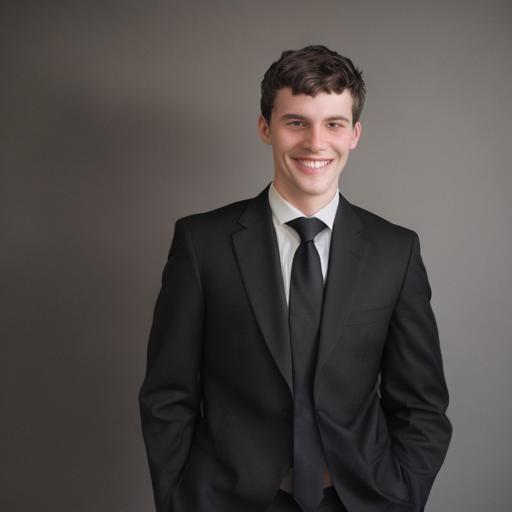}
\includegraphics[width=3cm]{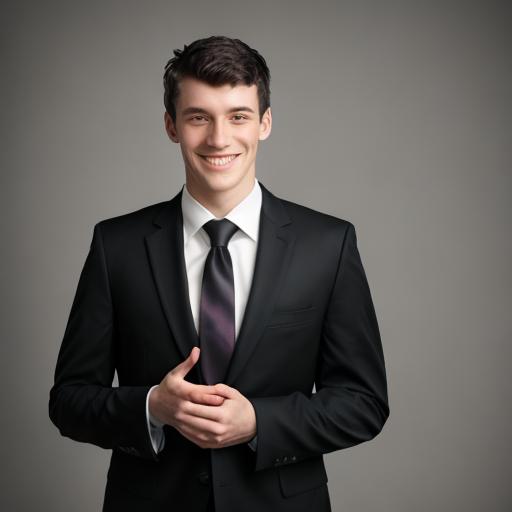}
\includegraphics[width=3cm]{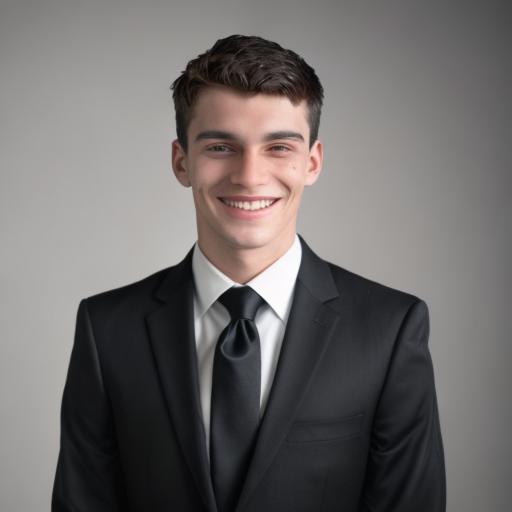}
\end{minipage}
\begin{minipage}[t]{0.49\textwidth}
\centering
\includegraphics[width=3cm]{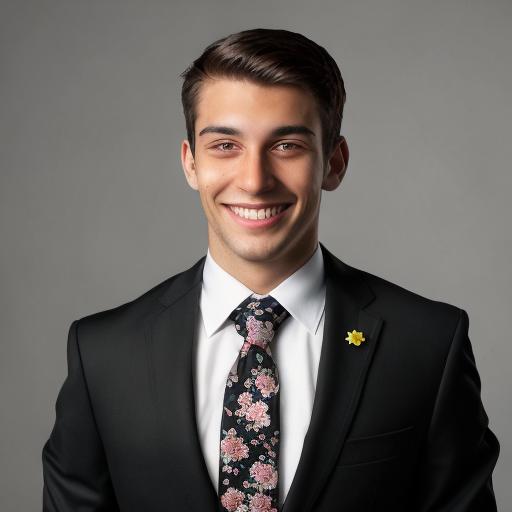}
\includegraphics[width=3cm]{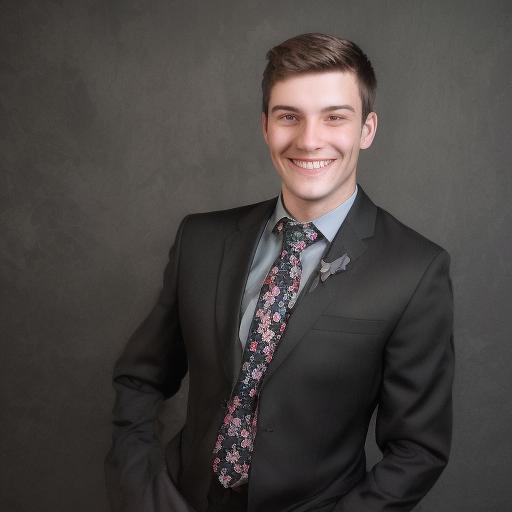}
\includegraphics[width=3cm]{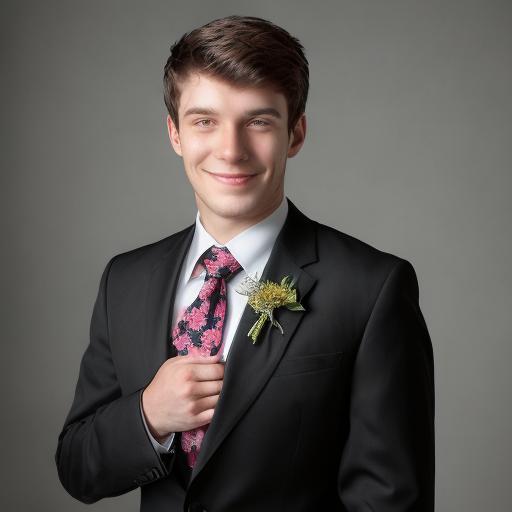}
\includegraphics[width=3cm]{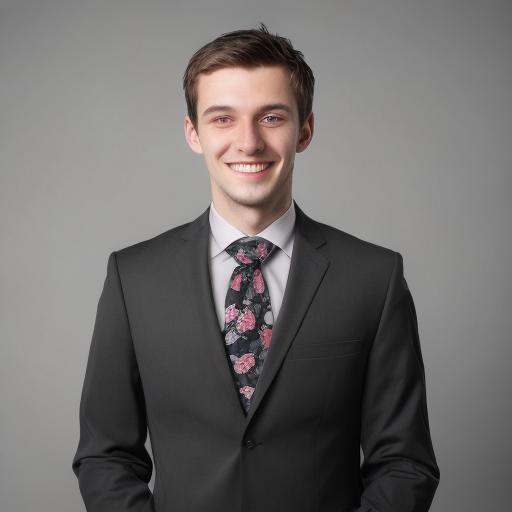}
\end{minipage}
 \begin{center}
\footnotesize CD\qquad\qquad\qquad\qquad\qquad\qquad\qquad\qquad\qquad\qquad\qquad\qquad\qquad\qquad\quad\quad SCott (w/o GAN)
\end{center} 
\caption{Prompt: A young man wearing black attire and a flowered tie is standing and smiling.}
\label{fig: div young}
\end{figure}

\begin{figure}[htbp]

\centering
\begin{minipage}[t]{0.49\textwidth}
\centering
\includegraphics[width=3cm]{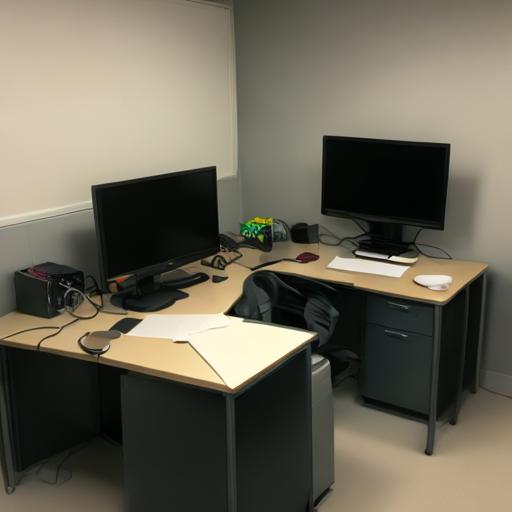}
\includegraphics[width=3cm]{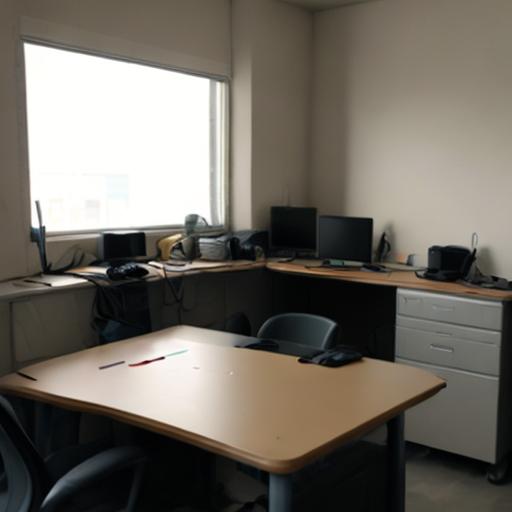}
\includegraphics[width=3cm]{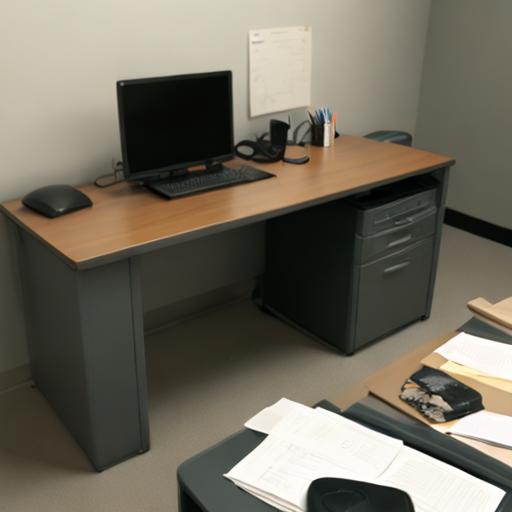}
\includegraphics[width=3cm]{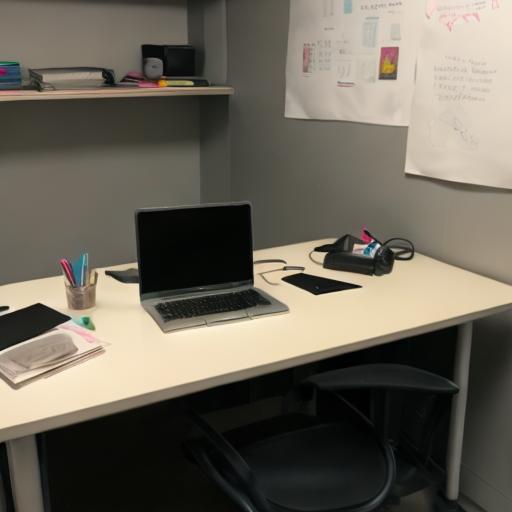}
\end{minipage}
\begin{minipage}[t]{0.49\textwidth}
\centering
\includegraphics[width=3cm]{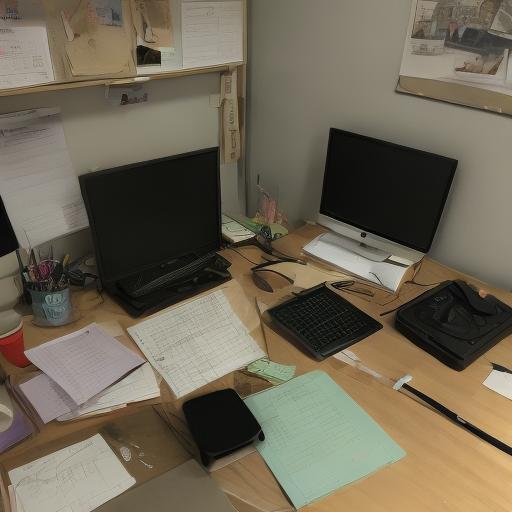}
\includegraphics[width=3cm]{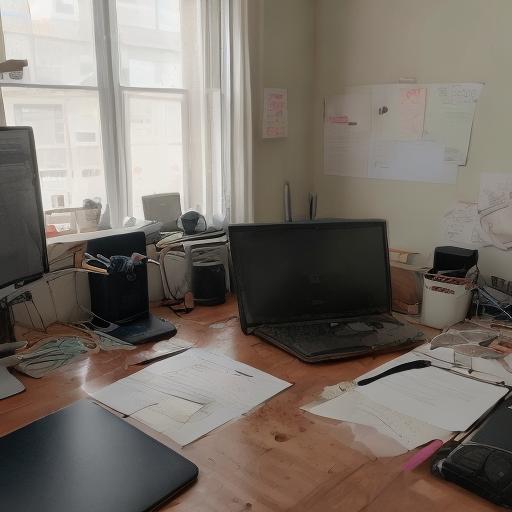}
\includegraphics[width=3cm]{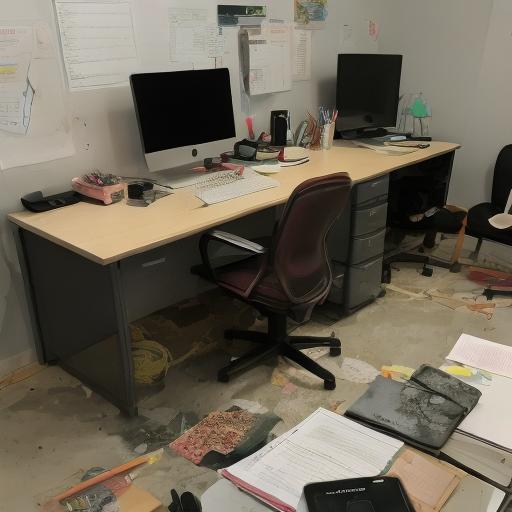}
\includegraphics[width=3cm]{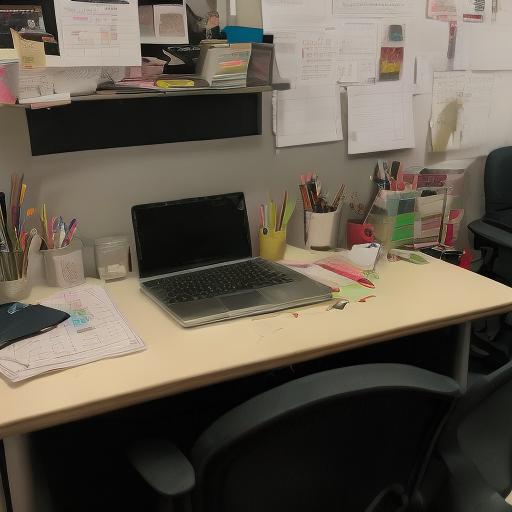}
\end{minipage}
 \begin{center}
\footnotesize CD\qquad\qquad\qquad\qquad\qquad\qquad\qquad\qquad\qquad\qquad\qquad\qquad\qquad\qquad\quad\quad SCott (w/o GAN)
\end{center} 
\caption{Prompt: A picture of a messy desk with an open laptop.}
\label{fig: div desk}
\end{figure}

\begin{figure}[htbp]
\centering
\flushleft 
\begin{minipage}[c]{0.49\textwidth}
\centering
\includegraphics[width=3cm]{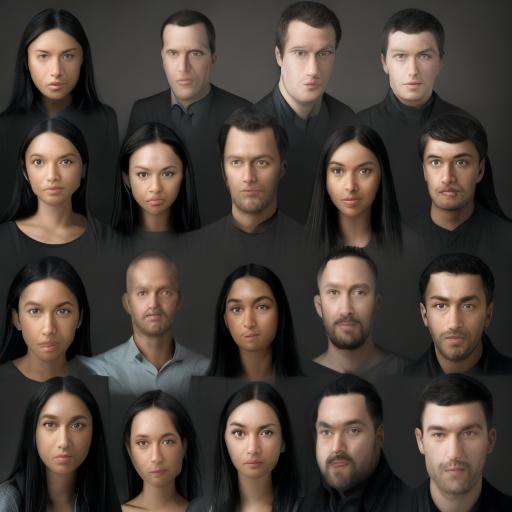}
\includegraphics[width=3cm]{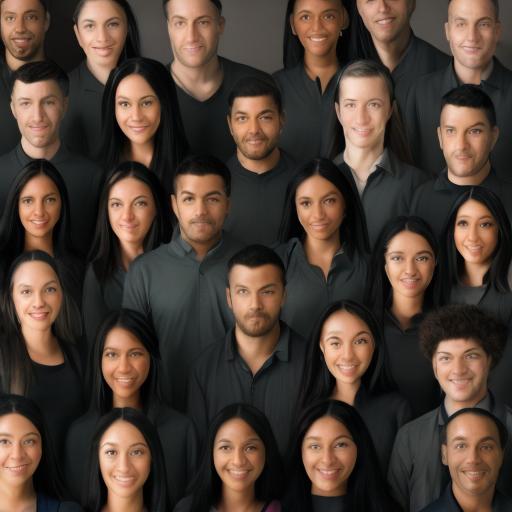} 
\includegraphics[width=3cm]{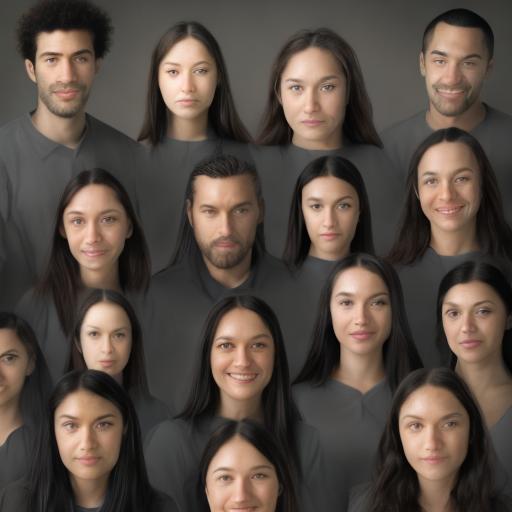}
\includegraphics[width=3cm]{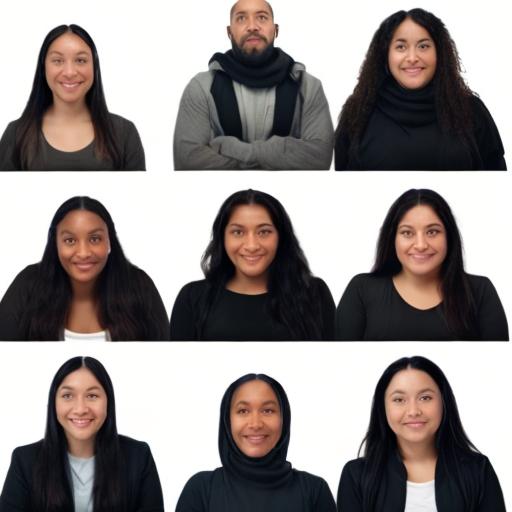}

\end{minipage}
\begin{minipage}[c]{0.49\textwidth}
\centering
\includegraphics[width=3cm]{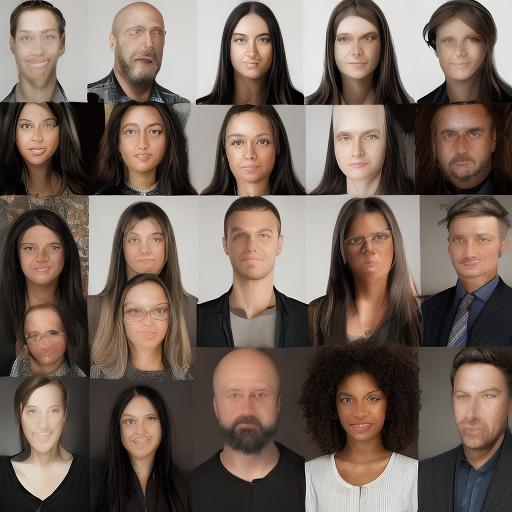}
\includegraphics[width=3cm]{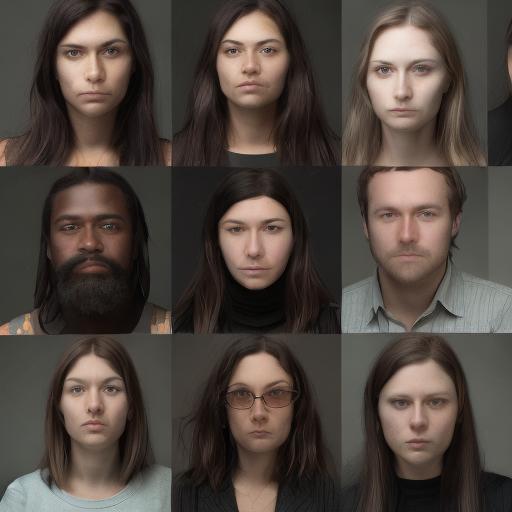} 
\includegraphics[width=3cm]{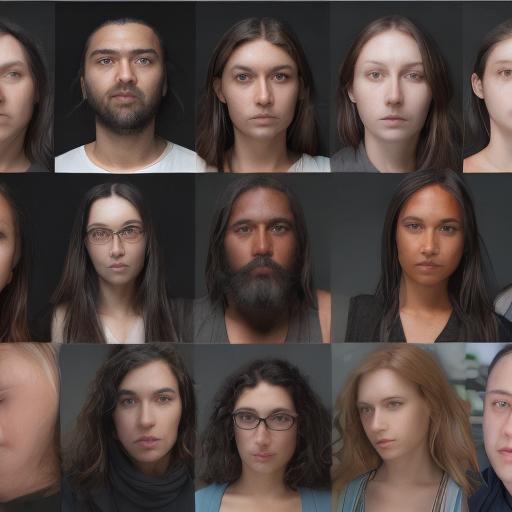}
\includegraphics[width=3cm]{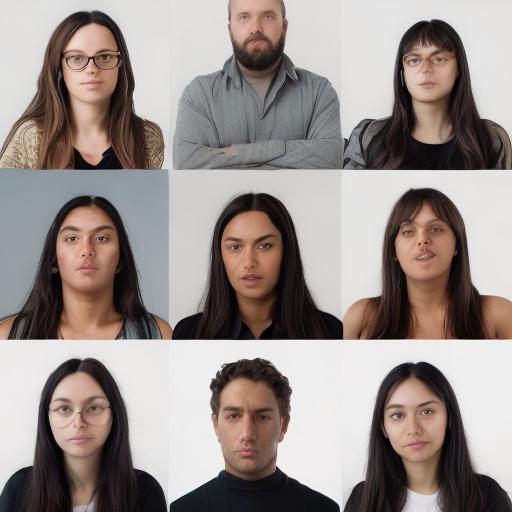}
\end{minipage}
 \begin{center}
\footnotesize CD\qquad\qquad\qquad\qquad\qquad\qquad\qquad\qquad\qquad\qquad\qquad\qquad\qquad\qquad\quad\quad SCott (w/o GAN)
\end{center} 
\caption{Prompt: A high-resolution image or illustration of a diverse group of people facing me, each displaying a distinct range of emotions. The image should be in 8K resolution or provide the highest quality available.}
\label{fig: div high}
\end{figure}

\subsection{Visual Results of Inference Steps}
\label{infer steps results}
As a CM, SCott can improve sample quality as NFE increases. We provide visual results of SCott at inference steps 2, 4, and 8 in \cref{fig: multi infer}. The seeds are the same within the columns. SCott can already generate high-quality samples with 2-step inference. With additional steps, SCott can consistently refine the details of the samples. 
\begin{figure}[htbp]
\centering
\begin{minipage}[t]{0.49\textwidth}
\centering
\rotatebox{90}{\scriptsize{~~~~~~~~~~~~~2-step}}
\includegraphics[width=2cm]{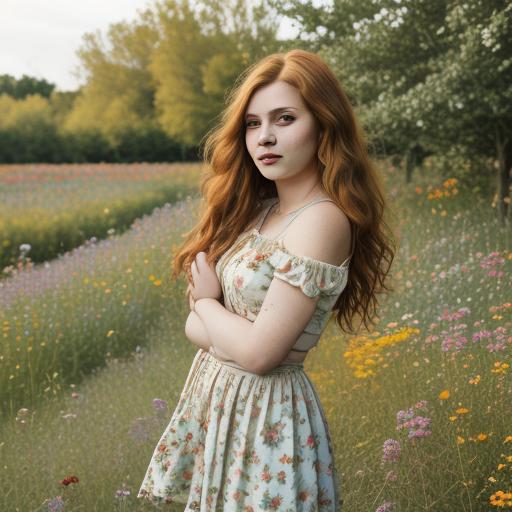}
\includegraphics[width=2cm]{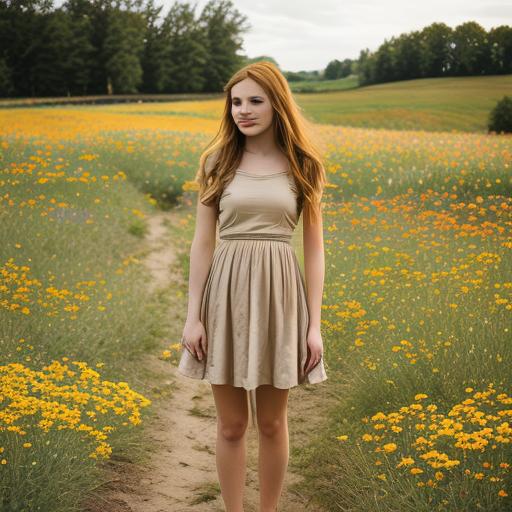}
\includegraphics[width=2cm]{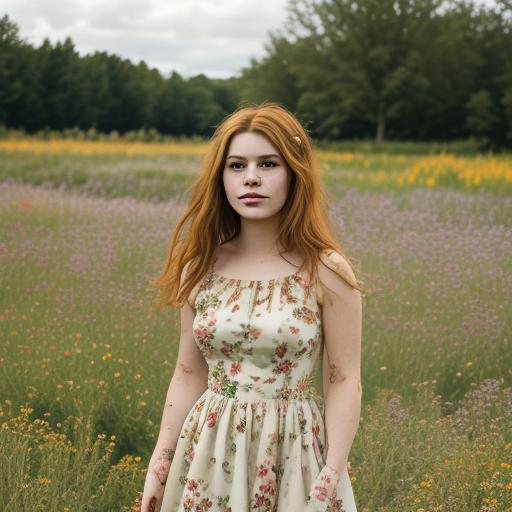}

\rotatebox{90}{\scriptsize{~~~~~~~~~~~~~4-step}}
\includegraphics[width=2cm]{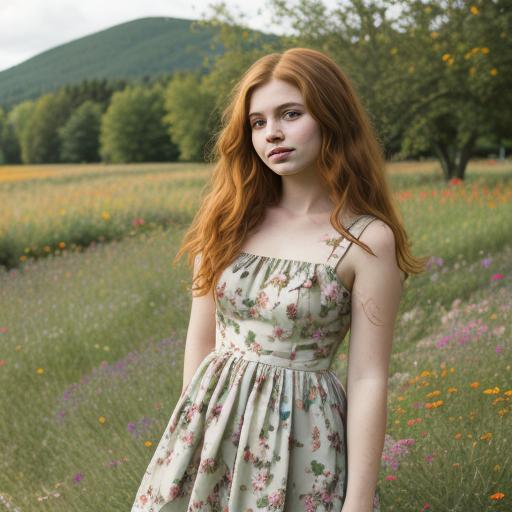}
\includegraphics[width=2cm]{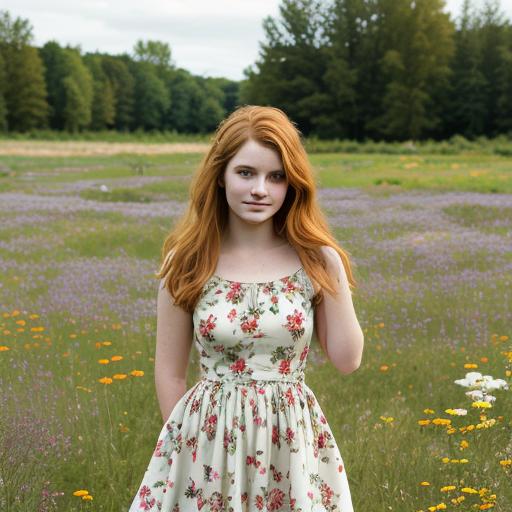}
\includegraphics[width=2cm]{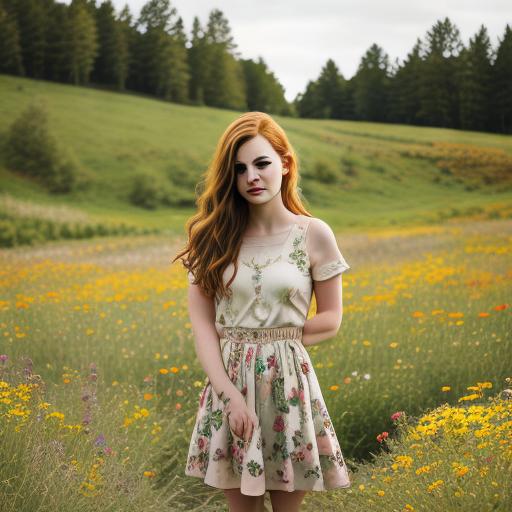}

\rotatebox{90}{\scriptsize{~~~~~~~~~~~~~8-step}}
\includegraphics[width=2cm]{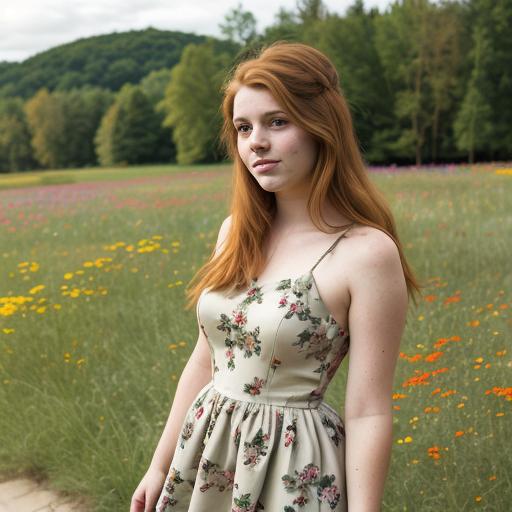}
\includegraphics[width=2cm]{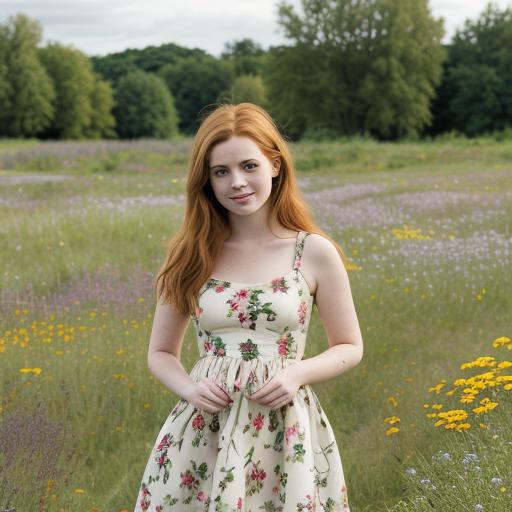}
\includegraphics[width=2cm]{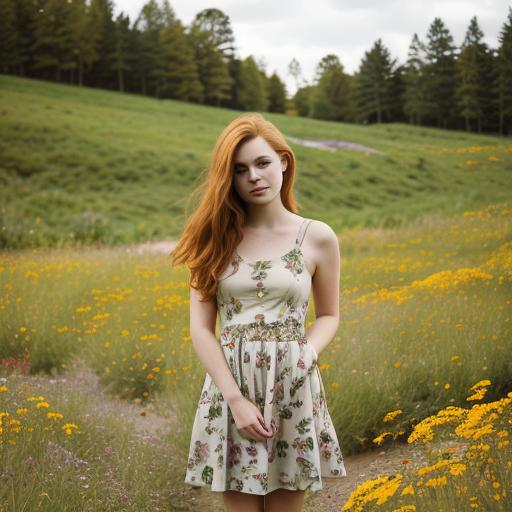}
\begin{center}
\footnotesize "22 year old girl with auburn hair, wearing a skirted dress, very detailed skin texture, drawing on the picture, flowers in the landscape, natural, gentle soul, photojournalism, bokeh."
\end{center}
\end{minipage}
\begin{minipage}[t]{0.49\textwidth}
\centering
\includegraphics[width=2cm]{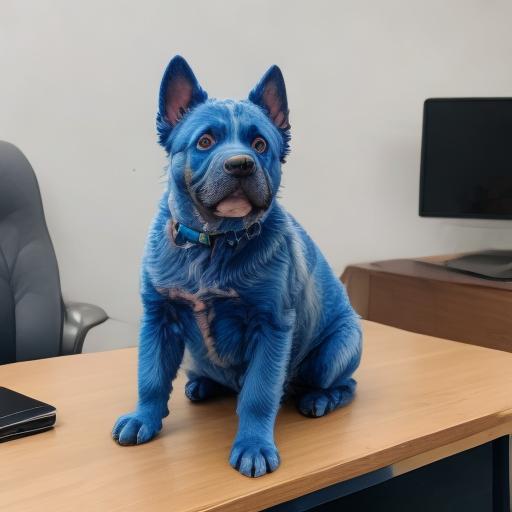}
\includegraphics[width=2cm]{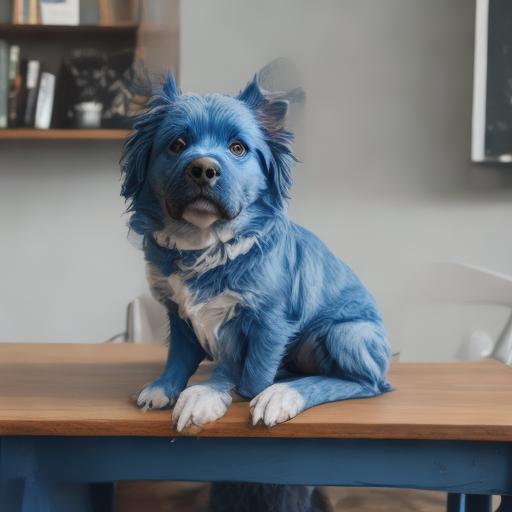}
\includegraphics[width=2cm]{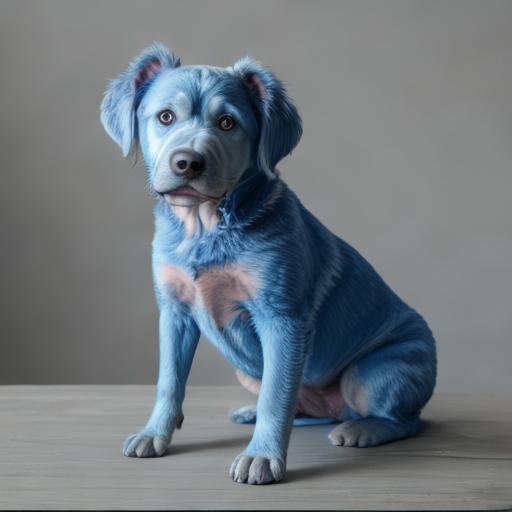}

\includegraphics[width=2cm]{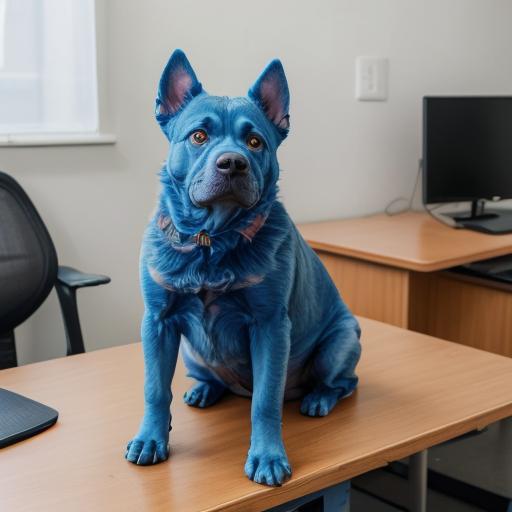}
\includegraphics[width=2cm]{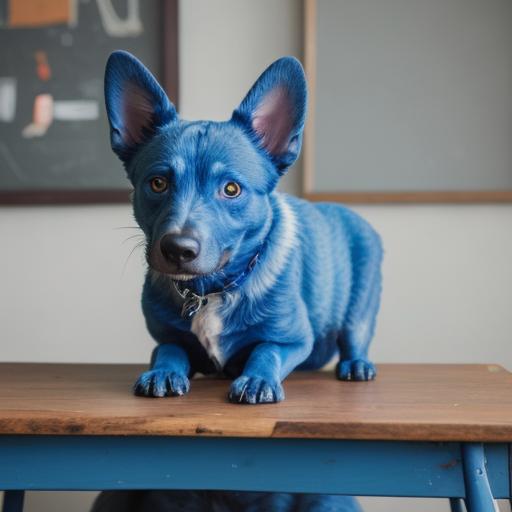}
\includegraphics[width=2cm]{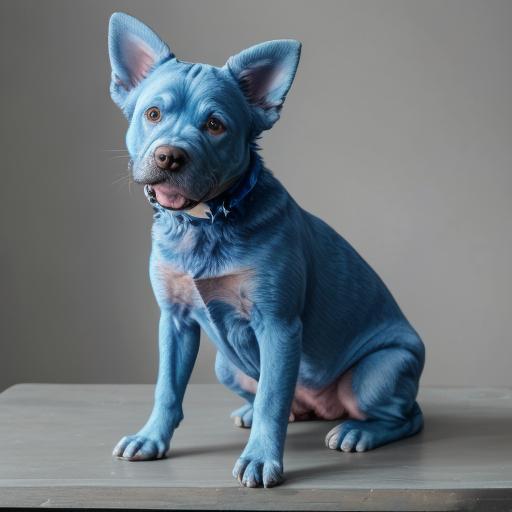}

\includegraphics[width=2cm]{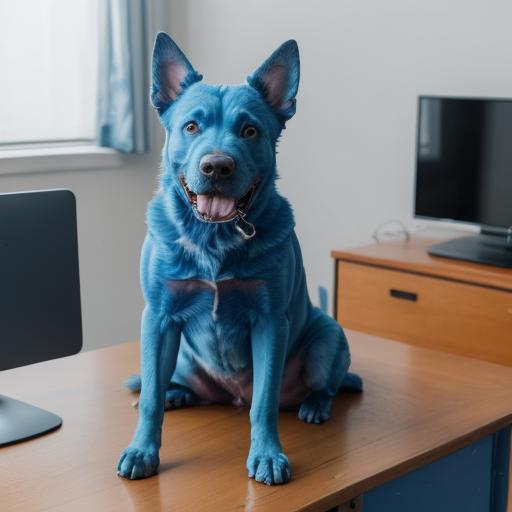}
\includegraphics[width=2cm]{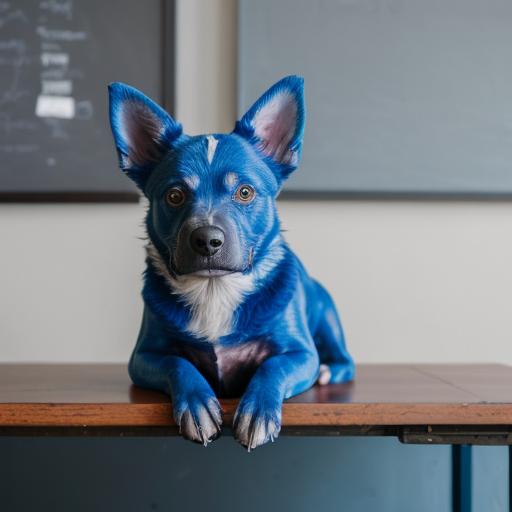}
\includegraphics[width=2cm]{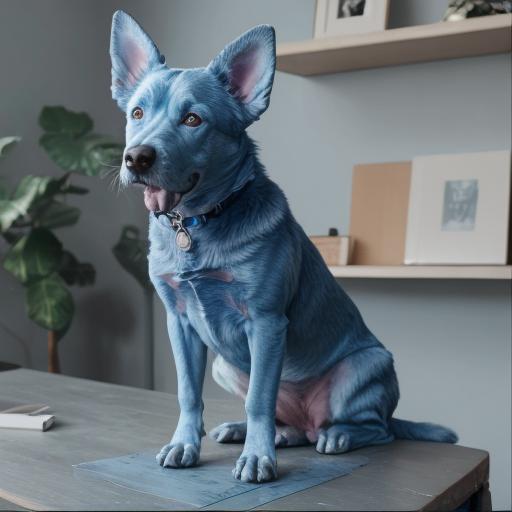}
\begin{center}
\footnotesize "A blue colored dog sitting on a desk."
\end{center}
\end{minipage}
\begin{center}
\end{center} 
\end{figure}

\subsection{Additional Visual Results}\label{more comparison}
\cref{fig: dog,fig: girl,fig: robot,fig: fox,fig: man} provide more visual results generated by InstaFlow-0.9B, LCM, and our SCott. 
\begin{figure}[htbp]
\centering
\begin{minipage}[t]{0.49\textwidth}
\centering
\includegraphics[width=3cm]{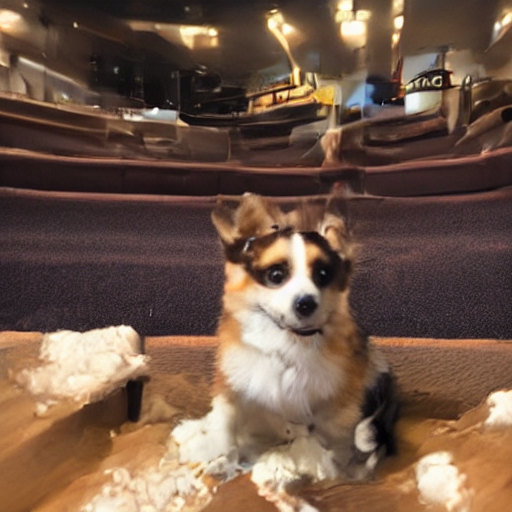}
\includegraphics[width=3cm]{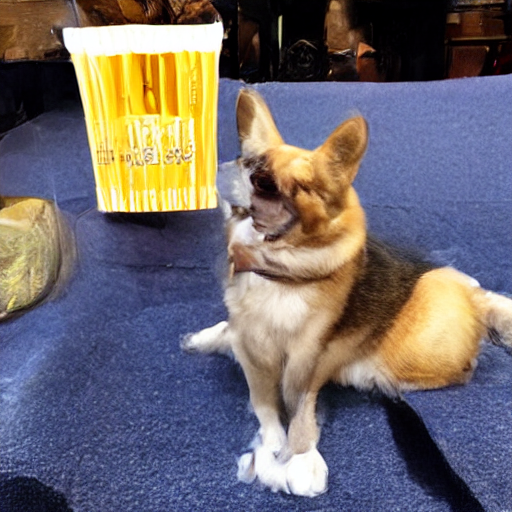}
\includegraphics[width=3cm]{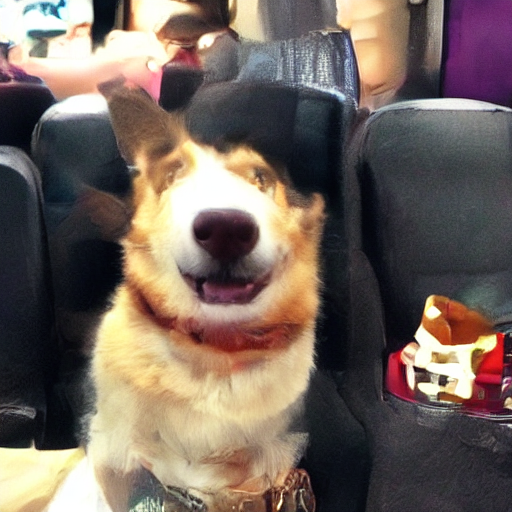}
\includegraphics[width=3cm]{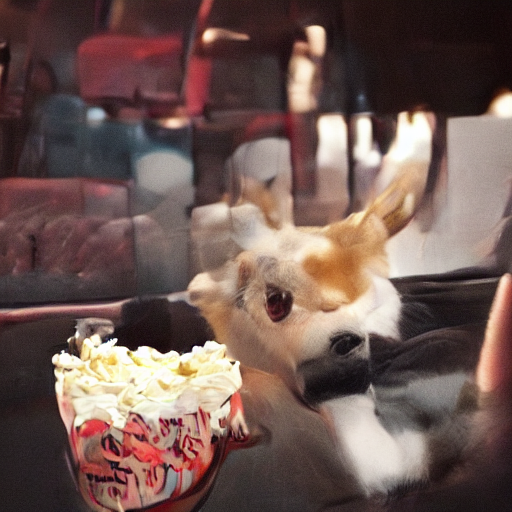}
\end{minipage}
\begin{minipage}[t]{0.49\textwidth}
\centering
\includegraphics[width=3cm]{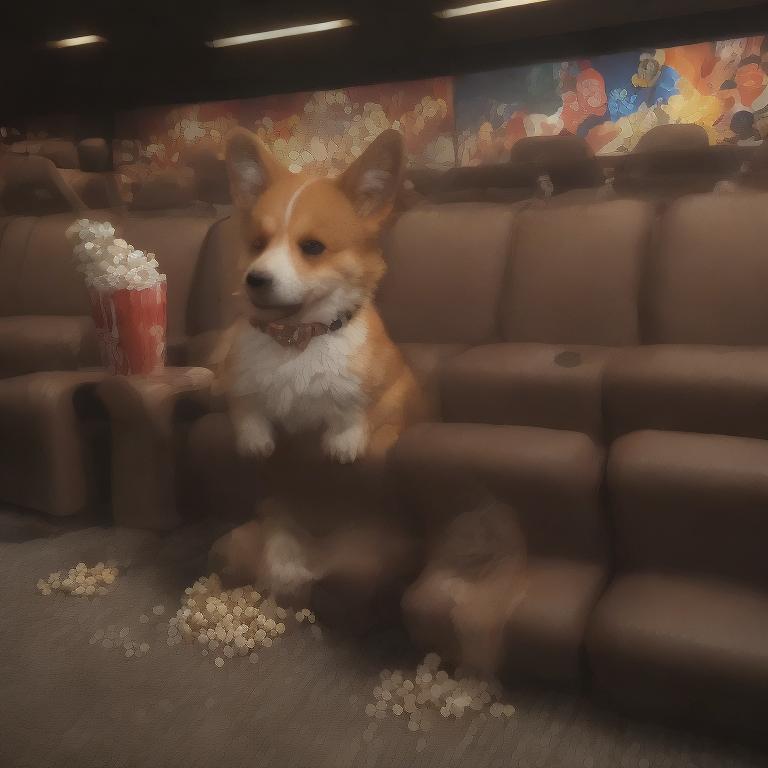}
\includegraphics[width=3cm]{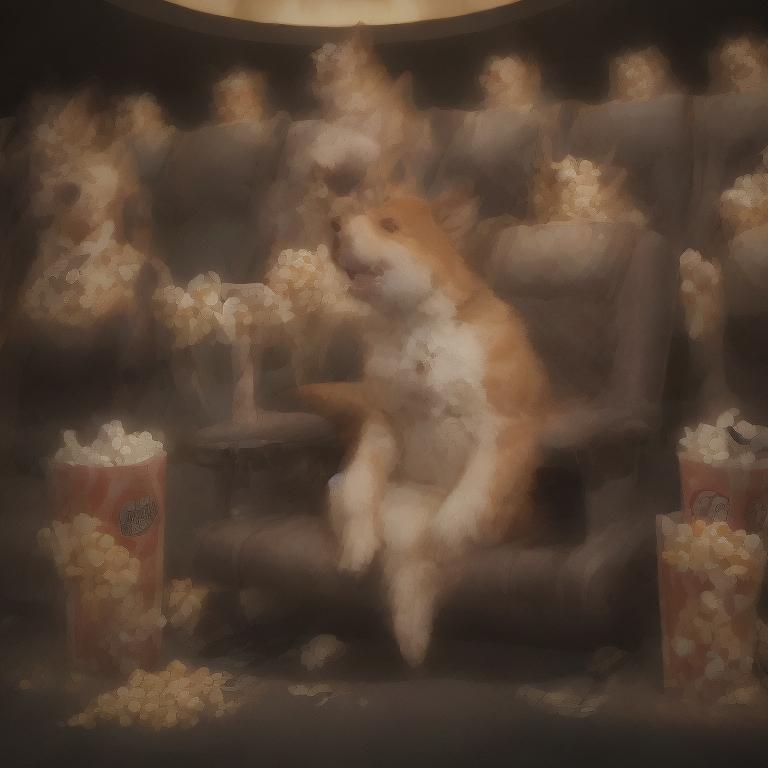}
\includegraphics[width=3cm]{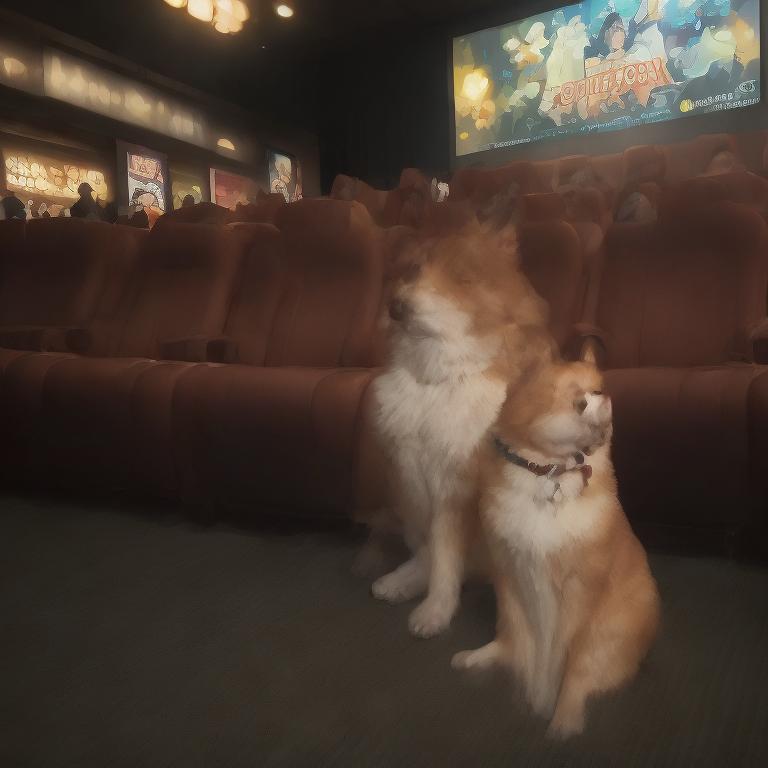}
\includegraphics[width=3cm]{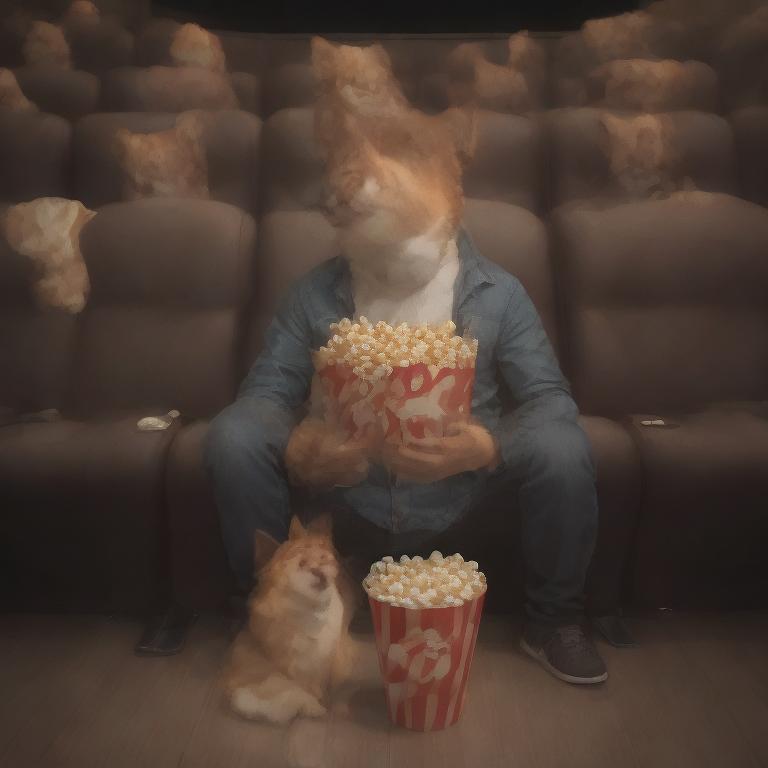}
\end{minipage}
 \begin{center}
\footnotesize \qquad InstaFlow (1 step)\qquad\qquad\qquad\qquad\qquad\qquad\qquad\qquad\qquad\qquad\qquad\qquad\qquad\qquad LCM (4 steps)
\end{center} 
\centering
\flushleft 
\begin{minipage}[c]{0.49\textwidth}
\centering
\includegraphics[width=3cm]{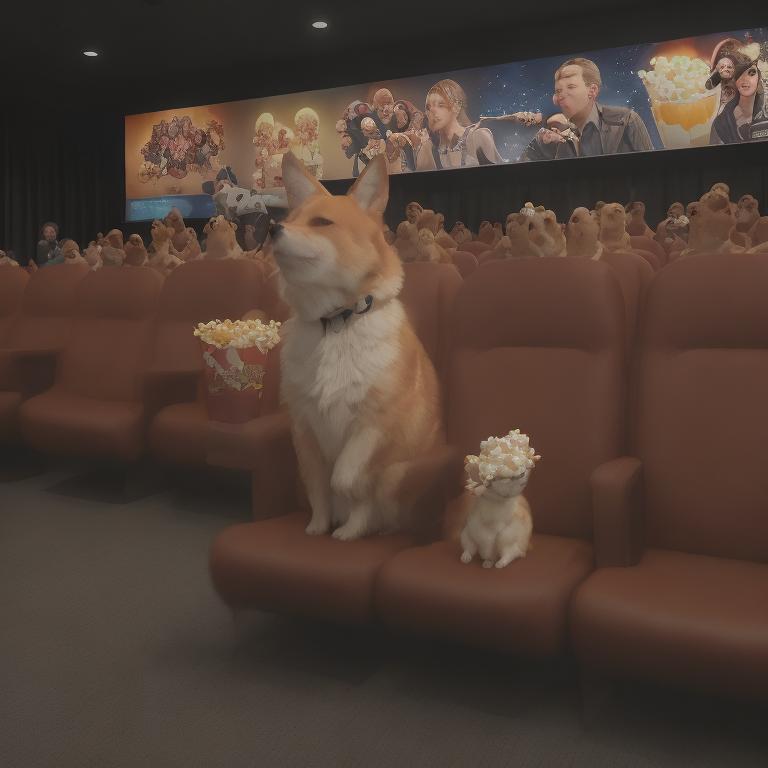}
\includegraphics[width=3cm]{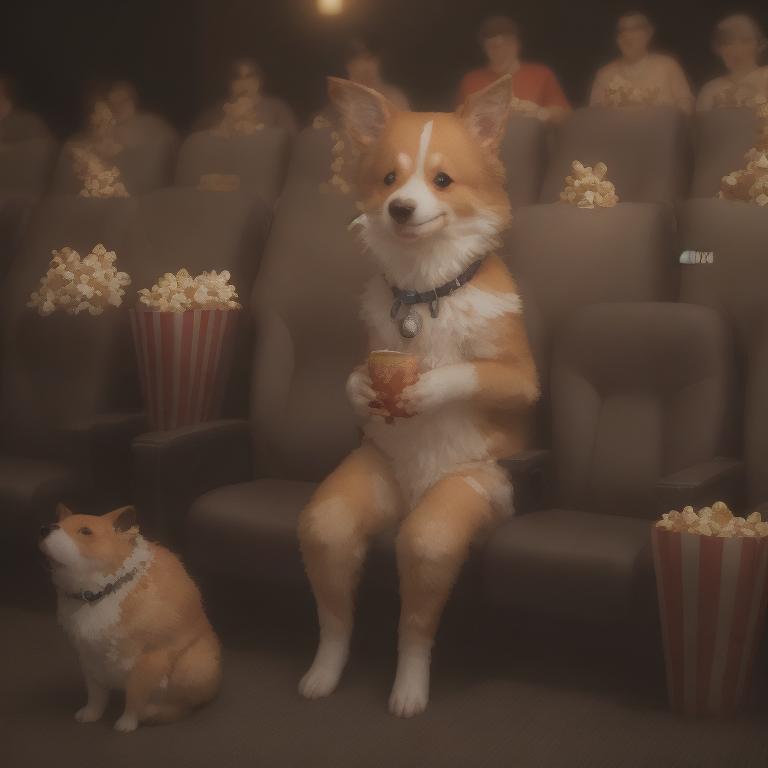} 
\includegraphics[width=3cm]{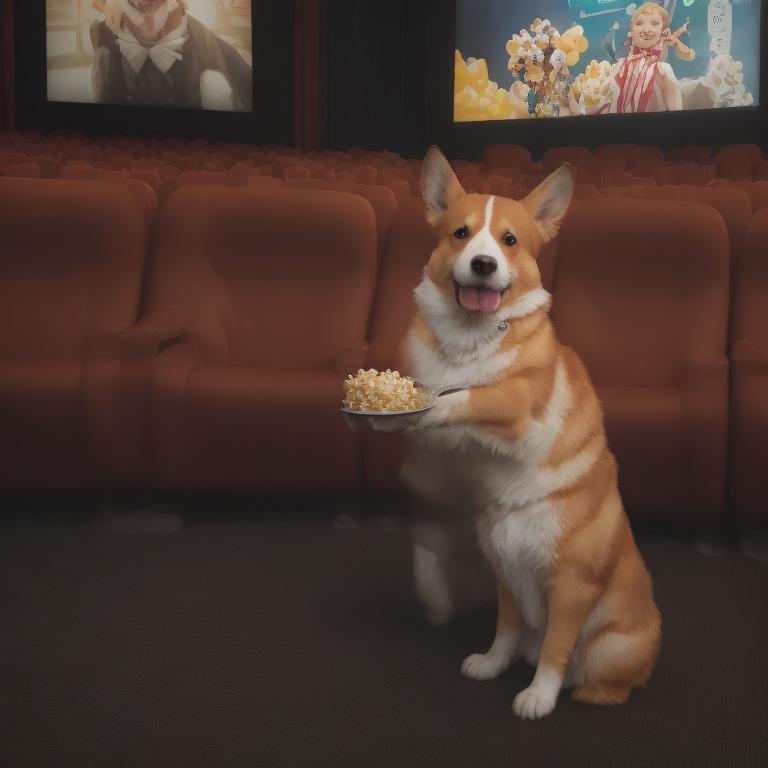}
\includegraphics[width=3cm]{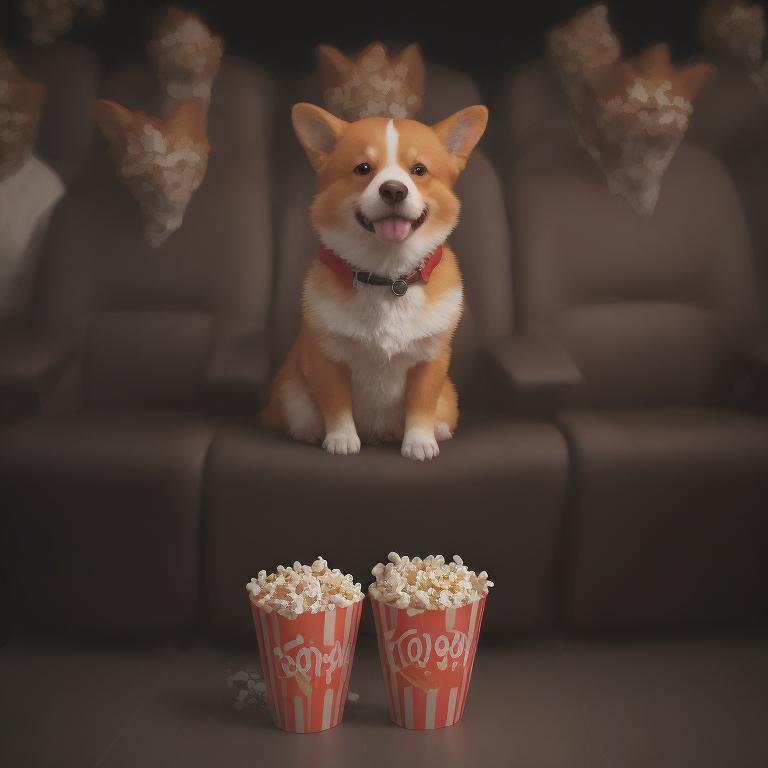}

\end{minipage}
\begin{minipage}[c]{0.49\textwidth}
\centering
\includegraphics[width=3cm]{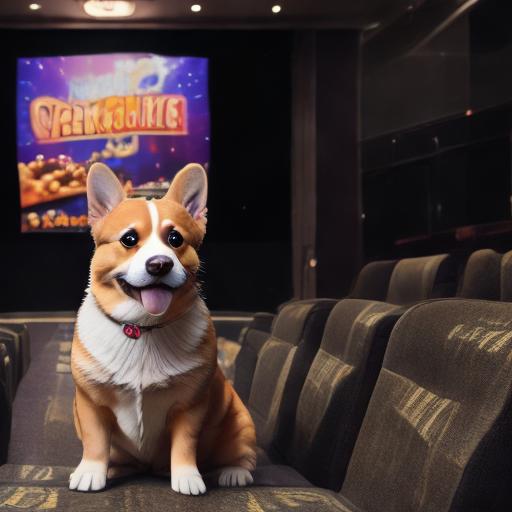}
\includegraphics[width=3cm]{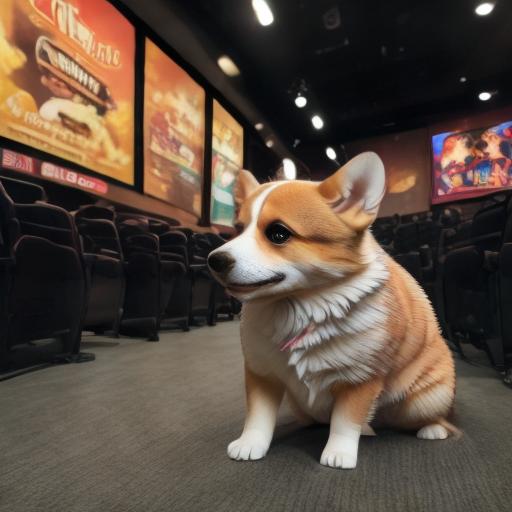} 
\includegraphics[width=3cm]{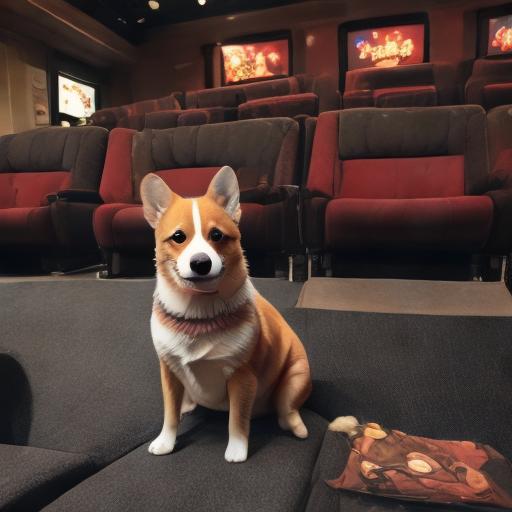}
\includegraphics[width=3cm]{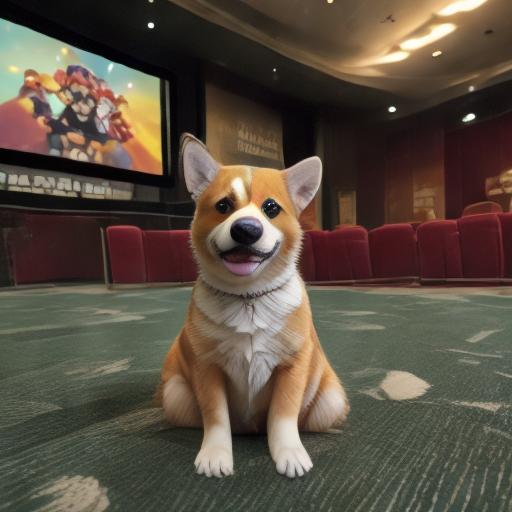}
\end{minipage}
 \begin{center}
\footnotesize \qquad LCM (8-steps)\qquad\qquad\qquad\qquad\qquad\qquad\qquad\qquad\qquad\qquad\qquad\qquad\qquad\qquad SCott (2 steps)
\end{center} 
\caption{Prompt: A small corgi sitting in a movie theater eating popcorn, unreal engine.}
\label{fig: dog}
\end{figure}

\begin{figure}[htbp]
\centering
\begin{minipage}[t]{0.49\textwidth}
\centering
\includegraphics[width=3cm]{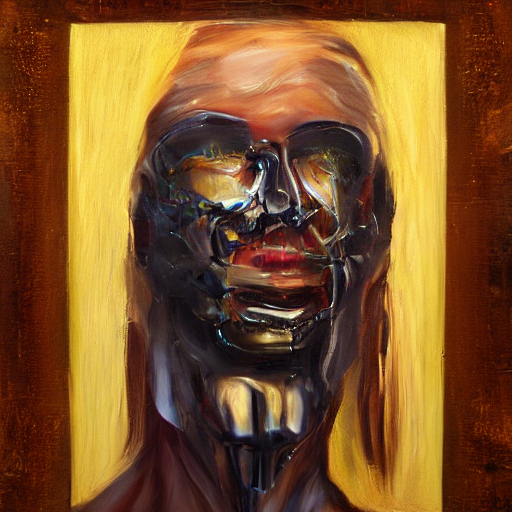}
\includegraphics[width=3cm]{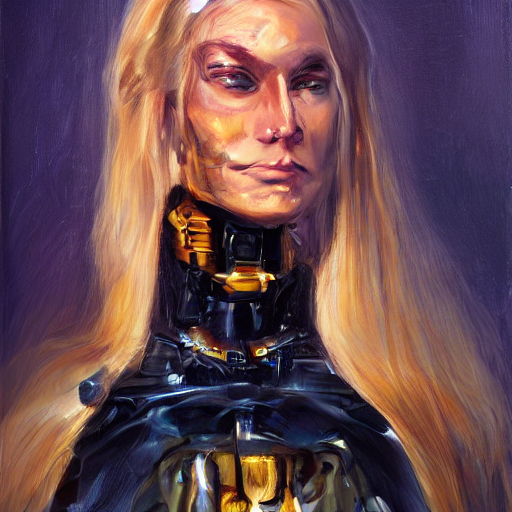}
\includegraphics[width=3cm]{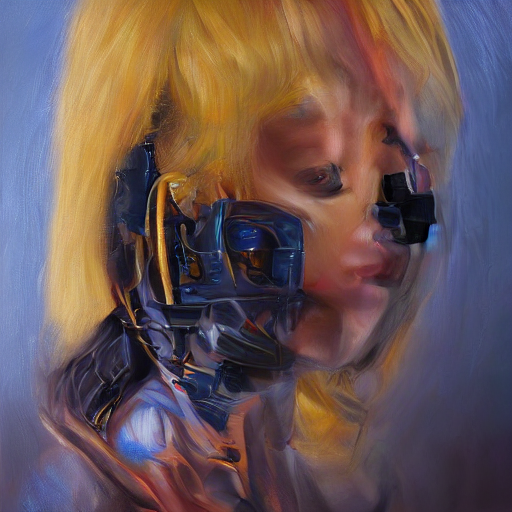}
\includegraphics[width=3cm]{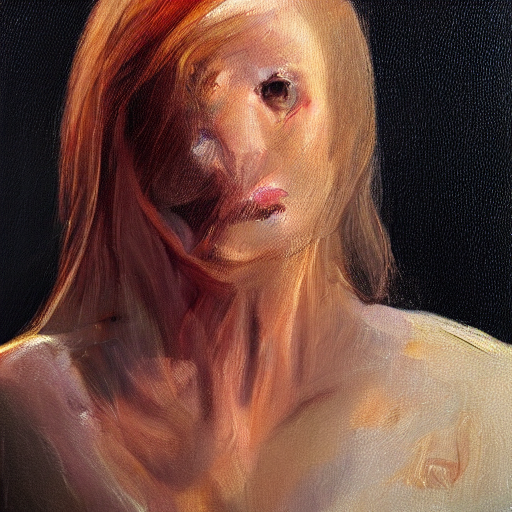}
\end{minipage}
\begin{minipage}[t]{0.49\textwidth}
\centering
\includegraphics[width=3cm]{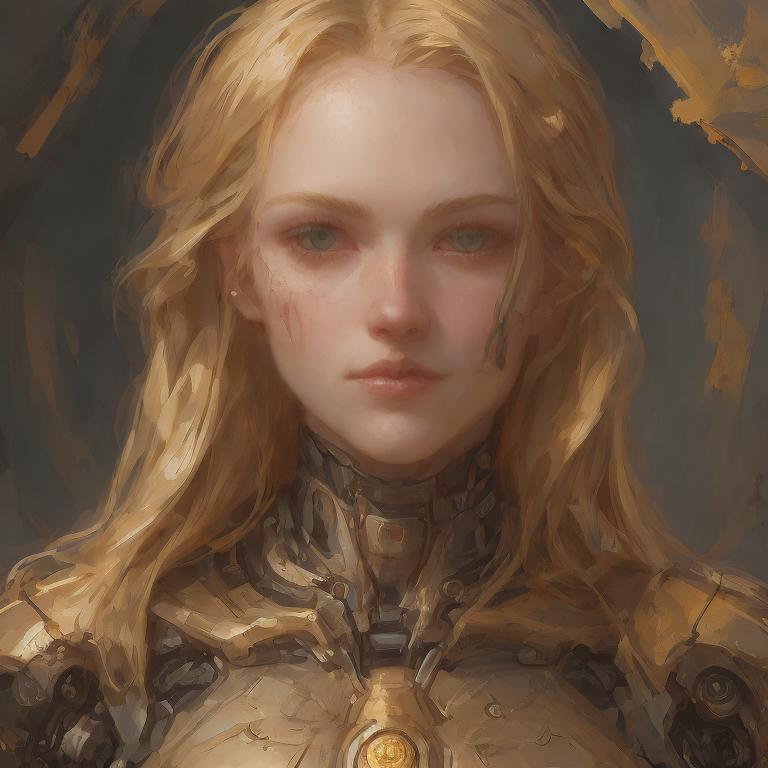}
\includegraphics[width=3cm]{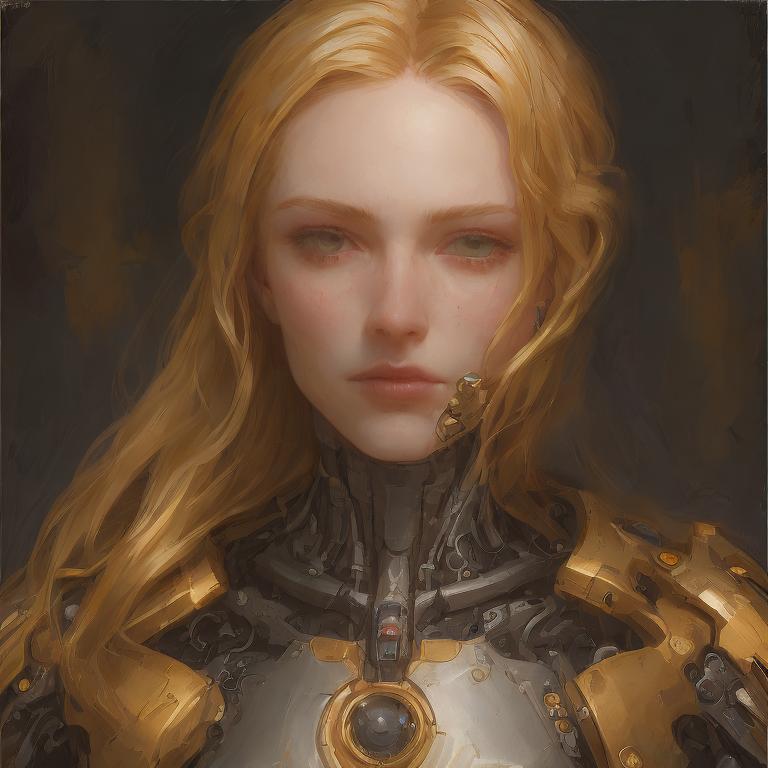}
\includegraphics[width=3cm]{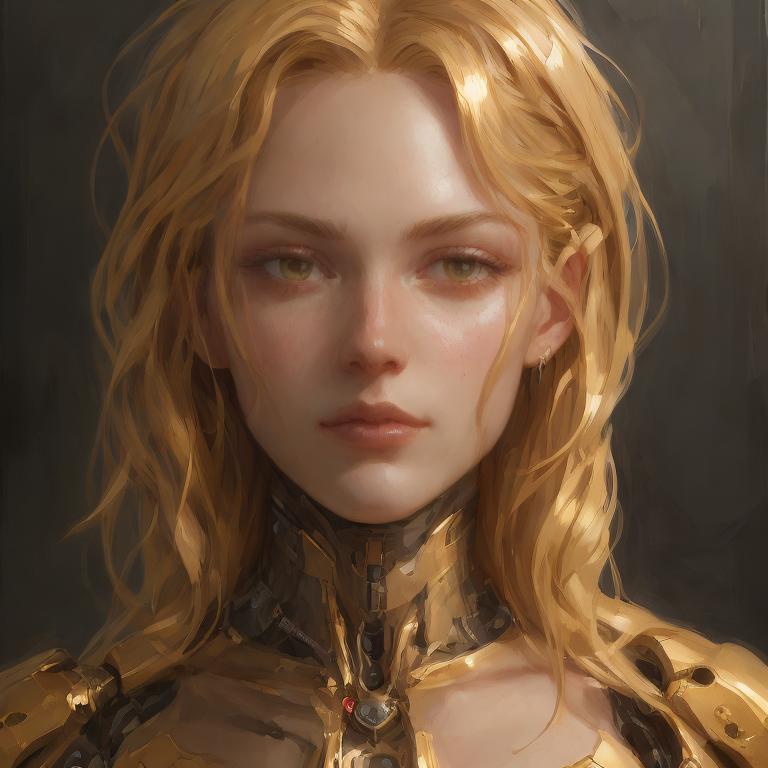}
\includegraphics[width=3cm]{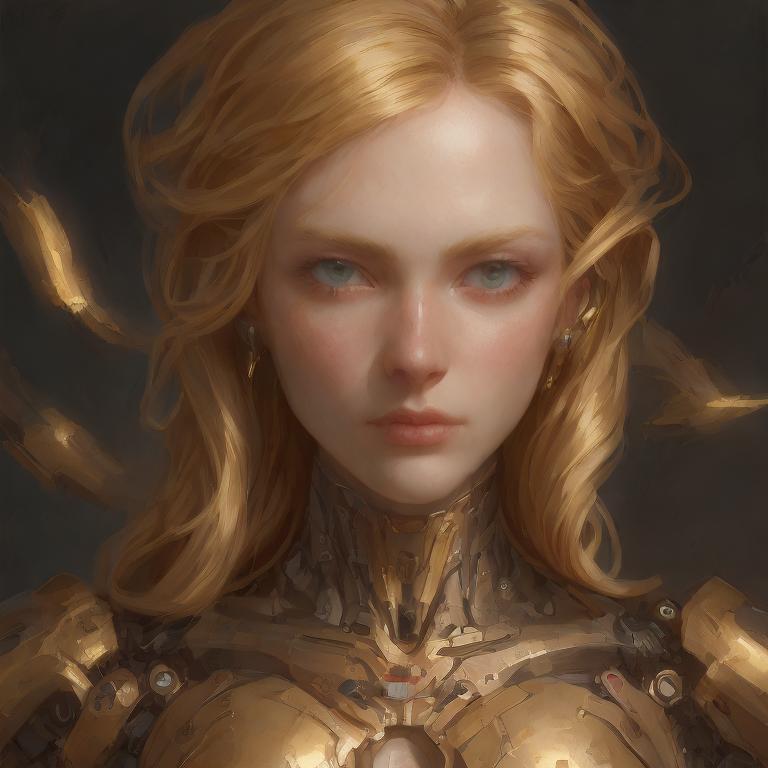}
\end{minipage}
 \begin{center}
\footnotesize \qquad InstaFlow (1 step)\qquad\qquad\qquad\qquad\qquad\qquad\qquad\qquad\qquad\qquad\qquad\qquad\qquad\qquad LCM (4 steps)
\end{center} 
\centering
\flushleft 
\begin{minipage}[c]{0.49\textwidth}
\centering
\includegraphics[width=3cm]{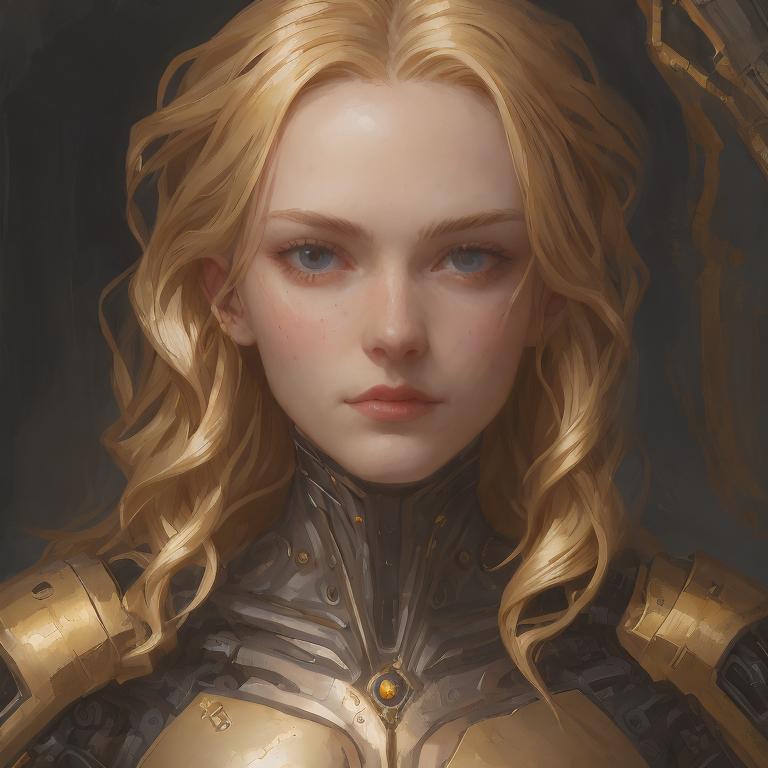}
\includegraphics[width=3cm]{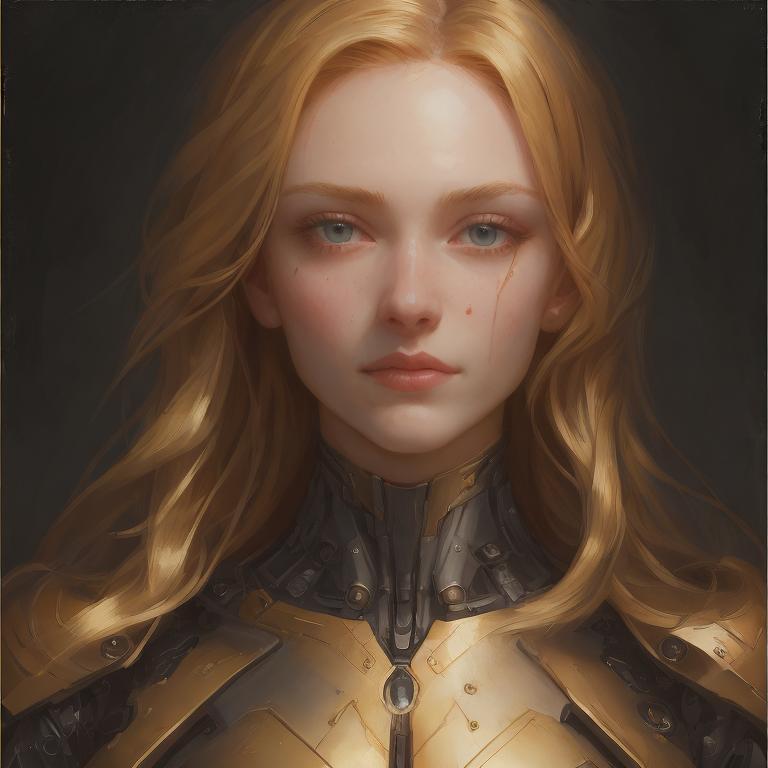} 
\includegraphics[width=3cm]{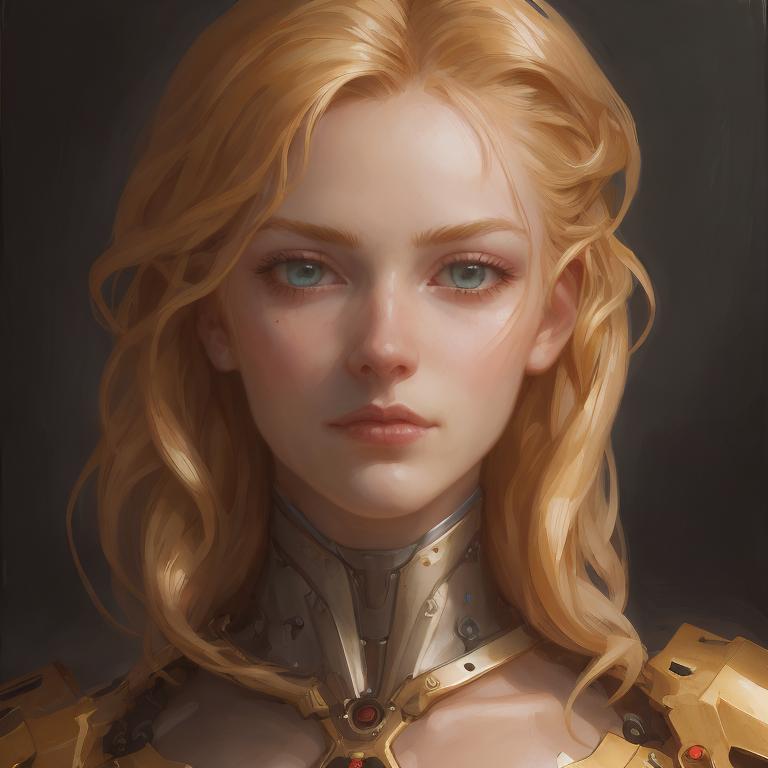}
\includegraphics[width=3cm]{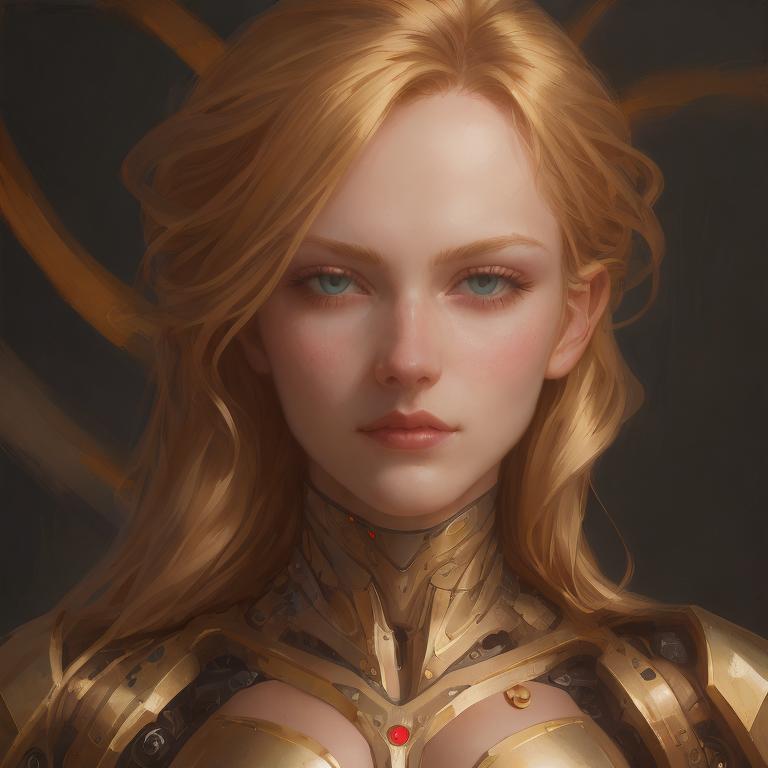}

\end{minipage}
\begin{minipage}[c]{0.49\textwidth}
\centering
\includegraphics[width=3cm]{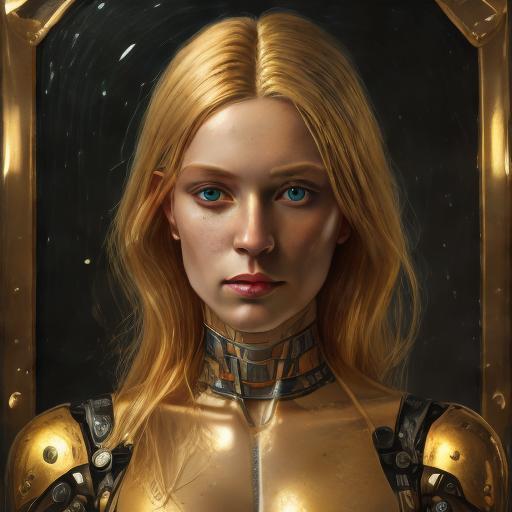}
\includegraphics[width=3cm]{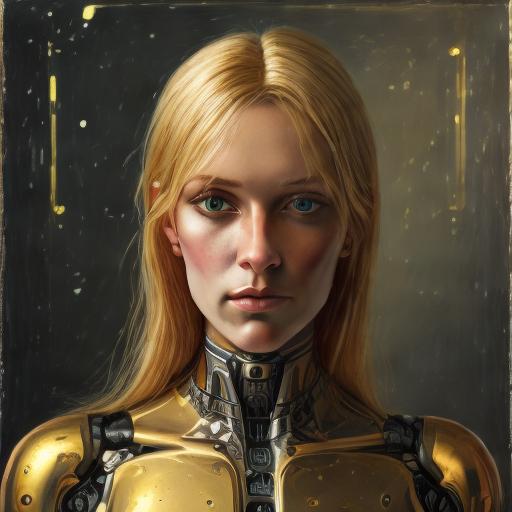} 
\includegraphics[width=3cm]{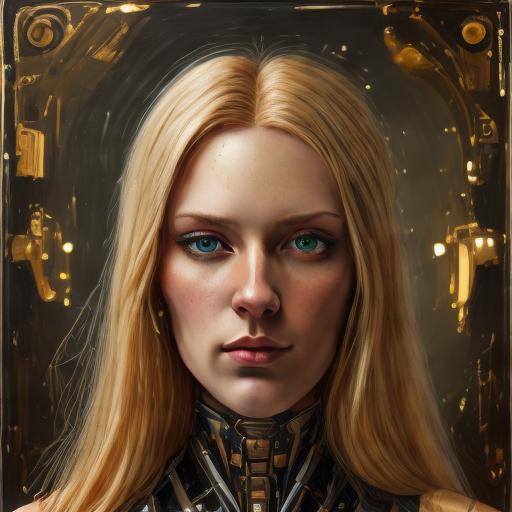}
\includegraphics[width=3cm]{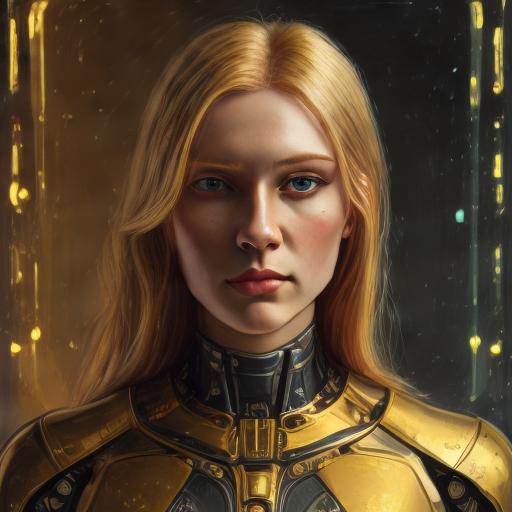}
\end{minipage}
 \begin{center}
\footnotesize \qquad LCM (8-steps)\qquad\qquad\qquad\qquad\qquad\qquad\qquad\qquad\qquad\qquad\qquad\qquad\qquad\qquad SCott (2 steps)
\end{center} 
\caption{Prompt: Self-portrait oil painting, a beautiful cyborg with golden hair, 8k.}
\label{fig: girl}
\end{figure}

\begin{figure}[htbp]
\centering
\begin{minipage}[t]{0.49\textwidth}
\centering
\includegraphics[width=3cm]{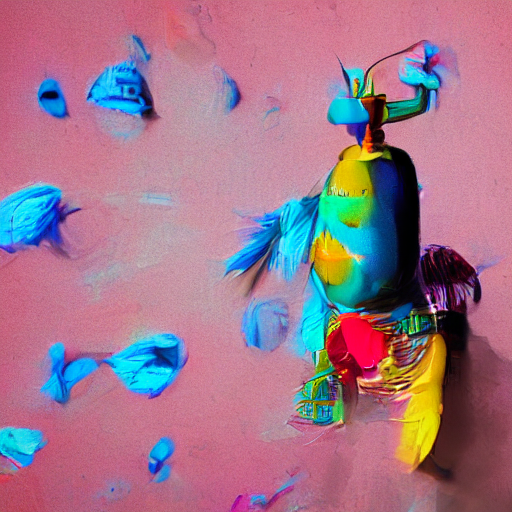}
\includegraphics[width=3cm]{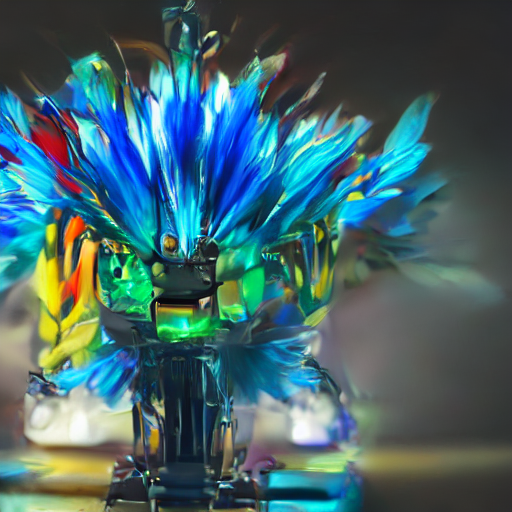}
\includegraphics[width=3cm]{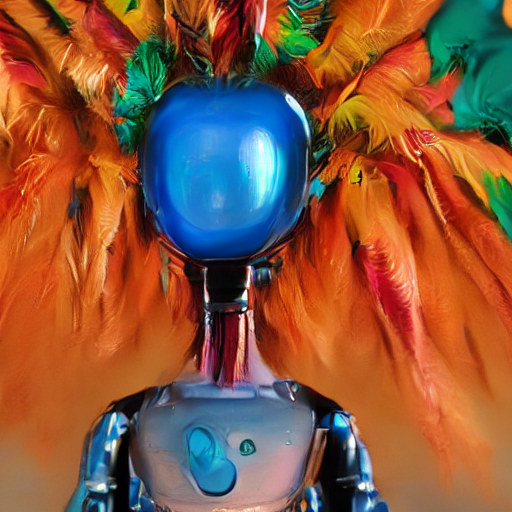}
\includegraphics[width=3cm]{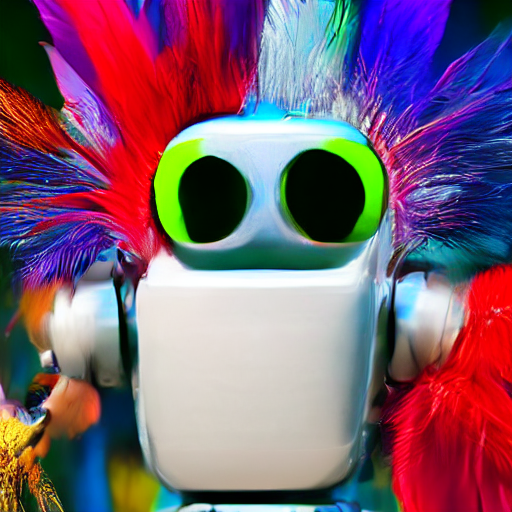}
\end{minipage}
\begin{minipage}[t]{0.49\textwidth}
\centering
\includegraphics[width=3cm]{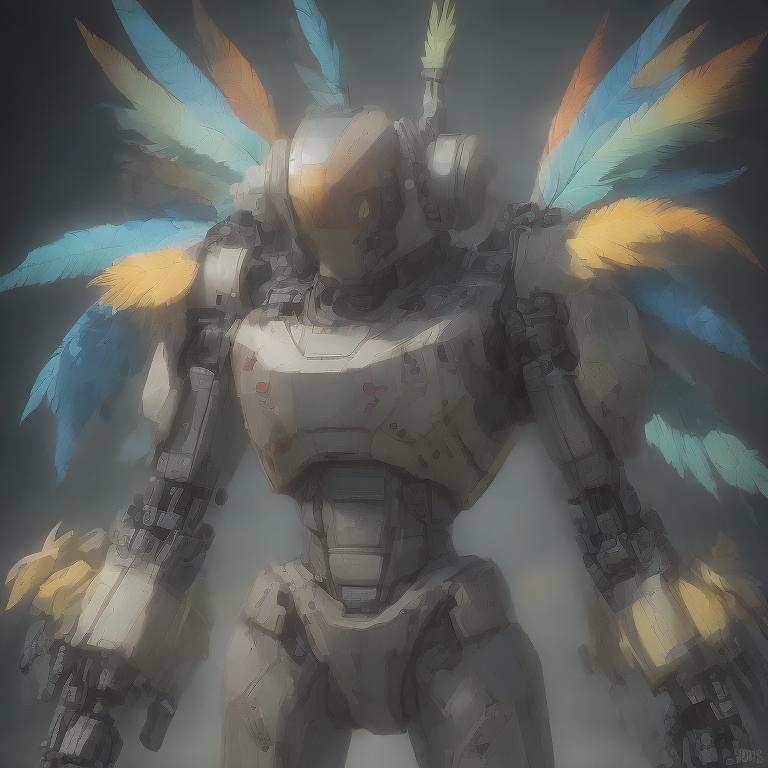}
\includegraphics[width=3cm]{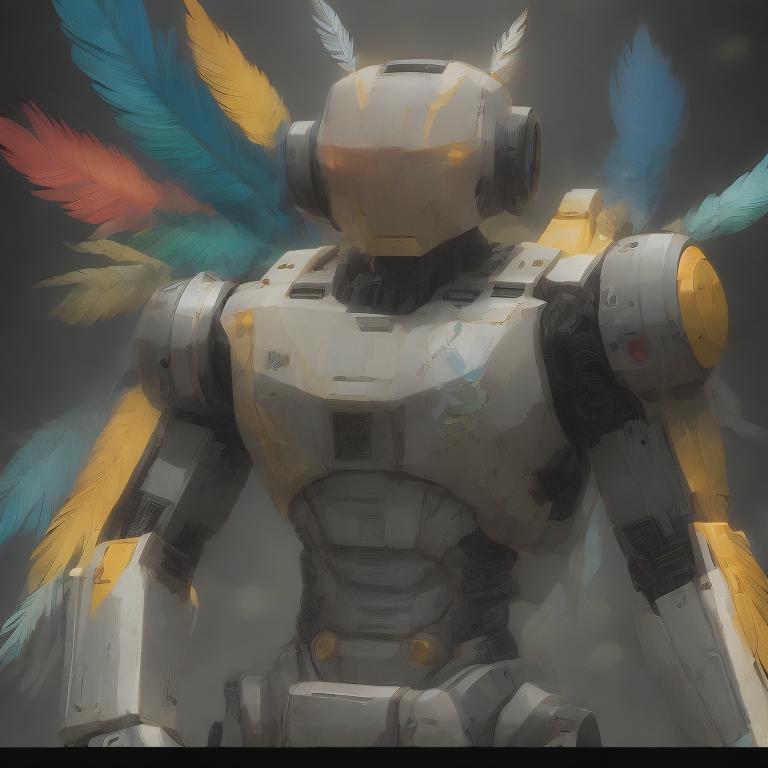}
\includegraphics[width=3cm]{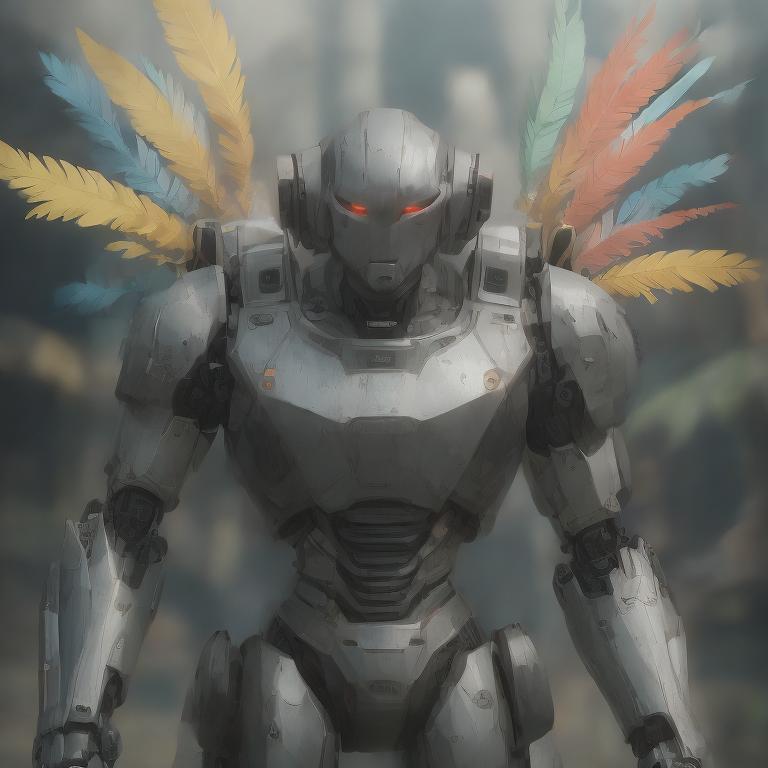}
\includegraphics[width=3cm]{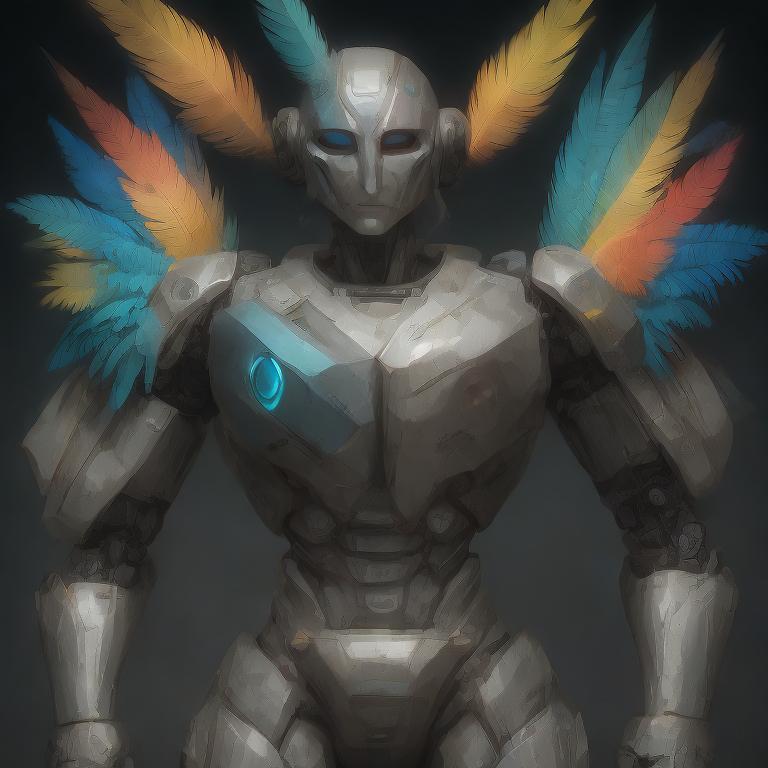}
\end{minipage}
 \begin{center}
\footnotesize \qquad InstaFlow (1 step)\qquad\qquad\qquad\qquad\qquad\qquad\qquad\qquad\qquad\qquad\qquad\qquad\qquad\qquad LCM (4 steps)
\end{center} 
\centering
\flushleft 
\begin{minipage}[c]{0.49\textwidth}
\centering
\includegraphics[width=3cm]{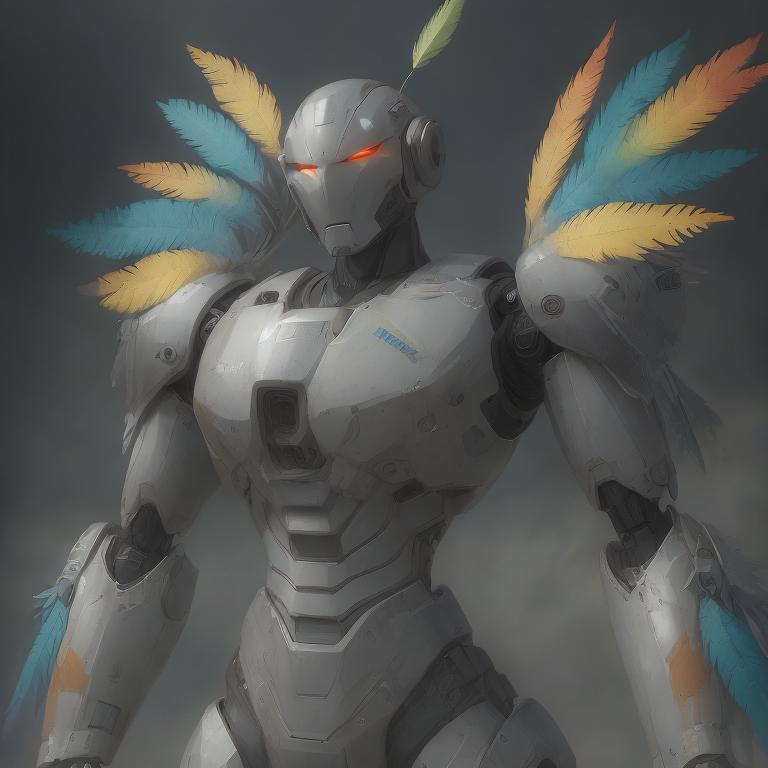}
\includegraphics[width=3cm]{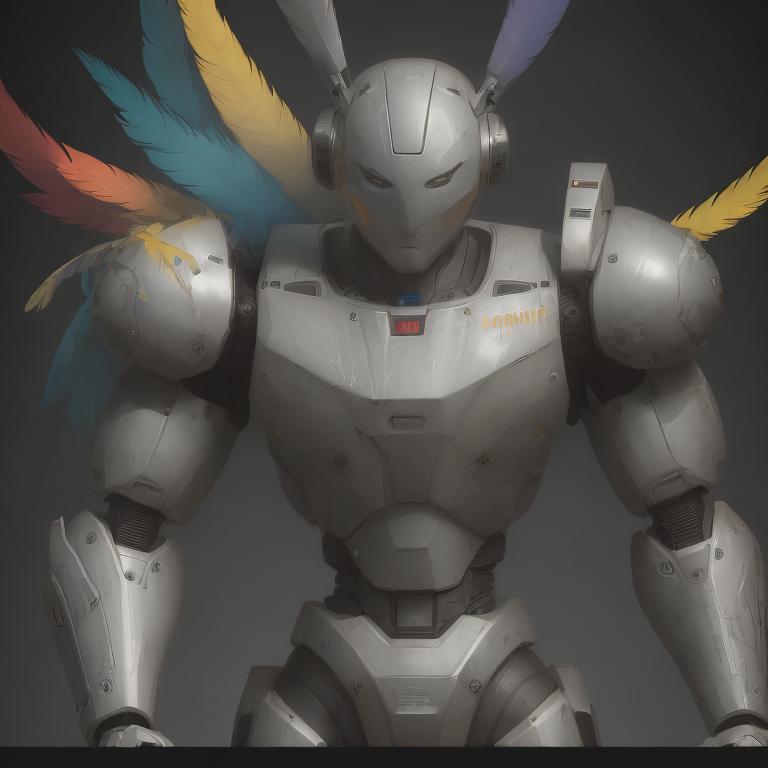} 
\includegraphics[width=3cm]{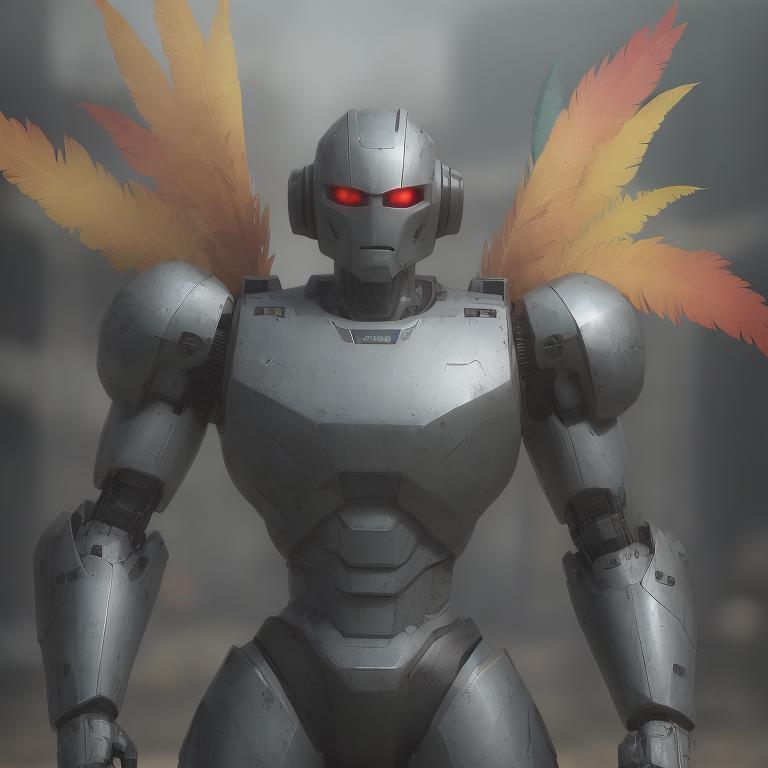}
\includegraphics[width=3cm]{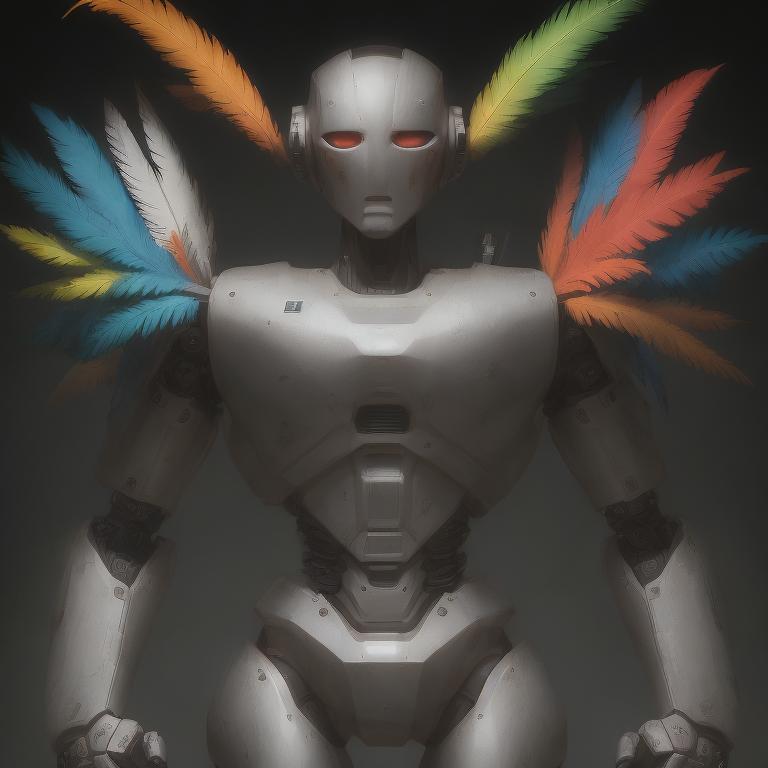}

\end{minipage}
\begin{minipage}[c]{0.49\textwidth}
\centering
\includegraphics[width=3cm]{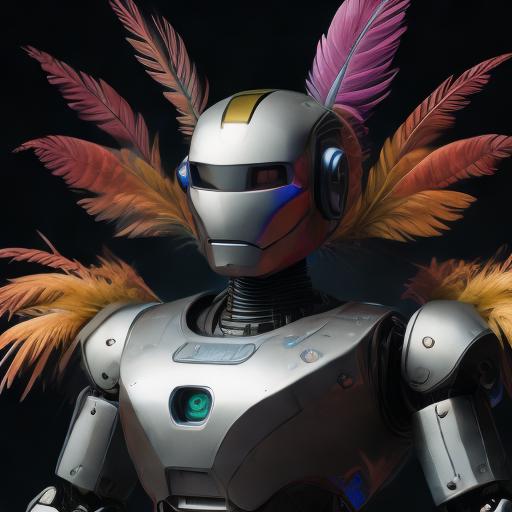}
\includegraphics[width=3cm]{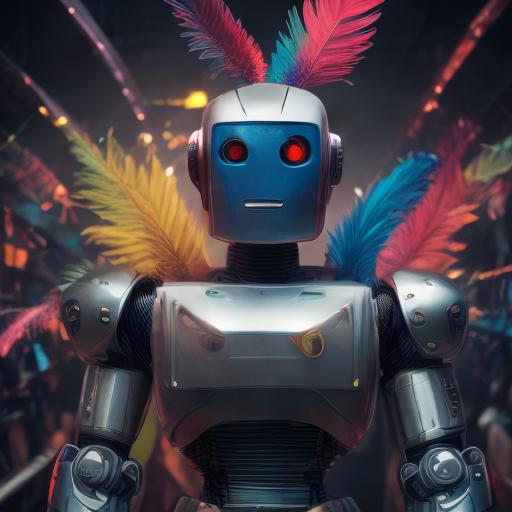} 
\includegraphics[width=3cm]{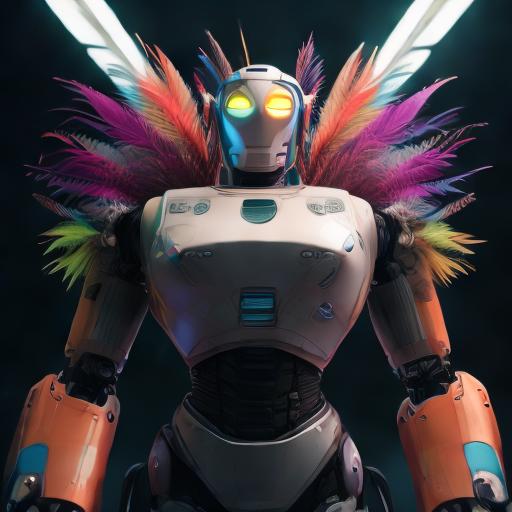}
\includegraphics[width=3cm]{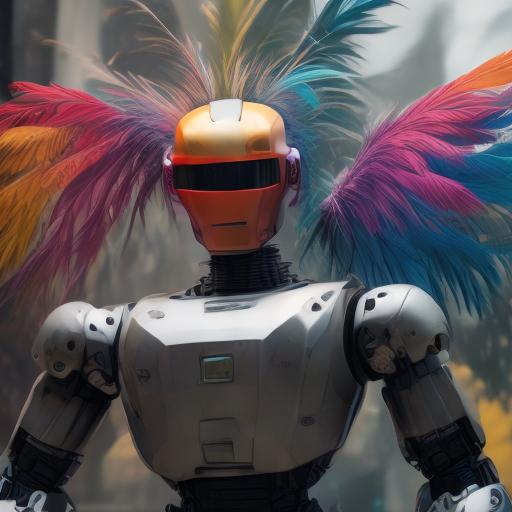}
\end{minipage}
 \begin{center}
\footnotesize \qquad LCM (8-steps)\qquad\qquad\qquad\qquad\qquad\qquad\qquad\qquad\qquad\qquad\qquad\qquad\qquad\qquad SCott (2 steps)
\end{center} 
\caption{Prompt: A cinematic shot of robot with colorful feathers.}
\label{fig: robot}
\end{figure}

\begin{figure}[htbp]
\centering
\begin{minipage}[t]{0.49\textwidth}
\centering
\includegraphics[width=3cm]{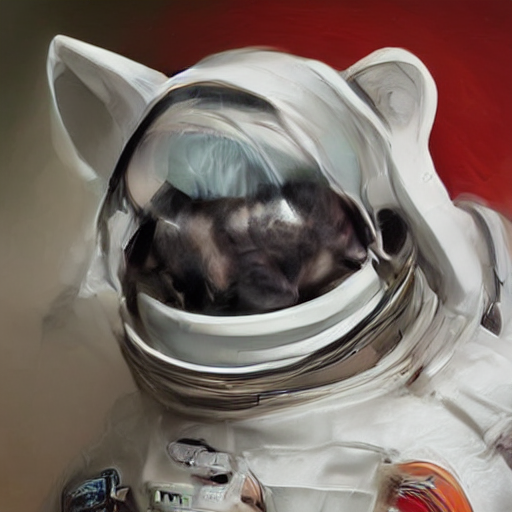}
\includegraphics[width=3cm]{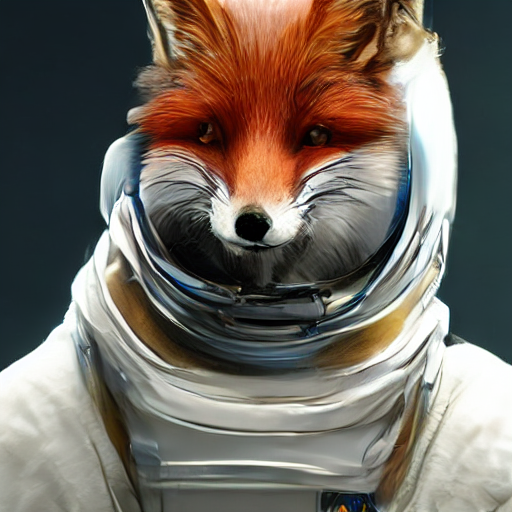}
\includegraphics[width=3cm]{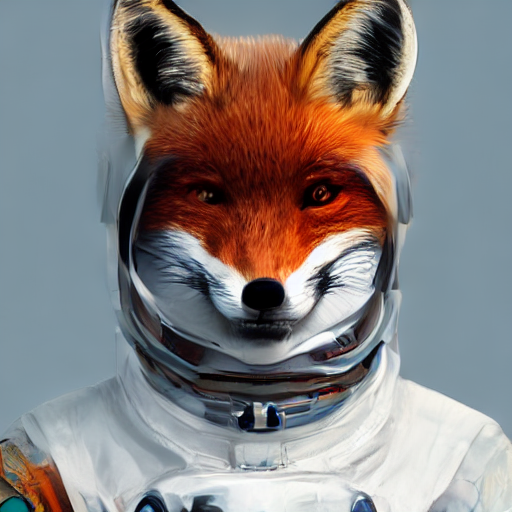}
\includegraphics[width=3cm]{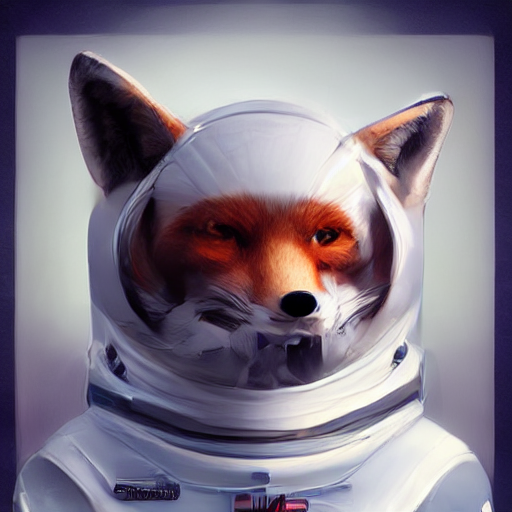}
\end{minipage}
\begin{minipage}[t]{0.49\textwidth}
\centering
\includegraphics[width=3cm]{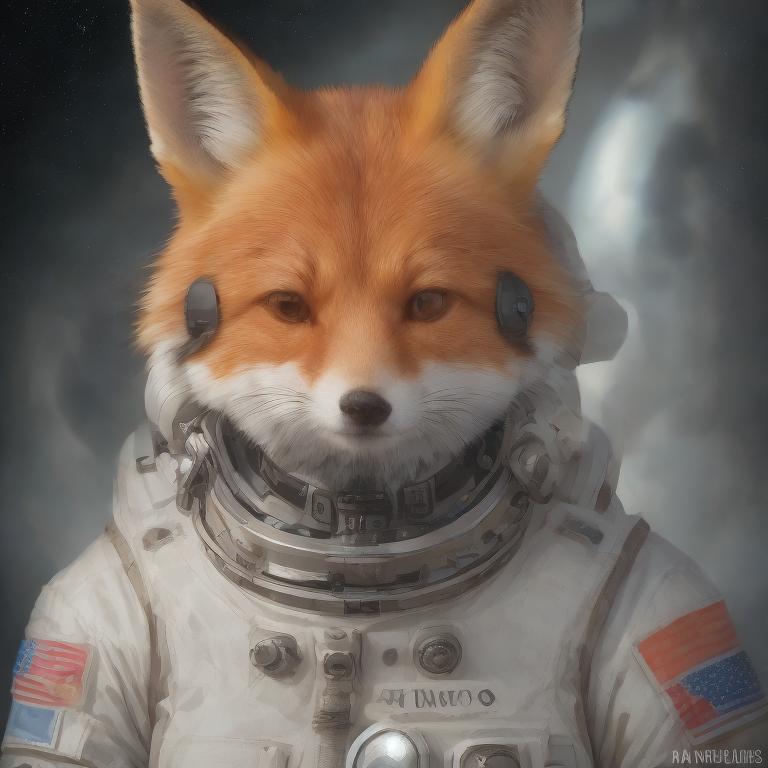}
\includegraphics[width=3cm]{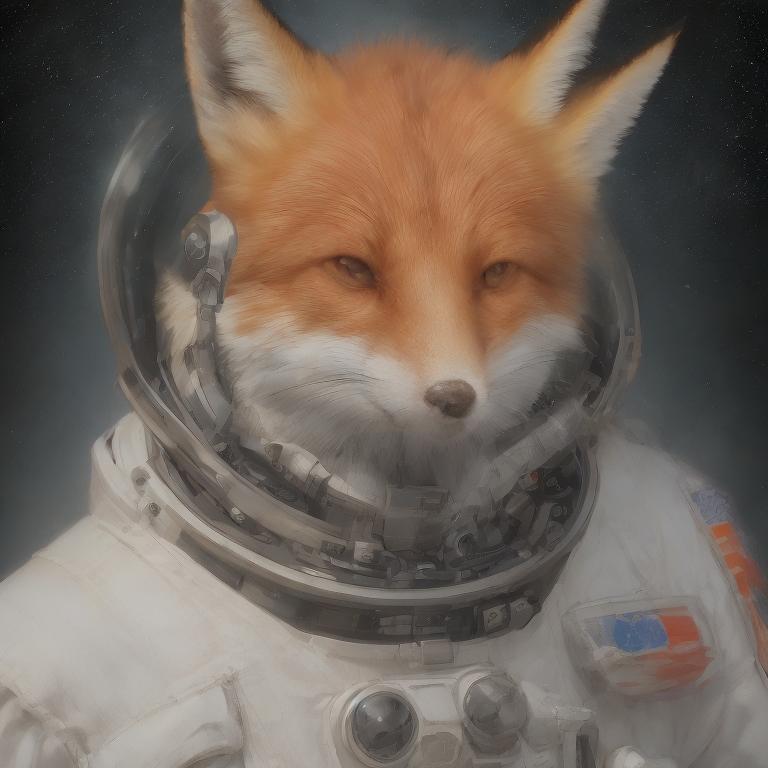}
\includegraphics[width=3cm]{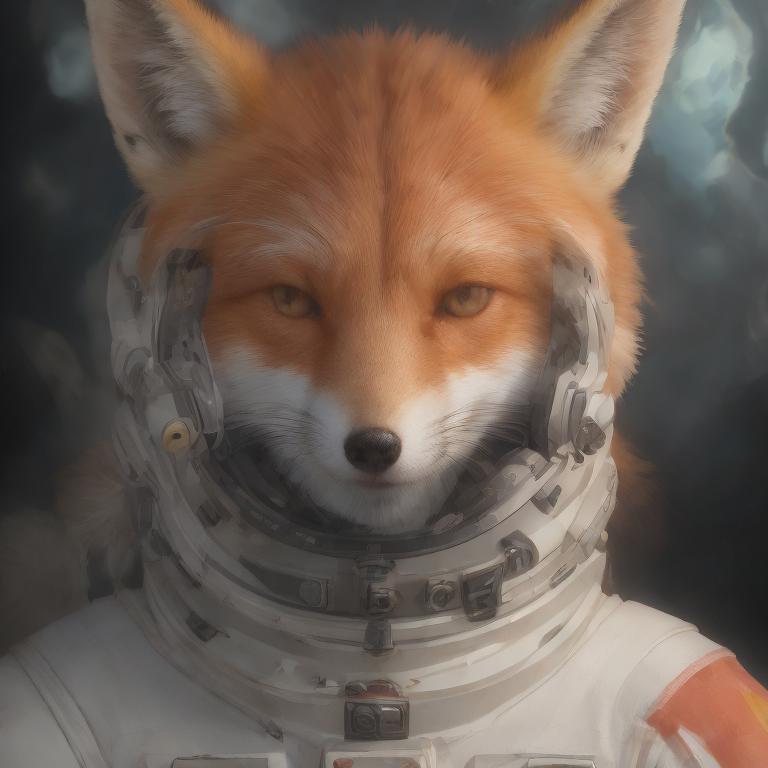}
\includegraphics[width=3cm]{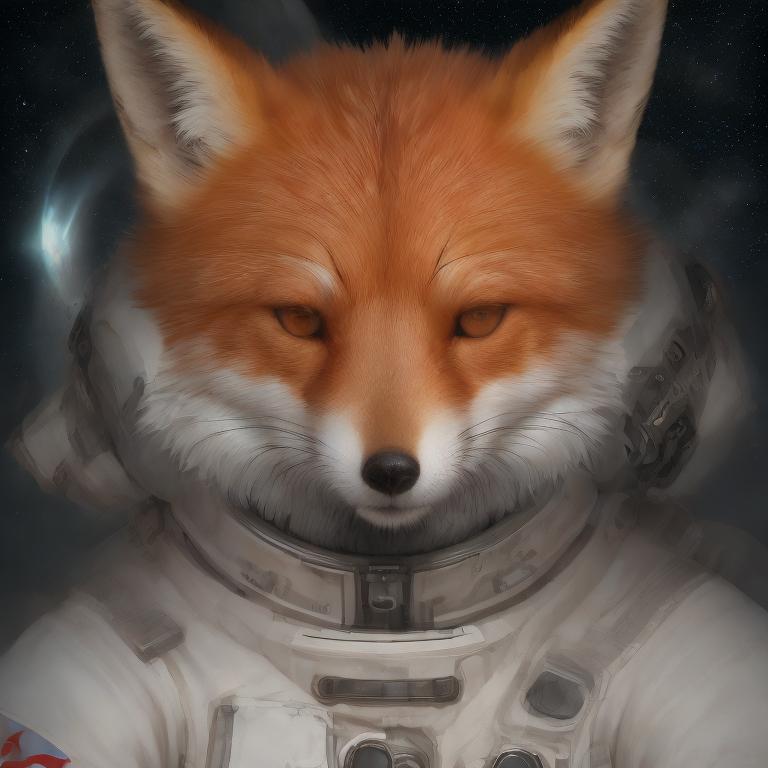}
\end{minipage}
 \begin{center}
\footnotesize \qquad InstaFlow (1 step)\qquad\qquad\qquad\qquad\qquad\qquad\qquad\qquad\qquad\qquad\qquad\qquad\qquad\qquad LCM (4 steps)
\end{center} 
\centering
\flushleft 
\begin{minipage}[c]{0.49\textwidth}
\centering
\includegraphics[width=3cm]{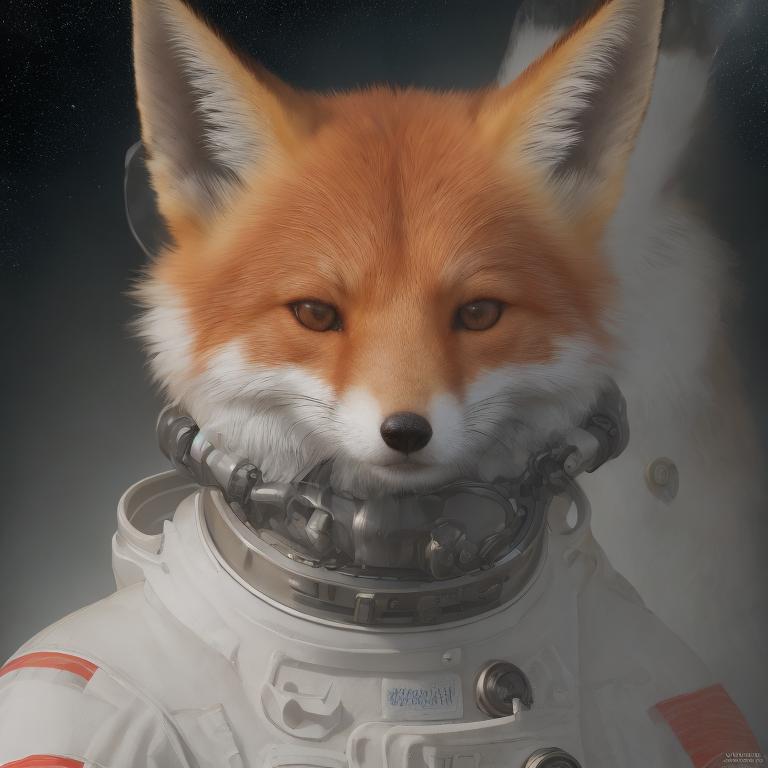}
\includegraphics[width=3cm]{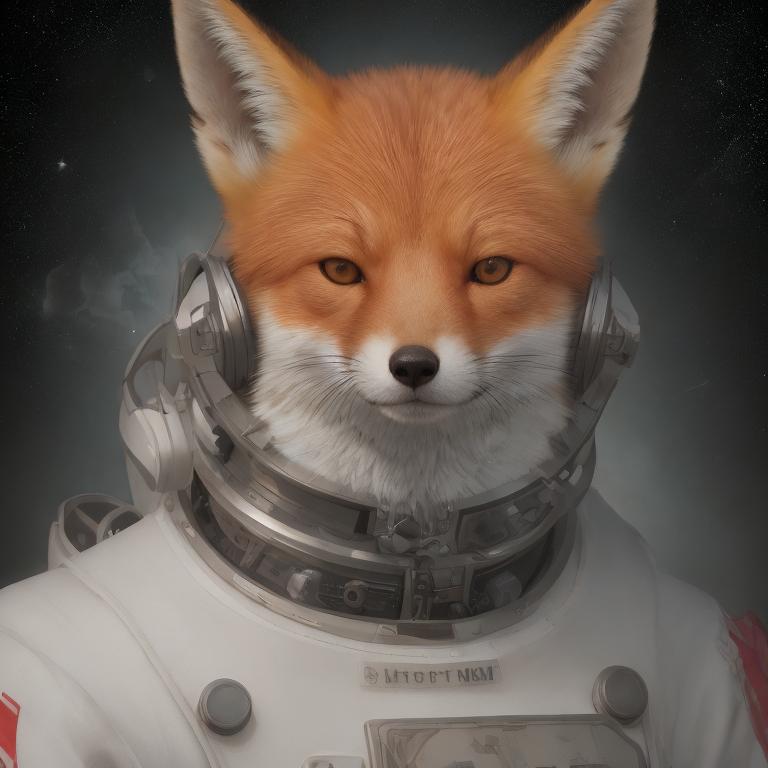} 
\includegraphics[width=3cm]{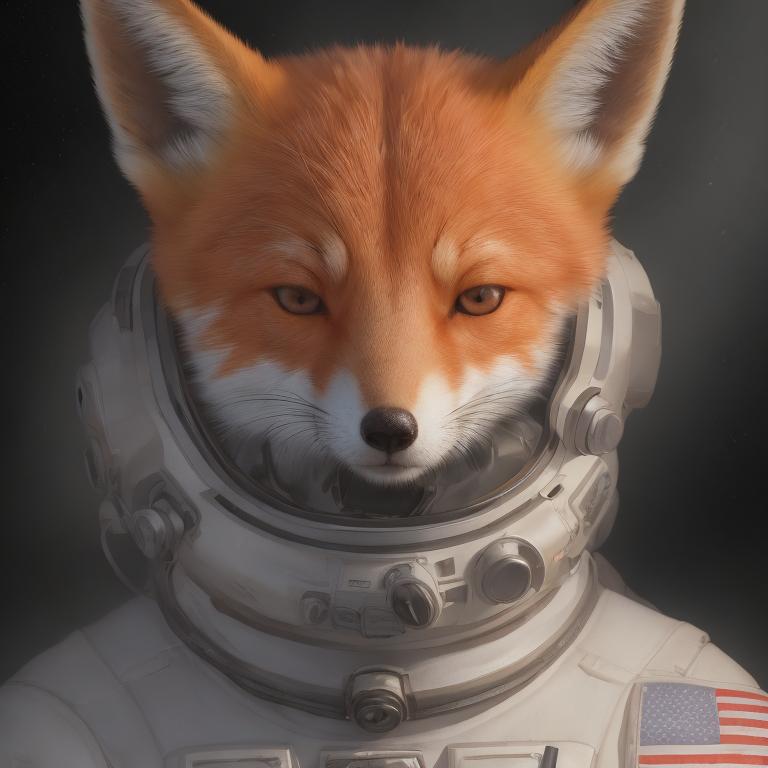}
\includegraphics[width=3cm]{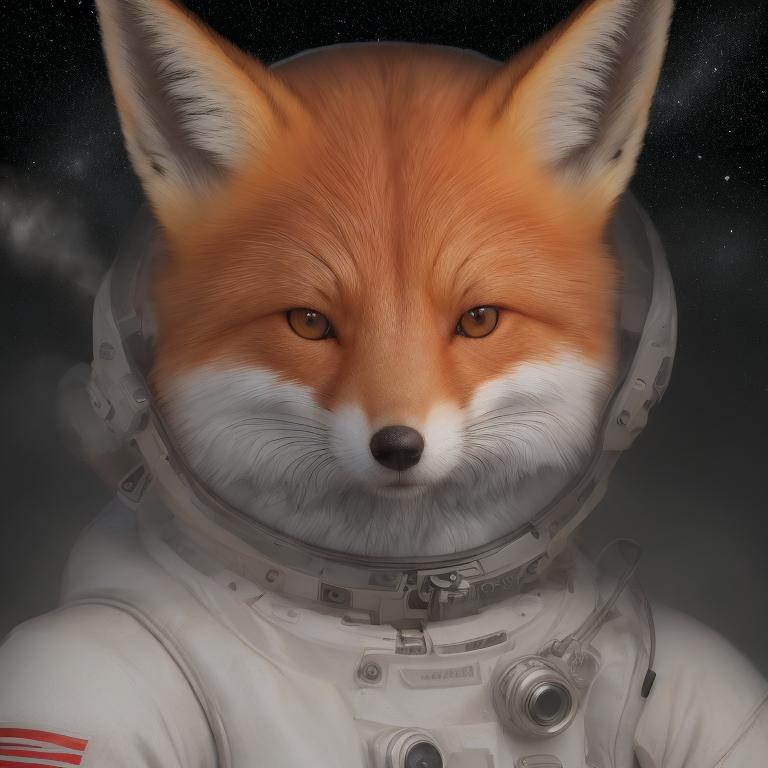}

\end{minipage}
\begin{minipage}[c]{0.49\textwidth}
\centering
\includegraphics[width=3cm]{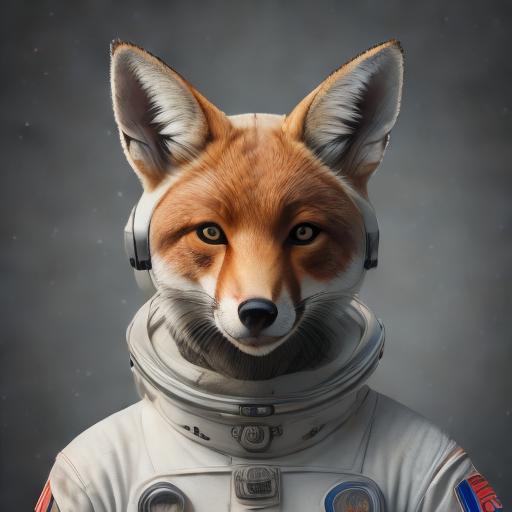}
\includegraphics[width=3cm]{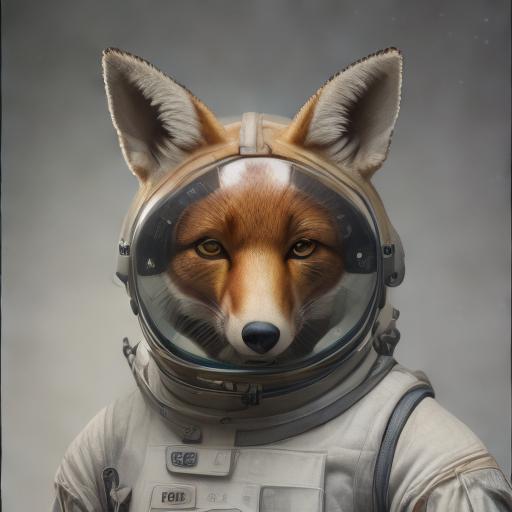} 
\includegraphics[width=3cm]{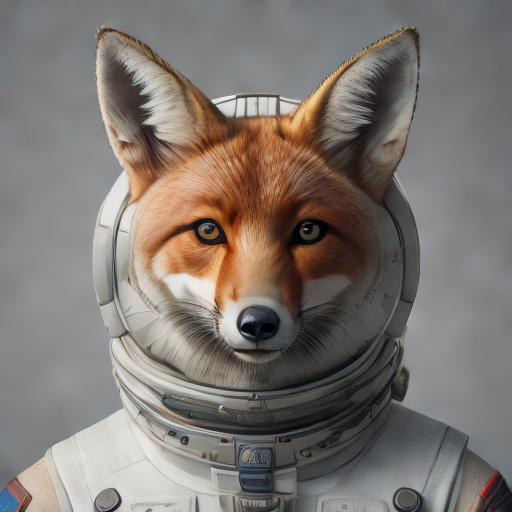}
\includegraphics[width=3cm]{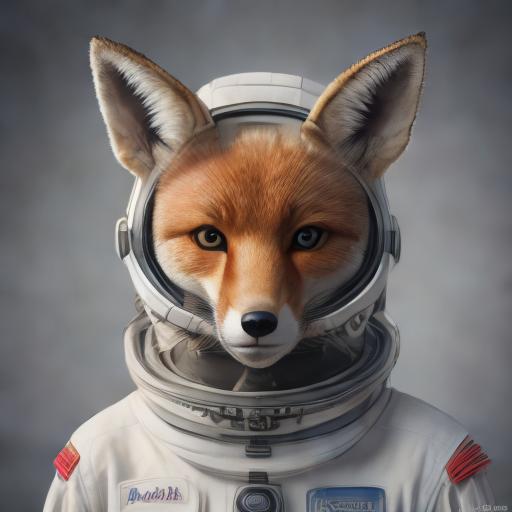}
\end{minipage}
 \begin{center}
\footnotesize \qquad LCM (8-steps)\qquad\qquad\qquad\qquad\qquad\qquad\qquad\qquad\qquad\qquad\qquad\qquad\qquad\qquad SCott (2 steps)
\end{center} 
\caption{Prompt: Hyperrealistic photo of a fox astronaut, perfect face, artstation.}
\label{fig: fox}
\end{figure}

\begin{figure}[htbp]
\centering
\begin{minipage}[t]{0.49\textwidth}
\centering
\includegraphics[width=3cm]{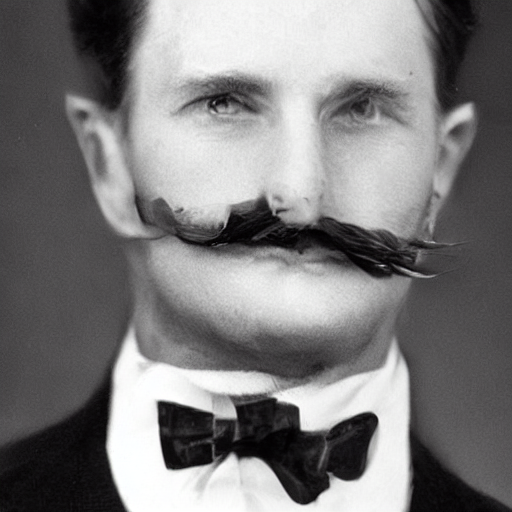}
\includegraphics[width=3cm]{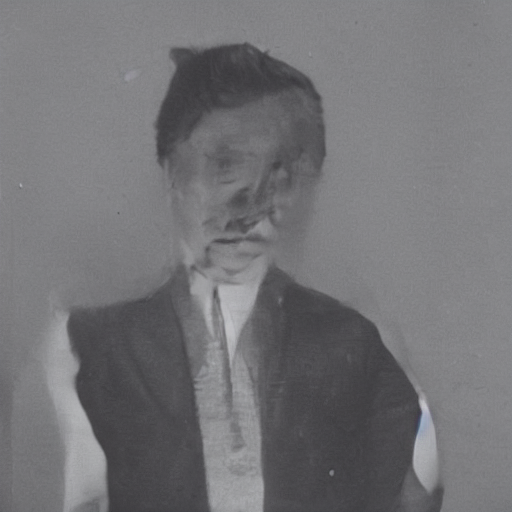}
\includegraphics[width=3cm]{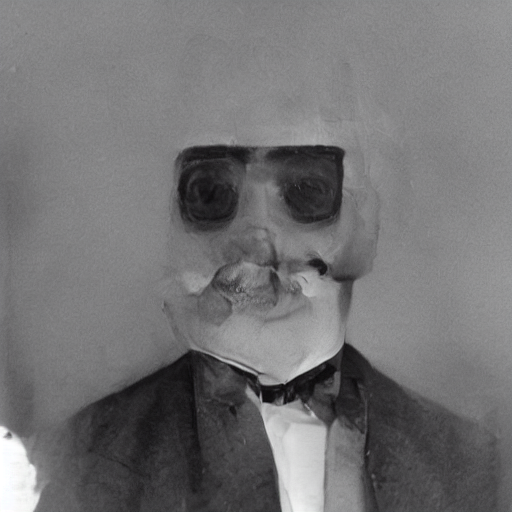}
\includegraphics[width=3cm]{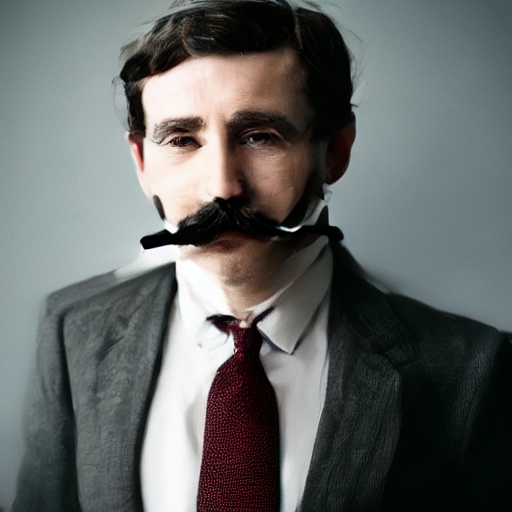}
\end{minipage}
\begin{minipage}[t]{0.49\textwidth}
\centering
\includegraphics[width=3cm]{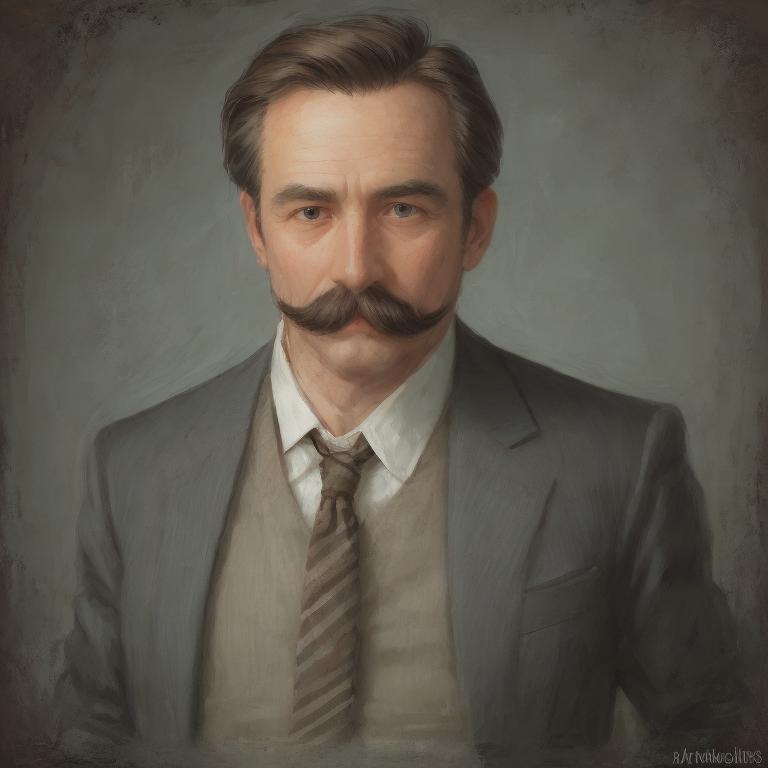}
\includegraphics[width=3cm]{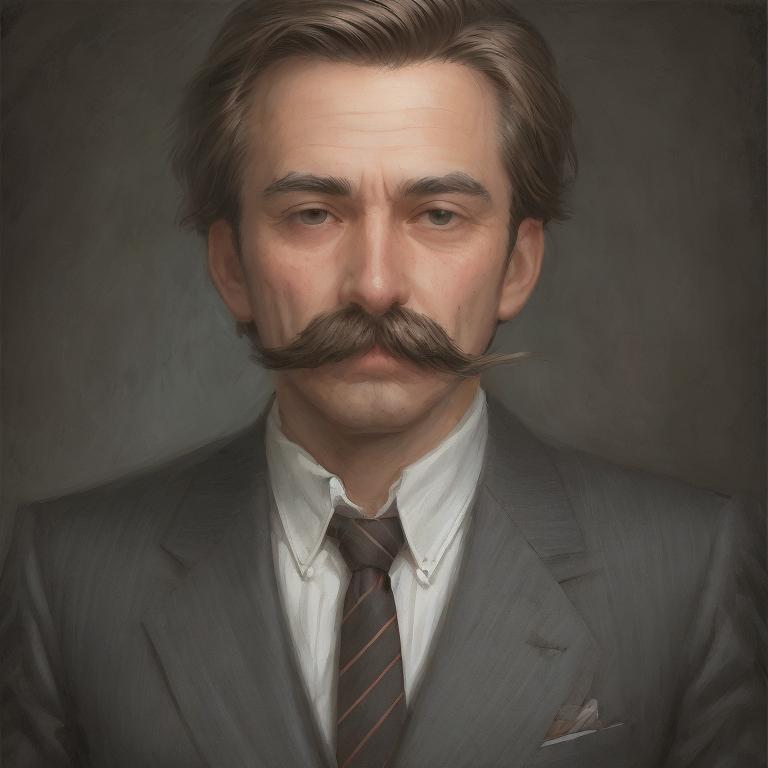}
\includegraphics[width=3cm]{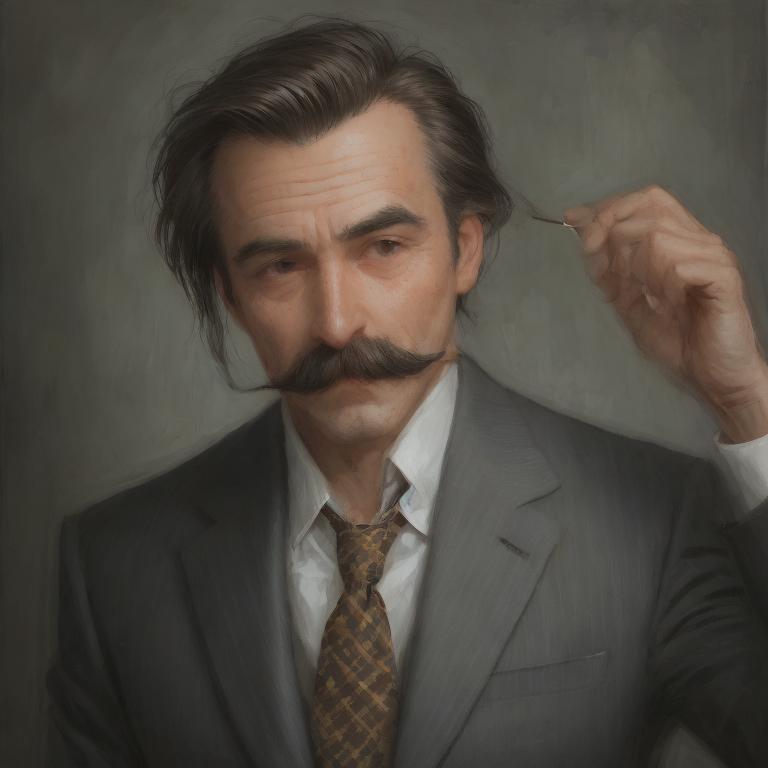}
\includegraphics[width=3cm]{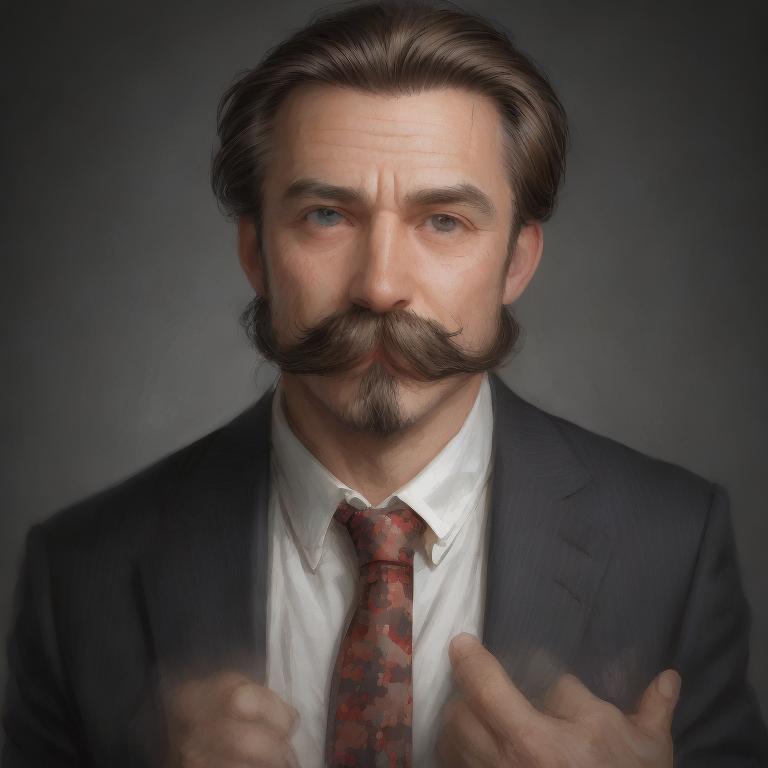}
\end{minipage}
 \begin{center}
\footnotesize \qquad InstaFlow (1 step)\qquad\qquad\qquad\qquad\qquad\qquad\qquad\qquad\qquad\qquad\qquad\qquad\qquad\qquad LCM (4 steps)
\end{center} 
\centering
\flushleft 
\begin{minipage}[c]{0.49\textwidth}
\centering
\includegraphics[width=3cm]{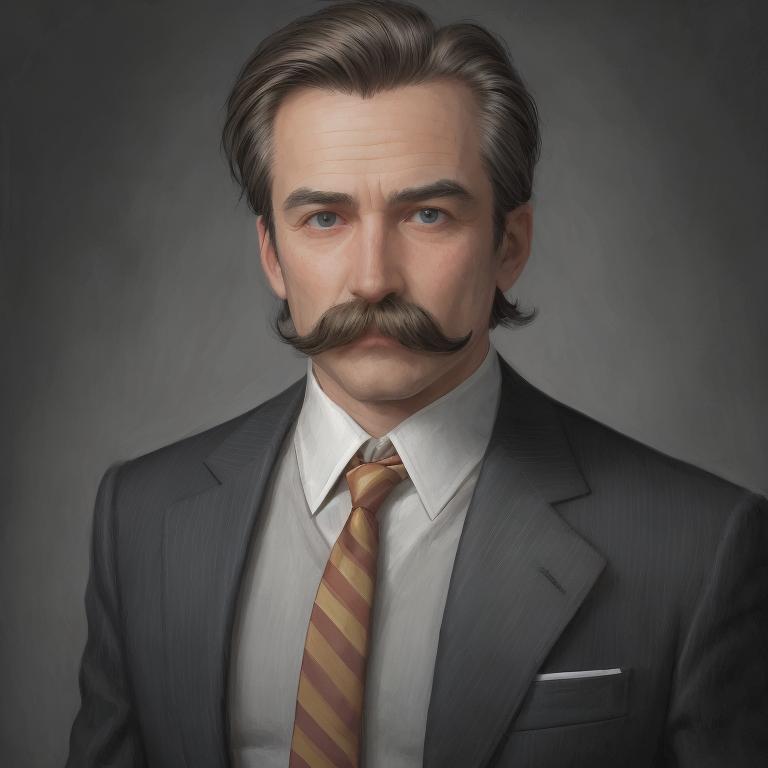}
\includegraphics[width=3cm]{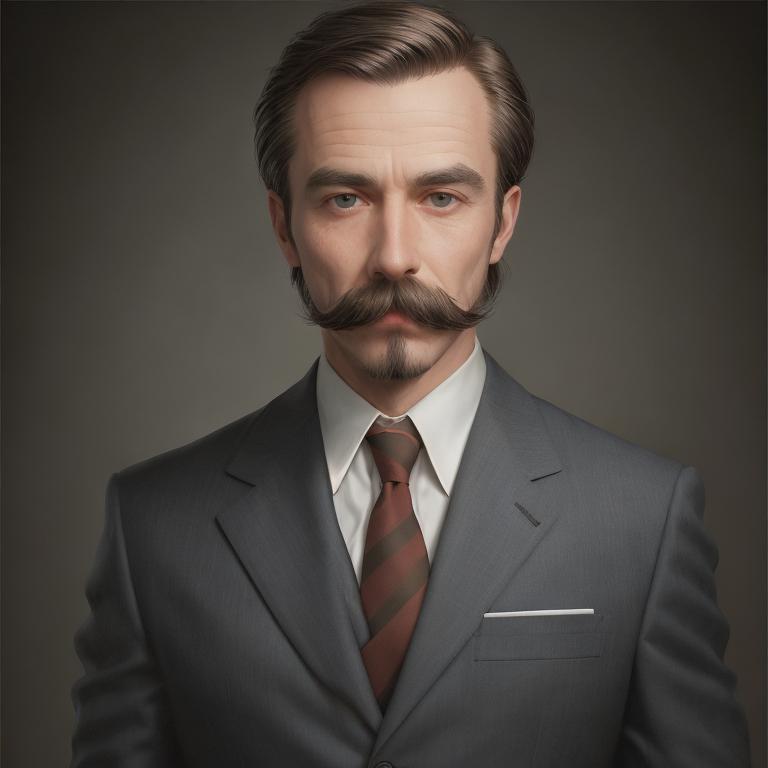} 
\includegraphics[width=3cm]{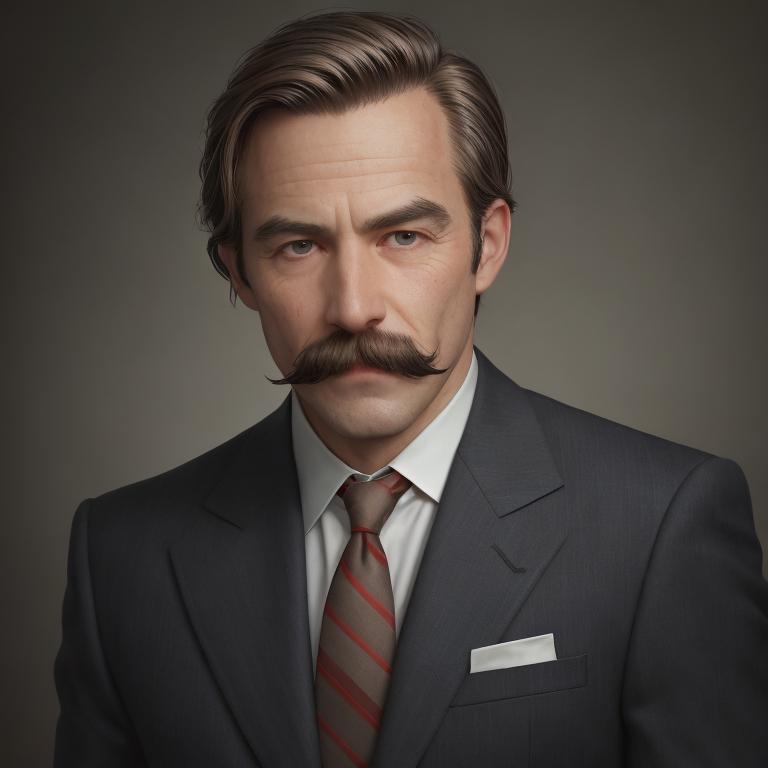}
\includegraphics[width=3cm]{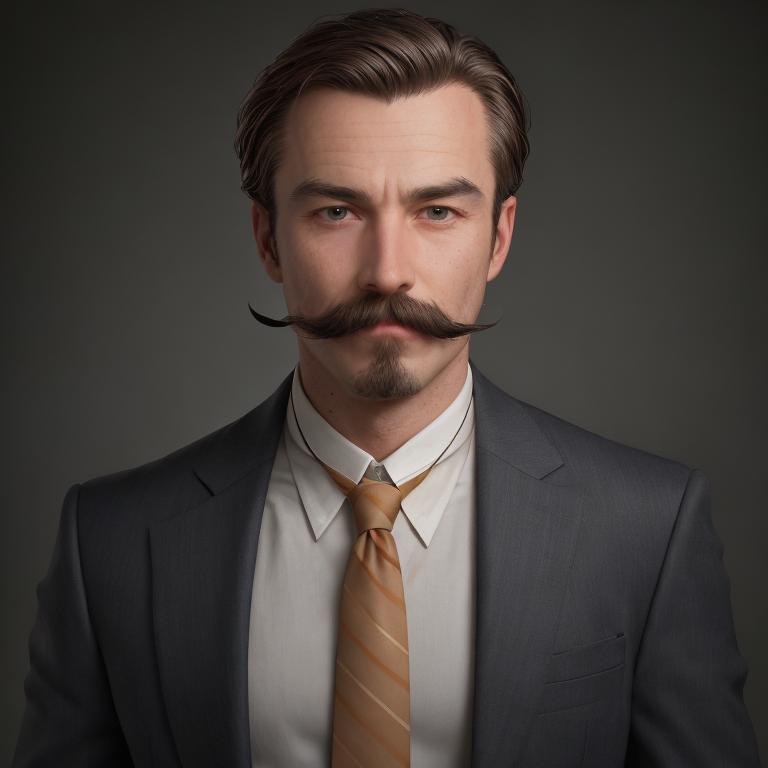}

\end{minipage}
\begin{minipage}[c]{0.49\textwidth}
\centering
\includegraphics[width=3cm]{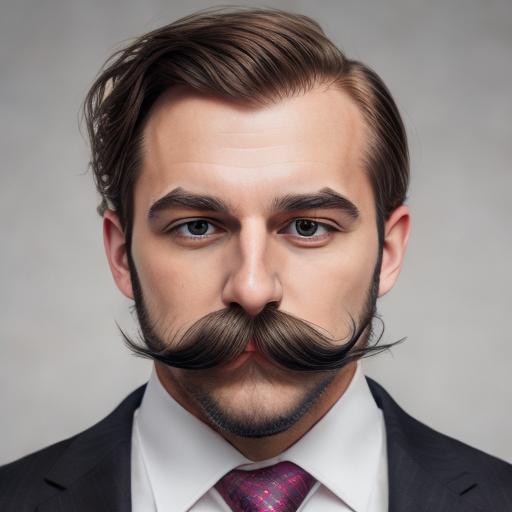}
\includegraphics[width=3cm]{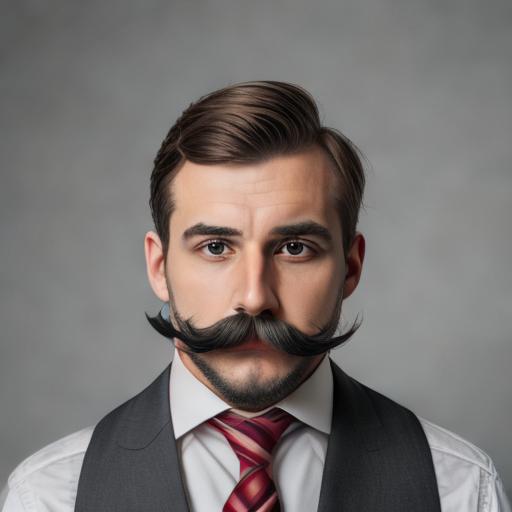} 
\includegraphics[width=3cm]{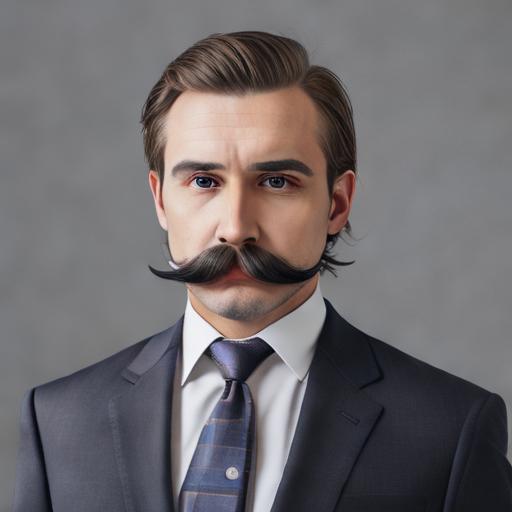}
\includegraphics[width=3cm]{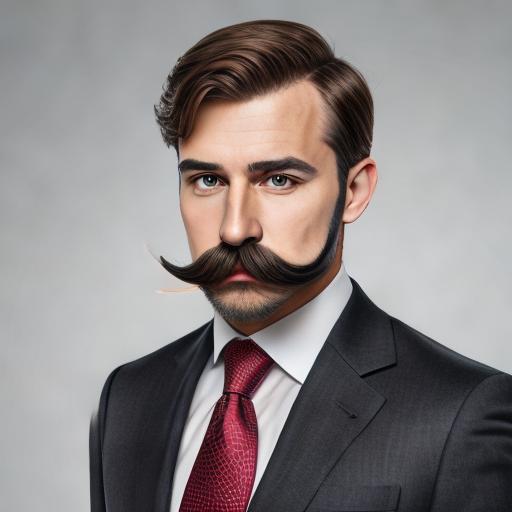}
\end{minipage}
 \begin{center}
\footnotesize \qquad LCM (8-steps)\qquad\qquad\qquad\qquad\qquad\qquad\qquad\qquad\qquad\qquad\qquad\qquad\qquad\qquad SCott (2 steps)
\end{center} 
\caption{Prompt: A man in a tie and a fake moustache.}
\label{fig: man}
\end{figure}

\end{document}